\PassOptionsToPackage{table,dvipsnames}{xcolor}
\documentclass{wanderlab}

\usepackage{amsmath,amsfonts,bm}

\usepackage[dvipsnames]{xcolor}
\usepackage[textsize=scriptsize]{todonotes}

\newif\ifshowcomments
\showcommentstrue

\ifshowcomments
  \newcommand{\alperen}[2][]{%
    \todo[
      color=blue!15,
      linecolor=blue!70!black,
      bordercolor=blue!70!black,
      #1
    ]{\textbf{Alperen:} #2}%
  }

  \newcommand{\alpereninline}[2][]{%
    \todo[
      inline,
      color=blue!10,
      linecolor=blue!70!black,
      bordercolor=blue!70!black,
      #1
    ]{\textbf{Alperen:} #2}%
  }
\else
  \newcommand{\alperen}[2][]{}
  \newcommand{\alpereninline}[2][]{}
\fi

\def\eqref#1{equation~\ref{#1}}

\def\1{\bm{1}}

\DeclareMathAlphabet{\mathsfit}{\encodingdefault}{\sfdefault}{m}{sl}
\SetMathAlphabet{\mathsfit}{bold}{\encodingdefault}{\sfdefault}{bx}{n}

\renewcommand*{\backref}[1]{}
\renewcommand*{\backrefalt}[4]{\ifcase#1\relax\or(page #2)\else(pages #2)\fi}

\usepackage{mathtools}

\AddToHook{cmd/appendix/before}{%
    \crefalias{section}{appendix}%
    \crefalias{subsection}{appendix}
}
\renewcommand{\eqref}[1]{(\ref{#1})}
\usepackage{url}
\usepackage{wrapfig}
\usepackage{array}
\usepackage{comment}
\usepackage{tabularx,makecell}

\definecolor{OursBg}{HTML}{EDE7F6}
\definecolor{LinkBlue}{HTML}{1A0DAB}

\newcommand{\cmark}{\textcolor{green!50!black}{\ding{51}}}
\newcommand{\xmark}{\textcolor{red!60!black}{\ding{55}}}

\usepackage{colortbl}

\title{To Learn is to Wander: Learning Across Graphs and\\[-0.25em] Tasks with Random Walks}

\renewcommand{\authorlist}{%
  {
    \textbf{Louis Tichelman}\textsuperscript{1,2},\;\;
    \textbf{Xingyue Huang}\textsuperscript{3},\;\;
    \textbf{Jinwoo Kim}\textsuperscript{4},\;\;\textbf{\.{I}smail \.{I}lkan Ceylan}\textsuperscript{1,2,3}
  }
}

\renewcommand{\affiliationlist}{%
  {\small\textsuperscript{1}TU Wien\quad \textsuperscript{2}AITHYRA\quad \textsuperscript{3}University of Oxford\quad \textsuperscript{4}KAIST}
}

\abstract{Graph foundation models aim to transfer across graphs, feature spaces, relational schemas, and prediction tasks, yet existing approaches typically generalize only within particular graph modalities or tasks. We propose \textsc{Wander}, a graph foundation model designed to operate across these settings within \emph{a single pretrained checkpoint}. Following the prior-predictive perspective, we formulate graph learning as completion of a partially observed graph. We realize this task-general view through a common interface based on random walks, allowing the same model to operate across homogeneous and multi-relational graphs with varying features, labels, and relational schemas. \textsc{Wander} can increase its structural context at inference time without changing its learned parameters and, under suitable assumptions, universally approximates the corresponding Bayes-optimal predictor on bounded connected graphs. Empirically, a single pretrained checkpoint achieves state-of-the-art or highly competitive results across node classification, homogeneous link prediction, and knowledge-graph link prediction. 
Moreover, joint pretraining across graph modalities and tasks preserves performance in specialized settings while enabling positive transfer and the composition of separately learned capabilities at inference time.%
\\[4pt]
\textbf{Code:  \url{https://github.com/louistichelman/wander-gfm}}}

\begin{document}

\maketitle

\section{Introduction}
Graph-structured data arises across a wide range of domains, from molecules and citation networks to social systems and knowledge graphs~\citep{pei2020geom, liben2003link, huang2023theory}. These domains differ in the information associated with their graphs and in the prediction problems of interest, and graph-learning models have traditionally been trained end-to-end for a specific task setting. Graph foundation models aim to make learning on such data more general: rather than training a separate model for every graph and task, the goal is to learn models that transfer across graphs, feature spaces, relation types, and prediction problems. 

\begin{wraptable}{r}{0.6\columnwidth}
\vspace{-12pt}
\caption{Existing GFMs generalize across different subsets of graph modalities and prediction tasks. \textsc{Wander} uses a single model across both node- and edge-level tasks on homogeneous and multi-relational graphs.}
\centering
\scriptsize\label{tab:comparison_other_models}
\begin{tabular}{ccccc}
        \toprule
        \multirow{3}{*}{\textbf{\shortstack{Graph Foundation\\[-1pt]Model Families}}}
        & \multicolumn{2}{c}{\textbf{Input Graphs}}
        & \multicolumn{2}{c}{\textbf{Prediction Task}} \\
                & \shortstack{node\\[-2pt]features}
        & \shortstack{edge\\[-2pt]types}
        & \shortstack{node\\[-2pt]property}
        & \shortstack{link}\\
        \midrule

        \hyperlink{cite.ultra}{\textsc{Ultra}} (2024),
        \hyperlink{cite.kim2025flock}{\textsc{Flock}} (2025)
        & \xmark & \textcolor{green!50!black}{\cmark}
        & \xmark & \textcolor{green!80!black}{\cmark} \\

        \hyperlink{cite.xia2024opengraph}{\textsc{OpenGraph}},
        \hyperlink{cite.xia2024anygraph}{\textsc{AnyGraph}} (2024)
        & \textcolor{green!80!black}{\cmark} & \textcolor{red!00!black}{\xmark}
        & \textcolor{green!80!black}{\cmark}
        & \textcolor{green!80!black}{\cmark} \\

        \hyperlink{cite.finkelshtein2026equivariance}{\textsc{TS-Net}} (2026),
        \hyperlink{cite.eremeev2025graphpfn}{\textsc{GraphPFN}} (2025)
        & \textcolor{green!80!black}{\cmark} & \xmark
        & \textcolor{green!80!black}{\cmark} & \xmark \\
        
        \hyperlink{cite.dong2024universal}{\textsc{UniLP}} (2024),
        \hyperlink{cite.liao2026tfmlinker}{\textsc{TFMLinker}} (2026)
        & \xmark & \xmark
        & \xmark &  \textcolor{green!80!black}{\cmark} \\
        
        \midrule
        
        \textbf{\textsc{Wander}} (ours)
        & \textcolor{green!80!black}{\cmark}
        & \textcolor{green!80!black}{\cmark}
        & \textcolor{green!80!black}{\cmark}
        & \textcolor{green!80!black}{\cmark} \\

        \bottomrule
    \end{tabular}
\vspace{-8pt}
\end{wraptable}
Existing approaches, however, only partially realize this goal and the emerging landscape of GFMs remains fragmented, as summarized in \Cref{tab:comparison_other_models}: 
some generalize node prediction across attributed homogeneous graphs with arbitrary feature and label spaces \citep{zhao2024graphany,xia2024anygraph,finkelshtein2026equivariance,hayler2025graph,choi2026learning,eremeev2025graphpfn}, some others generalize link prediction across knowledge graphs with arbitrary relational vocabularies \citep{ultra, trix,motif,kim2025flock}, and some focus on link prediction on homogeneous graphs \citep{dong2024universal,liao2026tfmlinker}.
What is still missing is a unified formulation that spans these different graph-learning settings. This leaves two closely related questions: \emph{what predictive function should such a general graph foundation model learn?} \emph{How can a single architecture realize this function across previously unseen graphs, feature spaces, relations, and tasks?}

This work makes three contributions toward answering these questions:

\vspace{5pt}
\textbf{(1) A task-general predictive objective.} We answer the first question through a probabilistic view of graph learning. Following the prior-predictive perspective of \textsc{TabPFN} \citep{hollmann2022tabpfn}, we view an observed graph as a partially specified realization of an underlying complete graph drawn from a graph distribution. Node- and edge-level prediction problems can then be expressed as completion tasks on this partially observed graph: completing node labels corresponds to node classification, while completing edges corresponds to link prediction. More generally, each task defines a query about an unobserved component of the graph, with the Bayes-optimal prediction given by the posterior expectation of the query-dependent target conditioned on the observation. This gives one predictive function across graph-learning tasks.

\vspace{5pt}
\textbf{(2) A common computational architecture.} Realizing this task-general predictor requires a common interface that can operate across varying graph structures, feature spaces, relations, and label spaces. We propose \textsc{Wander}, which uses random walks as this interface. Random walks have a long history in graph learning \citep{PerozziAS14,node2vec,tonshoff_crawl, walklm,kim2025revisiting}, and have recently re-emerged as a computational primitive in knowledge graph foundation models \citep{kim2025flock}. \textsc{Wander} extends this interface across graph settings and prediction tasks by representing features, labels, and structure through separate
channels and processing their information along sampled walks, enabling a single
architecture to operate on homogeneous and multi-relational graphs, accommodate
varying feature and label spaces, and answer both node- and edge-level queries.
It has three properties particularly useful for a foundation model. 
\begin{itemize}[leftmargin=3em, labelsep=0.3em,topsep=0pt, partopsep=0pt,itemsep=0.2em]
    \item Unlike message-passing GFMs~\citep{finkelshtein2026equivariance,eremeev2025graphpfn}, whose structural context is fixed by network depth, \textsc{Wander} can flexibly adjust its structural context at inference time by varying its random-walk sampling strategy, without modifying its learned parameters.
    \item Random-walks can capture structural properties beyond standard message passing architectures. We formalize this by showing that, under suitable capacity assumptions, \textsc{Wander} is a universal approximator of the Bayes-optimal graph predictor on bounded connected graphs (Theorem~\ref{thm:two-layer-universality}).
    \item \textsc{Wander} is permutation equivariant in distribution, allowing stochastic symmetry breaking while respecting graph symmetries at the distributional level to facilitate generalization (Proposition~\ref{prop:permutation-equivariance}).
\end{itemize}

\vspace{5pt}
\textbf{(3) A generalist model with compositional and adaptable capabilities.} Empirically, we evaluate a \emph{single} pretrained \textsc{Wander} checkpoint across node classification, homogeneous link prediction, and knowledge-graph link prediction, obtaining state-of-the-art or highly competitive results across all three settings. 
Beyond this breadth, joint pretraining yields \textit{positive transfer} across graph-learning settings. Further to this, \textsc{Wander} can \emph{combine separately learned capabilities} to solve new task settings.
Finally, its random-walk interface supports \textit{adaptation} without parameter updates: expanding or targeting the sampled context improves long-range reasoning on synthetic and real-world graphs. Together, these findings show benefits of unification beyond task coverage alone.

\section{Related work}
\label{sec::related work}

\paragraph{Graph foundation models for node property prediction.}
Node property prediction has traditionally relied on message-passing neural networks such as GCN~\citep{kipf2016semi}, GAT~\citep{gat}, and GIN~\citep{xu2018how}, which are typically trained end-to-end on a single graph. Early graph foundation models aim for architectures that transfer across feature and label spaces. \text{GraphAny}~\citep{zhao2024graphany} combines predictions from multiple linear graph filters through learned attention that transfer across graphs. TS-GNN~\citep{finkelshtein2026equivariance} formalizes the symmetry requirements of node-level graph foundation models under node, feature, and label permutations, and proposes a corresponding symmetry-preserving architecture. Another line of work builds on \textit{tabular} foundation models~\citep{hollmann2022tabpfn,tabpfnv2,qu2025tabicl}, which perform \emph{in-context learning} from labeled rows at inference time. G2T-FM~\citep{eremeev2025turning} and TAG~\citep{hayler2025graph} encode graphs as node-level tables for in-context prediction with pretrained tabular foundation models. \text{NodePFN}~\citep{choi2026learning} extends \text{TabPFN}~\citep{hollmann2022tabpfn} to graphs through pretraining on synthetic graphs, retaining a single representation per node and a fixed maximum feature dimension. \text{GraphPFN}~\citep{eremeev2025graphpfn} builds on LimiX~\citep{zhang2025limix} with graph-aware aggregation between feature-tokenized blocks, more naturally accommodating varying feature dimensions across graphs. \textsc{Wander} instead uses random walks as its structural interface for in-context learning, accommodating varying feature spaces and edge types.

\paragraph{Graph foundation models for link prediction.}
Classical approaches for link prediction over \emph{attributed graphs}, where nodes carry features and edges are typically of a single type, range from GNN-based encoders~\citep{kipf2016semi} to link-specific architectures such as SEAL~\citep{zhang2018link} and BUDDY~\citep{chamberlain2022graph}. More recent foundation models aim to replace graph-specific training with transferable predictors that generalize across unseen graphs, including \text{Prodigy}~\citep{huang2023prodigy}, \text{OFA}~\citep{liu2024one}, \text{UniLP}~\citep{dong2024universal}, \text{OpenGraph}~\citep{xia2024opengraph}, and \text{AnyGraph}~\citep{xia2024anygraph}. TFMLinker~\citep{liao2026tfmlinker} instead tabularizes link-centric graph information for in-context prediction with a pretrained tabular foundation model. In parallel, \emph{knowledge graphs} link prediction has developed a separate family of architectures, from R-GCN~\citep{schlichtkrull2017modeling} and CompGCN~\citep{vashishth2020compositionbased} to query-conditioned models such as NBFNet~\citep{zhu2022neural} and C-MPNNs~\citep{huang2023theory}. Knowledge graph foundation models (KGFMs) further extend this setting to fully inductive transfer, where both entities and relation types can be unseen at test time. \textsc{Ultra}~\citep{ultra}, \textsc{Motif}~\citep{motif}, and \textsc{Trix}~\citep{trix} achieve this primarily through relation- and entity-level message passing, while \textsc{Flock}~\citep{kim2025flock} instead uses random walks as the interface for structural reasoning. 
These models provide broad transfer within either homogeneous attributed graphs or relational graphs, but not a common model across both regimes and prediction tasks. 

\paragraph{Random walks for graph learning.}
Random walks have long been used to capture graph structure, with DeepWalk~\citep{PerozziAS14} and node2vec~\citep{node2vec} learning node embeddings from sampled walk sequences. Later methods use walks more directly as graph representations: CRaWL~\citep{tonshoff_crawl} processes collections of walks with neural networks, while WalkLM~\citep{walklm} incorporates node attributes into walk sequences for language-model-based processing. RWNN~\citep{kim2025revisiting} further treats anonymized random walks as a general computational interface for graph learning. \textsc{Flock}~\citep{kim2025flock} extends this paradigm to knowledge graphs for zero-shot link prediction. 
\textsc{Wander} builds on this line by using the same walk-based graph computation across graph settings and tasks, while its in-context architecture accommodates node attributes, labels, and relational edge types within a single model.

\section{Preliminaries}\label{sec::preliminary}

\paragraph{Attributed edge-typed graphs.}
An \textit{attributed edge-typed graph} is a tuple
$G=(V,E,R,X,Y)$, where $V$ is the set of nodes, $R$ is the set of
relation types, $E\subseteq V\times R\times V$ is the set of typed edges,
$X\in\{\emptyset\}\cup\mathbb{R}^{V\times d_X}$ is the matrix of
$d_X$-dimensional node features, and
$Y\in\{\emptyset\}\cup([d_Y]\cup\{\bot\})^V$ contains the node labels,
with $d_Y\in\mathbb{N}$ denoting the number of classes. For any
$k\in\mathbb{N}$, we write $[k]:=\{1,\dots,k\}$. We call a node
$v\in V$ \emph{unlabeled} if $Y_v=\bot$. Features and labels may both
be absent.
We define $\mathbb{G}_{n,m,b}$ as the set of attributed edge-typed graphs with at most $n$ nodes, $m$ relation types, and $b$ feature and label dimensions and $\mathbb{G}_{n,m,b}^{\star}\subset \mathbb{G}_{n,m,b}$ as the set of \textit{fully specified graphs} without unlabeled nodes.
In the following, we refer to attributed edge-typed graphs simply as \textit{graphs}.

\paragraph{Partial observations and graph isomorphism.}
We write $\pi=(\pi_V,\pi_R,\pi_X,\pi_Y)$ for bijective
relabelings of nodes, relation types, feature dimensions, and
class labels, respectively. We extend $\pi_Y$ by
$\pi_Y(\bot)=\bot$ and denote the relabeled graph by $\pi(G)$.
A graph $G$ is a \textit{partial observation} of $G'$ (written
$G\preceq G'$) if, up to such a relabeling, it can be obtained
from $G'$ by deleting a subset of its edges and masking a
subset of its node labels with $\bot$.
Two graphs are isomorphic (written $G\cong G'$) if
$G\preceq G'$ and $G'\preceq G$.

\paragraph{Modeling posterior graph distributions.} In this work we are interested in predicting missing labels or edges in an (incomplete) observed graph $G_{\text{obs}}\in \mathbb{G}_{n,m,b}$ conditioned on a prior distribution over graphs. Formally, let $p$ be a probability distribution over the isomorphism classes
of $\mathbb G_{n,m,b}^{\star}$. In the following, we slightly abuse notation by identifying an isomorphism class with any of its representatives and write $G$ interchangeably for a graph and its isomorphism class. We denote by \(p(\,\cdot\mid G_{\mathrm{obs}})\) the conditional
distribution over graph completions consistent with \(G_{\mathrm{obs}}\).

However, rather than explicitly representing the full posterior distribution,
we aim to model posterior expectations of query-dependent target random
variables. Formally, we define a \textit{link prediction query} $q=(h_q,r_q,?)$ consisting of a query head $h_q\in V_{\text{obs}}$ and query relation $r_q\in R_{\text{obs}}$, and a \textit{label prediction query} $q=V_q\subset V_{\text{obs}}$ consisting of a set of unlabeled nodes. For $G^{\star}\sim p(\cdot\mid G_{obs})$ we then define the \textit{query target} random variable $T_q(G^{\star})$, where 
\[
\qquad\qquad T_q(G^{\star})=
\left\{
\begin{array}{llr}
(\textbf{1}_{(h_q,r_q,t)\in E^{\star}})_{t\in V}
& \text{if } q=(h_q,r_q,?)
& \textit{Link Prediction}
\\[1mm]

((\textbf{1}_{Y_v^{\star}=c})_{c\in[d_Y]})_{v\in V_q}
& \text{if } q=V_q\quad
& \textit{Node Classification}
\end{array}
\right.
\]
The Bayes-optimal predictor for query $q$ is therefore 
\begin{align*}
    M_p^{\star}:(G_{\text{obs}}, q)\mapsto \mathbb{E}_{G^{\star}\sim p(\cdot\mid G_{\text{obs}})}[T_q(G^{\star})].
\end{align*}
Intuitively, the query specifies which missing part of the graph is to be
completed, while the same predictor returns its posterior estimate regardless
of whether that part is a node label or an edge.

\paragraph{Random walks.}
Given a graph $G$, we define a walk $\eta$ of length $S$ in $G$ as a sequence
\(
\eta=(v_0,r_1^{\alpha_1},v_1,\ldots,r_S^{\alpha_S},v_S),
\)
where $v_j\in V$, $r_j\in R$, and $(v_{j-1},r_j,v_j)\in E$ if
$\alpha_j=1$ and $(v_j,r_j,v_{j-1})\in E$ if $\alpha_j=-1$.
We denote the set of such walks of arbitrary length by $\mathcal W(G)$.
A random-walk algorithm $A$ associates with every graph-query pair $(G,q)$
a random variable $A(G,q)$ taking values in $\mathcal W(G)$.

\section{Learning the Bayes-Optimal Graph Predictor}
\label{sec::bayes-predictor}
This section characterizes the task-general predictor introduced above. We establish Bayes-level compatibility across graph tasks, characterize when expanding the graph prior can improve target performance, and formalize the predictor's permutation symmetries. 
Proofs are in Appendix~\ref{app::proofs}.

\paragraph{Learning across tasks.}
Any graph foundation model can be trained to approximate
$M_p^\star$ using samples $G^\star\sim p$, incomplete observations
$G_{\mathrm{obs}}\preceq G^\star$, and queries $q$, by minimizing
\begin{equation*}
    \mathcal{L}_p(\theta)
    =
    \mathbb{E}_{p}\!\left[
        \ell_q\!\left(
            M_\theta(G_{\mathrm{obs}},q),T_q(G^\star)
        \right)
    \right].
\end{equation*}
Here, the expectation includes observation and query sampling,
consistent with the posterior defined above.
Categorical cross-entropy for node classification and binary
cross-entropy for link prediction both have $M_p^\star$ as their population minimizer.
Moreover, restricting this objective to one task leaves its
optimal predictions unchanged: tasks merely query different
components of the same posterior over graph completions.
Thus, assuming sufficient capacity,
\textit{joint training has no Bayes-level disadvantage over
single-task training}.

\paragraph{Learning across diverse graphs.}
Broadening the pretraining prior can improve coverage of target
graphs, but may also introduce graph families whose predictions are
unsuitable for some targets. Theorem~\ref{thm:prior-expansion} shows that expansion
cannot hurt if the observation rules out the added family
wherever its predictions would be worse. More generally,
it guarantees improvement when the family's
posterior-weighted predictive benefits outweigh its harm.

Consider two graph distributions $p$ and $r$ and define $p_\lambda$ by sampling from $p$ with probability $(1-\lambda)$ and from $r$ otherwise.
Further, define $\beta_{\lambda,r}(G_{\text{obs}})$ as the posterior weight of $r$, i.e., the probability that the complete graph stems from $r$ given observation $G_{\text{obs}}$. Then we can write
\begin{align*}
    M_{p_\lambda}^\star(G_{\mathrm{obs}},q)
    ={}&
    \bigl(1-\beta_{\lambda,r}(G_{\mathrm{obs}})\bigr)
    M_p^\star(G_{\mathrm{obs}},q) +
    \beta_{\lambda,r}(G_{\mathrm{obs}})
    M_r^\star(G_{\mathrm{obs}},q).
\end{align*}
Thus, the influence of an added family depends on its posterior
relevance to each target observation.
Let now $\nu$ be a target
distribution over complete graphs, observations, and queries,
and define the risk of a classifier $M$ as
\(
    R_\nu(M)
    :=
    \mathbb{E}_{\nu}\!\left[
        \ell_q\!\left(
            M(G_{\mathrm{obs}},q),T_q(G^\star)
        \right)
    \right].
\)
\begin{theorem}
\label{thm:prior-expansion}
Assume that \(\ell_q\) is convex in its prediction argument,
that \(p(\,\cdot\mid G_{\mathrm{obs}})\) is defined
\(\nu\)-almost surely, and that the risks below are finite. Define the added family's conditional risk difference
relative to the existing predictor as
\[
\begin{aligned}
    \Delta_\nu(G_{\mathrm{obs}},q)
    := \mathbb{E}_{\nu}\!\Big[
        &\ell_q\!\left(
            M_r^\star(G_{\mathrm{obs}},q),T_q(G^\star)
        \right) -
        \ell_q\!\left(
            M_p^\star(G_{\mathrm{obs}},q),T_q(G^\star)
        \right)
        \mid G_{\textnormal{obs}}, q\Big].
\end{aligned}
\]
On observations with zero marginal probability density under $r$, set
$M_r^\star=M_p^\star$. Then
\begin{equation*}
    R_\nu(M_{p_\lambda}^\star)-R_\nu(M_p^\star)
    \le
    \mathbb{E}_{\nu}\!\left[
        \beta_{\lambda,r}(G_{\mathrm{obs}})
        \cdot\Delta_\nu(G_{\mathrm{obs}},q)
    \right].
\end{equation*}
\end{theorem}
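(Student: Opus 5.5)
The argument combines the mixture decomposition of $M_{p_\lambda}^\star$ stated just before the theorem with convexity of $\ell_q$, applied pointwise for each fixed observation, query and target realization, and then takes expectations under $\nu$.

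\textbf{Step 1: justify the mixture identity.} Write $p_\lambda=(1-\lambda)p+\lambda r$ as a joint distribution over a latent indicator $Z\in\{p,r\}$ and the complete graph. Let $\beta_{\lambda,r}(G_{\mathrm{obs}})=\Pr(Z=r\mid G_{\mathrm{obs}})$. This is well defined $\nu$-a.s. because $p(\cdot\mid G_{\mathrm{obs}})$ is assumed defined there, so the marginal density of $G_{\mathrm{obs}}$ under $p_\lambda$ is positive.

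By the tower property,
\[
\mathbb{E}_{p_\lambda}[T_q\mid G_{\mathrm{obs}}]=(1-\beta)\,\mathbb{E}_p[T_q\mid G_{\mathrm{obs}}]+\beta\,\mathbb{E}_r[T_q\mid G_{\mathrm{obs}}].
\]
On observations where $r$ has zero marginal density we have $\beta=0$. The convention $M_r^\star=M_p^\star$ then makes the identity hold trivially and keeps $\Delta_\nu$ well defined (it equals $0$ there).

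\textbf{Step 2: pointwise convexity.} Fix $(G_{\mathrm{obs}},q)$ and a target value $T=T_q(G^\star)$. Convexity of $\ell_q(\cdot,T)$ gives
\[
\ell_q(M_{p_\lambda}^\star,T)\le(1-\beta)\,\ell_q(M_p^\star,T)+\beta\,\ell_q(M_r^\star,T).
\]
Subtracting $\ell_q(M_p^\star,T)$ from both sides yields
\[
\ell_q(M_{p_\lambda}^\star,T)-\ell_q(M_p^\star,T)\le\beta\bigl(\ell_q(M_r^\star,T)-\ell_q(M_p^\star,T)\bigr).
\]
The point to keep in mind is that $\beta$, $M_p^\star$ and $M_r^\star$ depend only on $(G_{\mathrm{obs}},q)$, not on $G^\star$. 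The prediction argument is therefore a convex combination of fixed vectors, and the inequality holds for every realization of $G^\star$ under $\nu$. This is important because $\nu$ need not coincide with $p$ or $r$.

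\textbf{Step 3: take expectations.} Take $\mathbb{E}_\nu$ of both sides. By finiteness of the risks, the left side equals $R_\nu(M_{p_\lambda}^\star)-R_\nu(M_p^\star)$.

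On the right side, condition on $(G_{\mathrm{obs}},q)$ and pull out the measurable factor $\beta_{\lambda,r}(G_{\mathrm{obs}})\in[0,1]$. The inner conditional expectation is then exactly $\Delta_\nu(G_{\mathrm{obs}},q)$, which gives the claimed bound.

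\textbf{Main obstacle: integrability.} The difference $\ell_q(M_r^\star,T)-\ell_q(M_p^\star,T)$ must be integrable so that the tower property and the factoring out of $\beta$ are legitimate. I would rely on the hypothesis that "the risks below are finite", read as including $R_\nu(M_r^\star)$ (with the stated convention for $M_r^\star$). Since $0\le\beta\le1$, the product $\beta\cdot|\cdot|$ is then dominated by an integrable function.

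A secondary point is the measurability of $\beta$ and of the conditional expectations over isomorphism classes. On the finite-type spaces $\mathbb{G}_{n,m,b}$ with bounded features, I would handle this by standard regular conditional distributions.
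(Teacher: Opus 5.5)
Your proposal is correct and matches the paper's proof essentially step for step. The paper likewise derives the mixture identity for $M_{p_\lambda}^\star$, computing $\beta_{\lambda,r}=\lambda Z_r/((1-\lambda)Z_p+\lambda Z_r)$ from the marginal likelihoods rather than through a latent indicator; it then applies convexity of $\ell_q$ pointwise, subtracts $\ell_q(M_p^\star,T)$, and pulls the observation-measurable weight $\beta_{\lambda,r}$ out of the conditional expectation to obtain $\Delta_\nu$. Your reading of the finiteness hypothesis as also covering $R_\nu(M_r^\star)$, used to justify the tower-property step, is slightly more careful than the paper, which leaves that integrability implicit.
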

Thus, expanding the prior helps when the added graph family receives posterior
mass where its predictions are beneficial and cannot hurt where it is ruled out by the observation.
\paragraph{Permutation equivariance.}
Lastly, we formalize the symmetries of the Bayes-optimal
predictor under a relabeling $\pi$ as introduced in Section~\ref{sec::preliminary}.
We write $\pi(q)$ for the correspondingly relabeled query
and $\pi_{\mathrm{out}}$ for the induced relabeling of output
indices: node indices transform according to $\pi_V$ and,
for node classification, class indices additionally transform
according to $\pi_Y$.
\begin{proposition}\label{prop:bayes-is-equivariant}
For any graph prior $p$ over isomorphism classes, the Bayes-optimal predictor
is permutation equivariant:
\(
M_p^\star(\pi(G),\pi(q))
=
\pi_{\mathrm{out}}\!\left(M_p^\star(G,q)\right).
\)
\end{proposition}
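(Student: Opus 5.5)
The plan is to show that the posterior over completions transforms covariantly under relabelings, and then push this through the expectation of the query target. Because $p$ is a distribution over isomorphism classes, the prior assigns the same weight to $G^\star$ and $\pi(G^\star)$ for every relabeling $\pi$. The whole argument therefore reduces to three facts: a bijection between completions of $G$ and completions of $\pi(G)$, a statement that this bijection preserves posterior weights, and a transformation rule for $T_q$.

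\textbf{Step 1 (completions).} Represent the posterior $p(\cdot\mid G)$ concretely as a distribution over fully specified graphs $G^\star$ on the node set $V$ of $G$, with $G\preceq G^\star$ via the identity alignment. Here $G^\star$ contains the edges of $G$, agrees with its observed labels, and assigns labels to nodes marked $\bot$. This concrete representation is what makes $T_q(G^\star)$ well defined, since $T_q$ refers to specific nodes $h_q$, $t$ and $v\in V_q$.

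Each such $G^\star$ carries a weight proportional to $p([G^\star])$, times the observation/masking probability if the observation model is included. I would show that $G^\star\mapsto\pi(G^\star)$ is a bijection from completions of $G$ onto completions of $\pi(G)$: edge deletion and label masking commute with relabeling, and $\pi_Y(\bot)=\bot$. Since $[\pi(G^\star)]=[G^\star]$, the weights coincide. Provided the observation mechanism is itself invariant under relabeling, the normalizing constants agree as well. It follows that $p(\cdot\mid\pi(G))$ is the pushforward of $p(\cdot\mid G)$ under $\pi$.

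\textbf{Step 2 (targets).} Next I would verify that $T_{\pi(q)}(\pi(G^\star))=\pi_{\mathrm{out}}(T_q(G^\star))$, treating the two query types separately.
\begin{itemize}
\item \emph{Link prediction.} With $\pi(q)=(\pi_V(h_q),\pi_R(r_q),?)$, the entry at index $\pi_V(t)$ is $\mathbf 1_{(\pi_V h_q,\pi_R r_q,\pi_V t)\in\pi(E^\star)}=\mathbf 1_{(h_q,r_q,t)\in E^\star}$.
\item \emph{Node classification.} The entry at $(\pi_V(v),\pi_Y(c))$ is $\mathbf 1_{\pi_Y(Y^\star_v)=\pi_Y(c)}=\mathbf 1_{Y^\star_v=c}$.
\end{itemize}
The components $\pi_X$ do not affect either target.

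\textbf{Step 3 (conclusion).} Combining the two steps,
\[
M_p^\star(\pi(G),\pi(q))=\mathbb E_{G^\star\sim p(\cdot\mid G)}\bigl[T_{\pi(q)}(\pi(G^\star))\bigr]=\mathbb E\bigl[\pi_{\mathrm{out}}(T_q(G^\star))\bigr]=\pi_{\mathrm{out}}\bigl(M_p^\star(G,q)\bigr).
\]
The final equality holds because $\pi_{\mathrm{out}}$ is a coordinate permutation and hence linear.

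\textbf{Main obstacle.} The delicate point is Step 1: making precise what ``the conditional distribution over completions consistent with $G_{\mathrm{obs}}$'' means. The paper identifies graphs with their isomorphism classes, so completions must be aligned to $G$'s own node and class identities rather than taken as abstract classes. Under the natural definition, either restriction of $p$ to completions with identity alignment or a relabeling-invariant observation process, the weights depend only on isomorphism classes, and the pushforward claim follows.

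Automorphisms of $G^\star$ could produce several aligned copies of the same class. This is harmless, because $\pi$ maps aligned copies bijectively and so preserves their multiplicities.
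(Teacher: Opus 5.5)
Your proof is correct and follows the same route as the paper's: a bijection $H\mapsto\pi(H)$ between completions of $G$ and completions of $\pi(G)$ gives $p(\pi(H)\mid\pi(G))=p(H\mid G)$, the rule $T_{\pi(q)}(\pi(H))=\pi_{\mathrm{out}}(T_q(H))$ transforms the targets, and a change of variables together with linearity of $\pi_{\mathrm{out}}$ finishes the argument. Your case-by-case check of the target rule, and your remarks on identity-aligned completions and on needing a relabeling-invariant observation mechanism, make explicit points that the paper's proof takes for granted.
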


\section{Methodology}\label{sec::method}
\begin{figure}[t]
    \centering
    \includegraphics[width=\columnwidth]{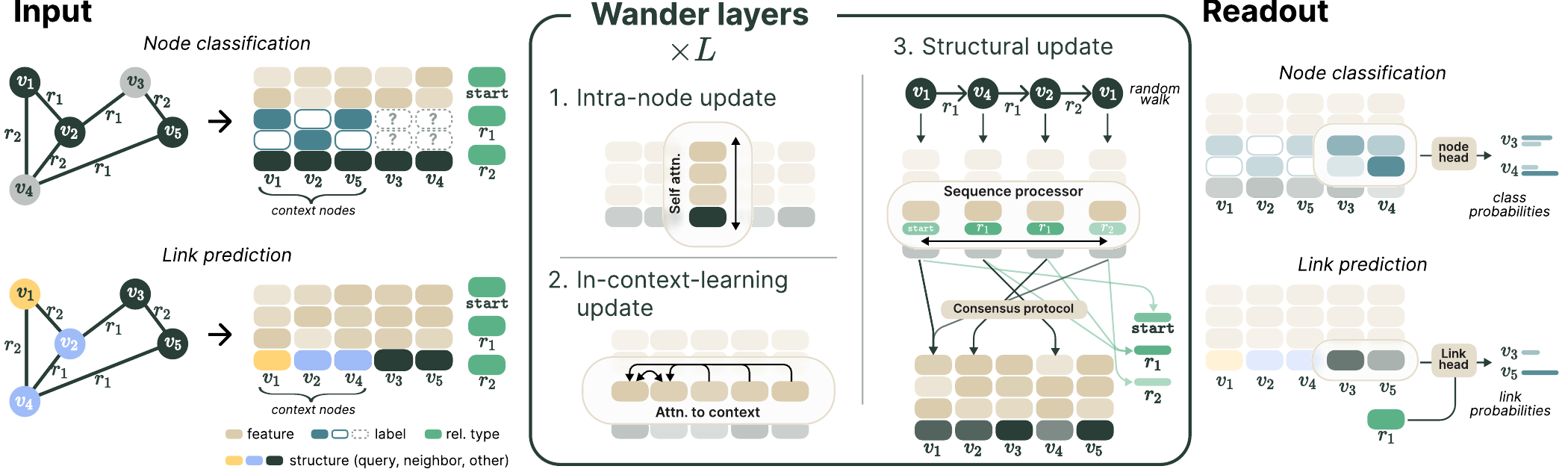}
    \caption{\textbf{Overview of \textsc{Wander}}. Each layer combines (1) intra-node attention across feature, label, and structure tokens; (2) in-context attention from each node to context nodes within the same channel; and (3) a stochastic structural update that processes sampled walks and aggregates proposals through a consensus protocol. After \(L\) layers, task-specific readouts map the query node’s label tokens to class probabilities or each candidate node’s structure token to a link probability.}
    \label{fig:wander_architecture}
\end{figure}

In this section, we present the \textsc{Wander} framework. The implementation details for the experiments presented in Section \ref{sec::experiments} are provided in Appendix~\ref{app::method}.  An overview is provided in \Cref{fig:wander_architecture}. We additionally include a comparison to existing foundation models in Appendix~\ref{app:comparison}.

\paragraph{Input and internal representation.}
Let $G=(V,E,R,X,Y)$ and query $q$ be the input to \textsc{Wander}. Depending on the prediction task, we define the set of \emph{context nodes} as
\begin{align*}
    V_{\mathrm{context}}
    =
    \begin{cases}
        \{v\in V \mid v=h_q \text{ or } (h_q,r_q,v)\in E\},
        & \text{if } q=(h_q,r_q,?),\\
        \{v\in V \mid Y_v\neq\bot\},
        & \text{if } q=V_q.
    \end{cases}
\end{align*}
Thus, context nodes correspond to observed positive examples for link prediction and observed labeled examples for node prediction.

Internally, \textsc{Wander} maintains node states
$\mathbf{v}:V\times[d_X+d_Y+1]\rightarrow\mathbb{R}^d$
and relation states
$\mathbf{r}:R\rightarrow\mathbb{R}^d$.
Each node has one token for every feature dimension and label class, together with a single \emph{structure token}; we refer to these indices collectively as \emph{channels}.
At the beginning of a forward pass, feature and label tokens are initialized from their observed values, structure tokens encode the node's role relative to the query, and relation types all receive the same learnable initial state.
We denote the resulting states by $\mathbf{v}^{(0)}$ and $\mathbf{r}^{(0)}$.

\paragraph{Single layer.}
Each layer $l\in\{0,\ldots,L-1\}$ updates these states through three operations.

\begin{itemize}[leftmargin=3em, labelsep=0.3em,topsep=0pt, partopsep=0pt,itemsep=0.2em]

\item \emph{Intra-node update.}
For each node $v\in V$, an intra-node processor $\mu_\theta$ is applied to the set of tokens $\textbf{v}^{(l)}(v,\cdot)$ corresponding to $v$, yielding intermediate updates $\textbf{v}^{(l)'}(v,\cdot)$.
This allows feature, label, and structural information associated with the same node to interact.

\item \emph{In-context-learning update.}
For each channel $i\in[d_X+d_Y+1]$, each token $\textbf{v}^{(l)'}(v,i)$ attends to the context tokens $\{\textbf{v}^{(l)'}(u,i)\mid u\in V_{\text{context}}\}$, yielding intermediate states $\textbf{v}^{(l)''}(v,i)$.
This provides the in-context learning mechanism through which observed examples can inform the query representations.

\item \emph{Stochastic structural update.}
Finally, graph structure is incorporated through random walks.
This update consists of four steps: First, a \textit{random walk algorithm} $A$ produces a set of walks $\eta_1,\dots,\eta_K$ on $G$. Next, a \textit{recording protocol} $w:\eta_{j}\mapsto \{Z_{j,i}\mid i\in [d_X+d_Y+1]\}$ broadcasts each walk to one walk recording per channel $i$. Then, a \textit{sequence processor} $\nu_\theta :Z_{j,i}\mapsto Z_{j,i}'$ is applied to each sequence independently. Since the same state may be encountered by several walks, a permutation-invariant \emph{consensus protocol} aggregates its proposals, yielding  updated states $\textbf{v}^{(l+1)}(\cdot,\cdot), \textbf{r}^{(l+1)}(\cdot)$.

\end{itemize}
We require the recording protocol $w$ to be equivariant to permutations of node
and relation identities. All channel-wise modules are shared across
channels of the same type (feature, label and structure).

\paragraph{Readout.}
After $L$ layers, the same representation supports the different prediction tasks.
For a node-prediction query $q=V_q$, the label tokens of $v\in V_q$ are passed to a node head, producing class probabilities
$M_\theta(G,q)\in[0,1]^{V_q\times d_Y}$.
For a link-prediction query $q=(h_q,r_q,?)$, the structure token of each candidate node, together with the state of the query relation $r_q$, is passed to a link head, producing $M_\theta(G,q)\in[0,1]^V$.

\section{Theoretical analysis of \textsc{Wander}}\label{sec::theory}
This section establishes the model’s main theoretical guarantees, namely universality and permutation equivariance. All proofs can be found in Appendix~\ref{app::proofs}.

\paragraph{Universality.} 
We first formalize universality with respect to a graph prior. We here restrict attention to \textit{admissible} graph priors, under which the continuous graph attributes admit a continuous mapping within each discrete isomorphism class. A formal definition is given in Appendix~\ref{app::proofs}. We call a model architecture $M$ a \textit{universal graph prior approximator} if for any admissible prior $p$ over isomorphism classes of $\mathbb{G}_{n,m,b}^\star$ and any $\epsilon, \delta>0$, there exists a choice of network parameters $\theta$ and a finite sampling budget $K\in \mathbb{N}$ such that
\begin{align*}
    \Pr\big[|\!|M_{\theta,K}(G,q)-M_p^\star(G,q)|\!|<\epsilon\big] > 1-\delta 
\end{align*}
for all connected observations \(G \in \mathcal G_{n,m,b}\)
in the support of \(p\) and all valid queries \(q\).
\begin{theorem}
\label{thm:two-layer-universality}
Fix $n,m,b\in\mathbb{N}$. With at least two layers,
a walk length of at least $n$, a sufficiently uniform random walk sampler, and sufficient hidden dimension and capacity of all modules\footnote{See Assumptions (A1)-(A5) in Appendix~\ref{app::proofs}.}, \textsc{Wander} is a universal graph prior approximator for graph priors over $\mathbb{G}_{n,m,b}^\star$.
\end{theorem}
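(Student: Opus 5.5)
The plan is to show that, with high probability, the sampled walks let \textsc{Wander} reconstruct the entire observed graph $G$ (up to isomorphism, including features, labels and the query roles), and that it can then evaluate a continuous representative of $M_p^\star$ on that reconstruction. The argument has three steps.

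\textbf{Step 1: walk coverage.} I first use the ``sufficiently uniform sampler'' assumption. Since $G$ is connected with at most $n$ nodes, there is a positive lower bound $\rho>0$, uniform over $\mathbb{G}_{n,m,b}$, on the probability that a single walk of length at least $n$ started at any node visits every node. Each walk of length $\ge n$ also traverses every edge with some positive probability, again bounded below uniformly. With $K$ independent walks, the probability that the collection traverses all typed edges of $G$ fails to hold with probability at most $(1-\rho)^K$. I choose $K$ so that this is below $\delta$.

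\textbf{Step 2: first layer reconstructs the isomorphism class.} The recording protocol $w$ is equivariant. I take it to be an anonymized recording, writing first-visit indices for nodes and for relations together with edge directions $\alpha_j$. Each channel's recording additionally carries that channel's token values. The intra-node and in-context updates are set to identity-like maps; this is possible by capacity, under (A1)--(A5). They must still inject each node's feature and label values, and its query role, into its structure token. The intra-node processor can do this by encoding the set of channel tokens injectively into the structure channel, using sum-decomposition or injective multiset encodings with sufficient hidden dimension.

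The sequence processor $\nu_\theta$ then maps the anonymized walk record injectively to a vector $\phi(\eta)$. The consensus protocol is permutation invariant, so I take a sum or mean over walks. Once the walks are concatenated, the resulting aggregate at each node is an injective code of the multiset of walk records. On the covering event, this determines $G$ up to isomorphism together with the position of that node.

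Obtaining one global canonical code is the delicate part, because different walks anonymize differently. My first approach is to argue that the union of the recorded walks, each viewed as a labelled anonymous graph, determines the isomorphism class of the observed graph. A single covering walk already does this, so it suffices that some covering walk is present. The second layer's consensus then spreads the code to all nodes.

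\textbf{Step 3: second layer applies the Bayes predictor.} I define $f(G,v)$ as the node-wise target value $M_p^\star(G,q)$ restricted to node $v$, or to $v$'s label tokens. By Proposition~\ref{prop:bayes-is-equivariant}, $f$ is well defined on pairs (isomorphism class, orbit of $v$), so it depends only on the code computed in Step 2. Admissibility of $p$ gives continuity of $f$ in the continuous attributes within each of the finitely many discrete isomorphism classes. The code space is therefore a finite union of sets on which $f$ is continuous, and an MLP inside $\nu_\theta$ or the readout head approximates $f$ uniformly to within $\epsilon$ on a compact domain. Here I assume features are bounded; otherwise I would use an exhaustion argument.

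The link head receives the candidate's structure token and $r_q$. It can therefore output the approximation of $\Pr[(h_q,r_q,t)\in E^\star\mid G]$ for the candidate $t$. Node classification works the same way through the label tokens.

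\textbf{Combining.} On the event from Step 1, which has probability exceeding $1-\delta$, the output is within $\epsilon$ of $M_p^\star$. Finitely many node and class outputs suffice for the norm bound.

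\textbf{Main obstacle.} I expect the hardest step to be Step 2: turning differently anonymized walk records into a canonical, injective, permutation-invariant encoding of the whole graph through a sum-type consensus. The risk is that an injective multiset encoding is not continuous in the attributes. I would first handle this by separating the discrete anonymized skeleton, which is finite in number, from the continuous values, using one-hot codes for the skeleton plus concatenated attributes. If walk-dependent anonymization breaks invariance, I would instead rely on the two layers: the first computes for each node the canonical code of its own covering walks, and the second aggregates these into a global code.
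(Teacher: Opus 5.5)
\textbf{There is a genuine gap in Steps 1--2.} The theorem must hold for walk length exactly $n$. At that length, the claim that a single walk covers every node (let alone every typed edge) with probability bounded below by some $\rho>0$ is false.

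\begin{itemize}
\item A walk of length $n$ traverses at most $n$ distinct edges, so it cannot cover $K_n$ once $n\ge 4$.
\item Even node coverage fails. On a star with $n\ge 5$ nodes, any walk visiting all leaves needs at least $2n-4>n$ steps.
\end{itemize}

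So $\rho=0$ in general. Your primary argument (``a single covering walk already determines the isomorphism class'') and your fallback (each node's ``own covering walks'') both have nothing to work with. Per-edge coverage plus a union bound does give collective coverage of all edges, but that does not rescue Step 2. Each walk is processed independently and anonymized by first-visit indices local to that walk. The consensus therefore cannot tell whether two records mention the same node, and the union of records cannot be stitched into $G$. As written, your argument would only prove a weaker, cover-time-dependent statement. The theorem is specifically designed to avoid that dependence.

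\textbf{The missing idea is to use the randomness for symmetry breaking, not only for coverage.} In the paper, the first layer serves only to create global identifiers.
\begin{itemize}
\item With enough walks, the counts $C_v$ (walks starting at $v$) and $D_r$ (walks whose first traversed relation is $r$) are pairwise distinct with high probability. These are i.i.d.\ categorical samples with full support; pairs of equal probability are handled by binomial anti-concentration.
\item By (A3)--(A4), these counts are stored as persistent identifiers $\iota_V,\iota_R$. By (A2)/(A4), the observed attributes are retained.
\end{itemize}

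In the second layer, every walk record names each traversed edge as $(\iota_V(u),\iota_R(r),\iota_V(v))$ with its orientation. Short walks can now be merged, so it suffices that for every pair $(t,e)$ some walk starting at $t$ traverses $e$.
\begin{itemize}
\item A non-backtracking witness of length $\le D+1\le n$ always exists: a shortest path from $t$ to an endpoint of $e$, plus one step.
\item (A1) gives each such witness uniformly positive probability, and a union bound over at most $n^3m$ pairs finishes the argument.
\item The consensus at $t$ then injectively stores the full edge set, i.e.\ $G$ (and $q$) up to isomorphism, at every node.
\end{itemize}

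Your Step 3 is in the right spirit. The clean way to make $f$ a continuous function of the code is as follows. Within each discrete class and each walk realization, the representation is a continuous injection on a compact set, hence a homeomorphism onto its image. The finitely many pieces then glue consistently because overlapping codes determine the same graph-query pair.
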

The result concerns the general \textsc{Wander} framework under the
capacity assumptions of Appendix~\ref{app::proofs}, rather than the finite
implementation used in our experiments. For graphs with at most \(n\)
nodes, a walk length of \(n\) suffices. The number of sampled walks,
however, can be increased arbitrarily at inference time without changing
the learned parameters, allowing the sampling budget required by the
approximation guarantee. Unlike the results of
\citet{kim2025revisiting,kim2025flock}, the required walk length need
not scale with the graph's cover time.

\paragraph{Permutation equivariance.} A graph foundation model intended to approximate the Bayes-optimal predictor $M_p^\star$ should respect
the same symmetries (see Proposition \ref{prop:bayes-is-equivariant}) by design. For the stochastic computation of \textsc{Wander},
we require\textit{ equivariance in distribution}. We call a random-walk algorithm $A$
permutation invariant if, for every $G,q,\eta,$ and $\pi$,
\[
\Pr[A(G,q)=\eta]
=
\Pr[A(\pi(G),\pi(q))=\pi(\eta)],
\]
and a stochastic model $M_\theta$ permutation equivariant in distribution if
\[
M_\theta(\pi(G),\pi(q))
\overset{d}{=}
\pi_{\mathrm{out}}\!\left(M_\theta(G,q)\right).
\]

\begin{proposition}\label{prop:permutation-equivariance}
Suppose that the random-walk algorithm $A$ is permutation invariant and that
the intra-node processor $\mu_\theta$ is equivariant in distribution with
respect to permutations of the feature and label channels. Then every
\textsc{Wander} model is permutation equivariant in distribution with respect
to node, relation, feature, and label permutations.
\end{proposition}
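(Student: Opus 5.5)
The plan is to prove equivariance in distribution by induction over the layers, tracking the \emph{joint} law of all internal states. Fix a relabeling $\pi=(\pi_V,\pi_R,\pi_X,\pi_Y)$. It induces a relabeling $\pi_{\mathrm{st}}$ on internal states: node states $\mathbf v(v,i)$ are sent to $\mathbf v(\pi_V(v),\pi_{\mathrm{ch}}(i))$, where $\pi_{\mathrm{ch}}$ permutes the feature channels by $\pi_X$, permutes the label channels by $\pi_Y$, and fixes the structure channel. Relation states $\mathbf r(r)$ are sent to $\mathbf r(\pi_R(r))$. The inductive claim is
\[
(\mathbf v^{(l)},\mathbf r^{(l)})(\pi(G),\pi(q)) \overset{d}{=} \pi_{\mathrm{st}}\bigl((\mathbf v^{(l)},\mathbf r^{(l)})(G,q)\bigr)
\]
for every $l$. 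The readout then finishes the argument. Node heads act on the label tokens of $v\in V_q$, and since the label tokens are permuted by $\pi_Y$, applying a shared head tokenwise yields $\pi_{\mathrm{out}}$. Link heads act on each candidate's structure token together with $\mathbf r(r_q)$, and $\pi(q)$ has relation $\pi_R(r_q)$. Pushing forward an equality in distribution through a fixed measurable map preserves it.

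\textbf{Base case.} Initialization is deterministic and local. Feature and label tokens are read off from observed values, which $\pi$ permutes consistently. Structure tokens depend only on the node's role relative to the query (e.g.\ whether it is $h_q$ or lies in $V_q$), and these roles are preserved since $\pi(q)$ is relabeled together with $G$. All relations share one initial state. Together these give exact equality after applying $\pi_{\mathrm{st}}$.

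\textbf{Inductive step.} I handle the three sub-updates in turn, conditioning on the previous states. The key tool is a coupling lemma: if $X\overset{d}{=}Y$ and a stochastic map $F$ satisfies $F(\pi\cdot x)\overset{d}{=}\pi\cdot F(x)$ for every $x$, then $F(\pi\cdot X)\overset{d}{=}\pi\cdot F(X)$. I would prove this with a short computation of conditional laws, using that the fresh randomness of each layer is independent of the past.

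For the intra-node update, $\mu_\theta$ is applied per node with shared parameters. Node equivariance is therefore immediate, and channel equivariance is exactly the hypothesis on $\mu_\theta$.

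For the in-context update, $V_{\mathrm{context}}$ is defined from $(G,q)$ in a relabeling-equivariant way, so $V_{\mathrm{context}}(\pi(G),\pi(q))=\pi_V(V_{\mathrm{context}}(G,q))$. Attention over a set is permutation invariant in the keys. Modules are shared across channels of the same type and act channelwise, so permuting channels within a type commutes with this update.

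\textbf{Main obstacle: the structural update.} Permutation invariance of $A$ gives $\{\eta_j\}$ on $\pi(G)$ equal in law to $\{\pi(\eta_j)\}$. Because the recording protocol $w$ is equivariant, the records of $\pi(\eta_j)$ on the relabeled states coincide with the records of $\eta_j$ on the original states, with channel index $i$ mapped to $\pi_{\mathrm{ch}}(i)$. Since $\nu_\theta$ is shared across channels of the same type, the processed sequences correspond in the same way. The consensus protocol is permutation invariant over the multiset of proposals for each state, so the ordering of the $K$ walks and of the channels is irrelevant. What takes care here is writing the whole structural update as one stochastic map $F$ of the current states and the graph, and checking its equivariance jointly over walks and channels rather than walk by walk. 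The point is that the $K$ walks are drawn jointly, so invariance must be applied to the law of the entire tuple, which the i.i.d.\ or joint invariance of $A$ provides. Once this is verified, the coupling lemma closes the induction.
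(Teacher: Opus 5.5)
Your proposal is correct and follows essentially the same route as the paper. Both arguments run an induction over layers on the joint law of node and relation states, check initialization, then the intra-node, in-context, and structural updates (via permutation invariance of $A$, equivariance of the recording protocol, sharing of $\nu_\theta$, and permutation invariance of the consensus protocol), and finish with the readout. Your explicit coupling step and your remark that invariance must hold for the joint law of the walk tuple make precise what the paper leaves implicit; the coupling step should read: if $X'\overset{d}{=}\pi\cdot X$ and $F$ is pointwise equivariant in distribution with randomness independent of its input, then $F(X')\overset{d}{=}\pi\cdot F(X)$.
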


\section{Experiments}\label{sec::experiments}
We organize our experiments around four main questions.

\begin{itemize}[leftmargin=3em, labelsep=0.3em,topsep=0pt, partopsep=0pt,itemsep=0.2em]
    \item[(\textbf{Q1})] Can a single \textsc{Wander} model remain competitive with
foundation models specialized to individual graph-learning settings?
\item[(\textbf{Q2})] Does joint pretraining across these settings cause interference
or enable positive transfer?
\item[(\textbf{Q3})] Can \textsc{Wander} combine
separately learned capabilities during inference?
\item[(\textbf{Q4})] Can \textsc{Wander} adapt its structural computation at
inference time to different graph and task requirements? 
\end{itemize}

Additional experiments on runtime, structural expressivity, finetuning, pretraining priors, and architecture ablations are provided in Appendices ~\ref{app::runtime}--\ref{app::ablations}.

\subsection{Generalist performance across graph learning tasks (Q1)}

To answer \textbf{Q1}, we train \textsc{Wander} across a deliberately diverse graph prior, spanning multiple tasks and evaluate on three standard graph-learning regimes represented in our benchmark suite: knowledge-graph link prediction, node classification on simple graphs, and homogeneous link prediction.

\paragraph{Pretraining prior.}
Following the prior-data fitted network paradigm~\citep{muller2021transformers,hollmann2022tabpfn}, we construct most of \textsc{Wander}'s pretraining distribution synthetically. For attributed graphs, we combine two approaches inspired by \textsc{NodePFN}~\citep{choi2026learning} and \textsc{GraphPFN}~\citep{eremeev2025graphpfn}: \textsc{NodePFN} generates attributes through an SCM before sampling graph structure using stochastic block or Erd\H{o}s--R\'enyi models~\citep{holland1983stochastic,erdos1959random}, while \textsc{GraphPFN} first samples graph structure and subsequently generates attributes with a graph-aware SCM. We also diversify the structural prior using Watts--Strogatz~\citep{watts1998collective}, random geometric~\citep{gilbert1961random}, and kernel-based graph models~\citep{lovasz2006limits}. These synthetic graphs provide both node-classification and homogeneous link-prediction episodes. 
Since realistic synthetic generation of multi-relational graphs is less straightforward, we additionally include FB15k-237~\citep{FB15k-237}, WN18RR~\citep{WN18RR}, and CoDEx Medium~\citep{codex} for multi-relational link-prediction pretraining. Details and discussion of the pretraining prior are provided in Appendix~\ref{app:pretraining-prior}.

\begin{table}[t]

  \small
      \centering
      \caption{Knowledge graph link prediction (MRR and Hits@10). }
      \label{tab:link_prediction_summary}
      \setlength{\tabcolsep}{3pt}
      \begin{tabular}{lcccccc|cccc}
      \toprule
      \multirow{2}{*}{}
        & \multicolumn{2}{c}{Inductive ($e$,$r$)}
        & \multicolumn{2}{c}{Inductive ($e$)}
        & \multicolumn{2}{c|}{Transductive}
        & \multicolumn{2}{c}{Mean}
        & \multicolumn{2}{c}{Mean rank} \\
      & MRR & H@10 & MRR & H@10 & MRR & H@10 & MRR & H@10 & MRR & H@10 \\
      \midrule
      {\# graphs}
        & \multicolumn{2}{c}{23}
        & \multicolumn{2}{c}{18}
        & \multicolumn{2}{c|}{13}
        & 
        &  \\
      \midrule
      \textsc{Ultra} & 0.345 & 0.513 & 0.431 & 0.566 & 0.312 & 0.458 & 0.366 & 0.518 & 3.250 & 3.417 \\
      \textsc{Trix} & 0.368 & 0.540 & 0.455 & 0.592 & 0.339 & 0.500 & 0.390 & 0.548 & 2.657 & 2.750 \\
      \textsc{Flock} & \underline{0.369} & \underline{0.554} & \underline{0.456} & \underline{0.604} & \underline{0.340} & \underline{0.509} & \underline{0.391} & \underline{0.560} & \underline{2.343} & \underline{2.222} \\
      \midrule
      \rowcolor{wanderlight!40!white}\textbf{\textsc{Wander}} & \textbf{0.378} & \textbf{0.558} & \textbf{0.460} & \textbf{0.609} & \textbf{0.353} & \textbf{0.516} & \textbf{0.399} & \textbf{0.565} & \textbf{1.750} & \textbf{1.611} \\
       \bottomrule
      \end{tabular}
  \end{table}

\paragraph{Knowledge graph link prediction.}
We evaluate whether \textsc{Wander} can match foundation models specialized
to knowledge graph reasoning by comparing against \textsc{Ultra}~\citep{ultra},
\textsc{Trix}~\citep{trix}, and \textsc{Flock}~\citep{kim2025flock} in the
zero-shot setting. We consider entity prediction, where, given a head entity
\(h\) and relation \(r\), the model ranks candidate tails for the query
\((h,r,?)\). All models are pretrained on the same three knowledge graphs included in \textsc{Wander}'s pretraining. We evaluate on 54 knowledge graphs spanning diverse domains and three settings: inductive on entities and relations (inductive \(e,r\)), inductive on entities (inductive \(e\)), and transductive. Note that in our experiment, all three settings are in principle indifferent, as all models encounter unseen relations in the inference graph and align with previous work~\citep{ultra,kim2025flock}. We use the filtered ranking
protocol~\citep{bordes2013translating} and report mean reciprocal rank (MRR)
and Hits@10; per-dataset results are given in Table~\ref{tab::full_kg}.
As shown in Table~\ref{tab:link_prediction_summary}, \textsc{Wander} achieves
the strongest overall results, with a mean MRR of \(0.399\) and Hits@10 of
\(0.565\), compared with \(0.391\) and \(0.560\) for the strongest specialized
baseline \textsc{Flock}.

\paragraph{Homogeneous link prediction.}
Next, we evaluate \textsc{Wander} on homogeneous link prediction, where, given
a source node, the model ranks all other nodes as candidate link targets. We
consider ten datasets from diverse domains (see
Appendix~\ref{app::homo_lp_eval}) and report Recall@20. We compare against three classes of baselines: the classical heuristics Common
Neighbors (CN)~\citep{liben2003link} and Resource Allocation
(RA)~\citep{zhou2009predicting}; the end-to-end link-prediction models
\textsc{Buddy}~\citep{chamberlain2022graph} and \textsc{NBFNet}~\citep{zhu2022neural}, tuned
per dataset;
and the graph foundation model \textsc{AnyGraph}~\citep{xia2024anygraph}.

Two recent foundation models, \textsc{UniLP}~\citep{dong2024universal} and
\textsc{TFMLinker}~\citep{liao2026tfmlinker}, also target homogeneous link prediction
but omit node features and score node pairs individually. In contrast,
\textsc{Wander} conditions on a source node and jointly scores all candidate
targets. Because pairwise scoring makes full-candidate Recall@20 impractical
at this scale, we compare with \textsc{UniLP} separately in
Appendix~\ref{app::homo_lp_eval}.

Table~\ref{tab:homo-lp} summarizes the results. \textsc{Wander} achieves the
highest mean Recall@20 of \(43.39\%\), compared with \(38.82\%\) for the
strongest end-to-end baseline, \textsc{NBFNet}, and \(27.85\%\) for
\textsc{AnyGraph}. Thus, \textsc{Wander} remains competitive with
dataset-specific models while substantially outperforming the existing
foundation-model baseline.

\begin{table}[t]
        \fontsize{8.5pt}{10pt}\selectfont
        \centering
        \caption{Homogeneous link prediction (Recall@20 \%).}
        \label{tab:homo-lp}
        \setlength{\tabcolsep}{3pt}
        \begin{tabular}{lcccccccccc|cc}
        \toprule
         & CiteSeer & Cora & PubMed & CS & DDI & \makecell{Products\\Home} & \makecell{P2P\\Gnutella} & \makecell{Email\\Enron} & \makecell{Proteins\\Spec1} & \makecell{SOC\\Epinions} & Mean & Mean rank \\
        \midrule
        CN & 29.83 & 33.30 & 11.02 & 49.09 & 3.45 & 56.94 & 1.46 & 52.88 & 15.65 & 15.73 & 26.94 & 5.00 \\
        RA & 30.26 & 34.99 & 11.12 & 55.63 & 4.85 & 63.21 & 1.29 & 63.07 & 20.67 & 16.37 & 30.15 & 4.10 \\
        \midrule
        \textsc{Buddy} & \underline{44.32} & 40.03 & \underline{25.72} & 48.28 & 9.42 & 58.66 & 7.10 & 50.18 & 13.67 & 10.78 & 30.82 & 4.00 \\
        \textsc{NBFNet} & 34.21 & \underline{45.98} & 22.88 & \underline{62.17} & \textbf{16.34} & \underline{68.19} & \textbf{9.29} & \underline{69.56} & \textbf{39.85} & \underline{19.70} & \underline{38.82} & \underline{1.90} \\
        \midrule
        \textsc{AnyGraph} & 32.35 & 44.98 & 15.59 & 47.87 & 7.67 & 55.04 & 5.80 & 40.75 & 15.36 & 13.06 & 27.85 & 4.70 \\
        \midrule
        \rowcolor{wanderlight!40!white}\textbf{\textsc{Wander}} & \textbf{65.88} & \textbf{56.78} & \textbf{31.73} & \textbf{64.62} & \underline{12.98} & \textbf{71.12} & \underline{9.06} & \textbf{69.78} & \underline{31.40} & \textbf{20.59} & \textbf{43.39} & \textbf{1.30} \\
        \bottomrule
        \end{tabular}
    \end{table}
\begin{table}[t]
      \scriptsize
      \centering
      \caption{Node classification (test accuracy \%, averaged over all available splits).}
      \label{tab:node-cls-multiclass-summary}
      \setlength{\tabcolsep}{3pt}
      \begin{tabular}{l*{14}{c}|cc}
      \toprule
  
      & \makecell{Web-\\KB}
      & \makecell{Air-\\ports}
      & \makecell{Wiki\\filt.}
      & Actor
      & \makecell{Plane-\\toid}
      & \makecell{Ama-\\zon}
      & \makecell{Wiki\\CS}
      & \makecell{Cit.\\Full}
      & \makecell{Roman\\Emp.}
      & \makecell{Amz.\\Rat.}
      & \makecell{Coau-\\thor}
      & \makecell{HM\\Cat.}
      & \makecell{Pokec\\100k}
      & \makecell{City\\Netw.}
      & \makecell{Mean}
      & \makecell{Mean\\rank} \\
      \midrule
      \# graphs
        & 3 & 3 & 2 & 1 & 3 & 2 & 1
        & 2 & 1 & 1 & 2 & 1 & 1 & 3
        &  &  \\
      mean \# nodes
        & 206 & 573 & 1.6k & 7.6k & 8.6k & 10.7k & 11.7k
        & 18.8k & 22.7k & 24.5k & 26.4k & 46.6k & 100k & 180k
        &  &  \\
      \midrule
      \textsc{GCN}
        & 51.3 & 46.9 & 37.4 & 33.1 & 70.8 & 87.9 & \textbf{79.4}
        & \underline{66.2} & \textbf{79.6} & \textbf{51.2} & \underline{92.0} & \textbf{66.4} & 78.4 & 38.2
        & 60.64 & 3.12 \\
        \midrule
      \textsc{TabPFNv3}
        & 73.3 & 32.0 & 44.0 & \textbf{39.0} & 67.2 & 83.1 & 73.8
        & 59.2 & 65.7 & \underline{49.1} & 90.8 & 42.1 & 43.2 & \underline{47.0}
        & 58.66 & 3.69 \\
        \midrule
      \textsc{GraphAny}
        & 63.0 & 40.2 & 29.2 & 29.1 & 74.7 & 86.9 & 75.4
        & 64.2 & 64.2 & 42.5 & 91.7 & 37.1 & 44.0 & 18.9
        & 54.85 & 3.96 \\
      $\textsc{NodePFN}^*$
          & \underline{74.5} & 59.1 & 44.3 & 32.7 & \textbf{76.2} & 85.7 & 76.0
        & --- & 48.8 & 44.2 & 91.7 & --- & 78.1 & 44.6
        & --- & --- \\
      \textsc{GraphPFN}
        & 73.6 & \textbf{65.2} & \textbf{49.5} & \underline{38.7} & 71.9 & \underline{88.2} & 79.1
        & 66.0 & \underline{67.8} & 48.6 & 91.1 & \underline{55.2} & \underline{83.8} & 45.1
        & \underline{66.55} & \underline{2.42} \\
        \midrule
      \rowcolor{wanderlight!40!white}\textbf{\textsc{Wander}}
        & \textbf{78.5} & \underline{64.0} & \underline{45.9} & 36.4 & \underline{75.4} & \textbf{88.3} & \underline{79.3}
        & \textbf{69.9} & 65.4 & 49.0 & \textbf{92.5} & 48.7 & \textbf{86.1} & \textbf{53.8}
        & \textbf{68.20} & \textbf{1.73} \\
      \bottomrule
      \end{tabular}
      {$^*$\text{NodePFN} is omitted from columns
  containing datasets that exceed its maximum number of supported classes.}
  \end{table}
\paragraph{Node classification on simple graphs.}
Finally, to evaluate whether \textsc{Wander} can match models specialized to node
classification, we evaluate it in the zero-shot setting on a test set of 26
attributed simple graphs from diverse domains (see
Appendix~\ref{app::node_classification} for details). We consider a strongly tuned
GCN~\citep{kipf2016semi} as a supervised baseline, performing a per-dataset
grid search over learning rate, layer normalization, dropout, and skip
connections. We further compare against the graph foundation models
\textsc{GraphAny}~\citep{zhao2024graphany},
\textsc{NodePFN}~\citep{choi2026learning}, and
\textsc{GraphPFN}~\citep{eremeev2025graphpfn}, as well as the tabular
foundation model \textsc{TabPFN}v3~\citep{tabpfn3}. While
\textsc{GraphAny} was pretrained on the ogbn-arxiv graph from
\citet{hu2020open}, all other foundation models were pretrained on synthetic
priors.
\Cref{tab:node-cls-multiclass-summary} summarizes the results, with
datasets from the same family grouped into a single column. Overall,
\textsc{Wander} closely matches the performance of the strongest
node-classification foundation model, \textsc{GraphPFN}, achieving a mean
accuracy of \(68.20\%\), surpassing \(66.55\%\).

\subsection{Transfer through joint training (Q2)}

\begin{wraptable}{r}{0.4\columnwidth}
\vspace{-10pt}
    \centering
    \footnotesize
    \caption{Joint vs.\ single-task training.}
    \label{tab:joint-training}
    \begin{tabular}{lccc}
        \toprule
         & \makecell[c]{KG LP\\(MRR)} & \makecell[c]{Hom. LP\\
        (R@20)} & \makecell[c]{NC\\(Acc.)} \\
        \midrule
        \textsc{Wander} & 0.399 & 43.39 & 68.20\\
        \textsc{Wander} \tiny \textit{one task} & 0.398 & 40.37 & 67.69 \\
        \bottomrule
    \end{tabular}
    \end{wraptable}
We next investigate whether joint pretraining across tasks causes interference
or enables positive transfer. For each task, we train a single-task variant for
exactly the number of steps allocated to that task during joint pretraining. This keeps the
number of task-specific updates fixed, while the jointly trained model
additionally receives updates from the other tasks.
As shown in Table~\ref{tab:joint-training}, knowledge-graph link prediction as well as node classification are
essentially unaffected by joint training. For homogeneous link prediction, however, joint training improves
Recall@20 from \(40.37\%\) to \(43.39\%\). This suggests that joint pretraining
does not necessarily trade off task specialization and can instead yield
positive transfer across graph-learning settings. The effect is particularly
pronounced for homogeneous link prediction, which shares the link-prediction
objective with knowledge graphs while operating on the homogeneous attributed
graphs also encountered in node classification.

\subsection{Composing separately learned capabilities (Q3)}
\begin{wraptable}{l}{0.42\columnwidth}
    \vspace{-8pt}
    \centering
    \footnotesize
    \caption{KG entity prediction (MRR) with vs.\ without node features and edge types.}
    \label{tab:kg-with-features}
    \begin{tabular}{ccccc}
        \toprule
          \makecell{edge\\types} & \makecell{fea-\\tures} & \makecell[c]{Concept-\\Net100k} & \makecell[c]{WN-v1-\\WN-v4} & \makecell[c]{WN-\\18RR} \\
        \midrule
          \xmark & \xmark & 0.135 & 0.162 & 0.116 \\
          \xmark & \checkmark & 0.202 & 0.273 & 0.163 \\
          \checkmark & \xmark & \underline{0.258} & \underline{0.626} & \underline{0.550} \\
          \checkmark & \checkmark & \textbf{0.334 }& \textbf{0.637} & \textbf{0.564} \\
        \bottomrule
    \end{tabular}
    \end{wraptable}
During pretraining, \textsc{Wander} encounters node features on simple graphs and edge types on knowledge graphs, but never both simultaneously. To test whether it can combine these separately learned capabilities, we augment several knowledge graphs with node features obtained by embedding entity names using MiniLM \citep{wang2020minilm}. As shown in Table~\ref{tab:kg-with-features}, both sources of information improve entity prediction individually, while using them jointly performs best on all three benchmarks. This indicates that \textsc{Wander} can compositionally generalize to attributed, edge-typed graphs, even though this type of graph was never encountered
during training.
 
\subsection{Inference-Time Adaptation (Q4)}
\label{sec:inference_adaptation}

A distinctive property of \textsc{Wander} is that its structural computation
remains adjustable after training: we can vary the number and length of
sampled walks, as well as the sampling mechanism, without changing learned
parameters. We investigate whether this flexibility enables adaptation to
different structural requirements at inference time (\textbf{Q4}).

\begin{wrapfigure}{r}{0.4\textwidth}
    \centering
    \includegraphics[width=\linewidth]{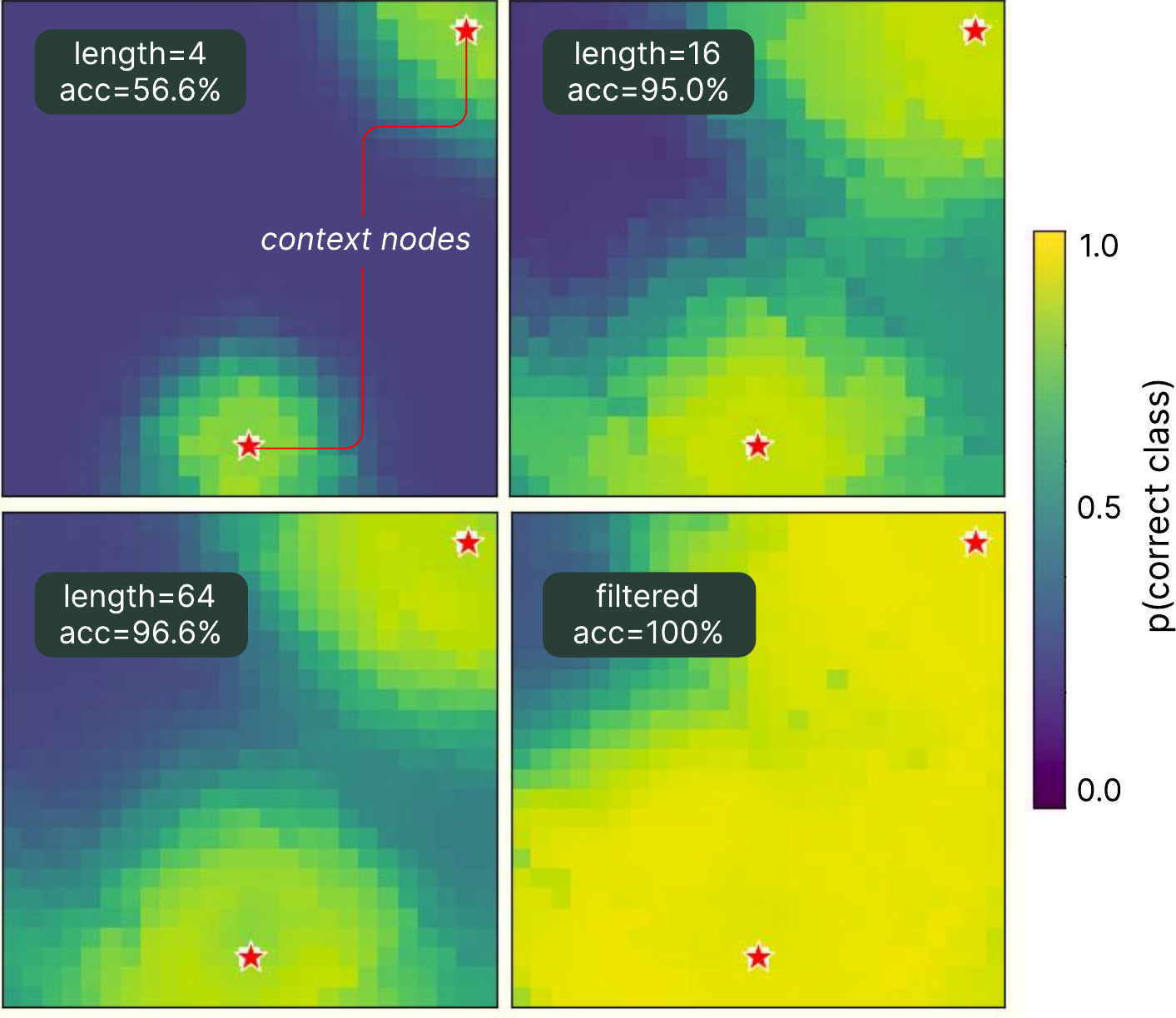}
    \caption{Predicted true-class probability on single grid taken from \textsc{Grids} for increasing walk lengths and context-aware walk filtering.}
    \label{fig:grids_experiment}
\end{wrapfigure}
\paragraph{Adapting the reasoning range.}
We first consider \textsc{Grids}, a synthetic node-classification task designed
to require long-range propagation. It consists of five disconnected
\(25\times25\) grids, one per class, with only two randomly labeled nodes per
grid; test nodes are on average 11 hops from their nearest labeled node.
Figure~\ref{fig:grids_experiment} shows the predicted true-class probability when
evaluating the same pretrained model with walk lengths \(4\), \(16\), and
\(64\). Increasing the walk length progressively propagates information from
the labeled context nodes across the graph, raising accuracy from \(56.6\%\)
at length \(4\) to \(96.6\%\) at length \(64\). At this setting,
\textsc{Wander} also substantially outperforms all GFM baselines as shown in
Table~\ref{tab:long_range_summary}.

Inference-time adaptation is not limited to changing the walk budget. As a
simple example, we additionally modify the sampling mechanism by retaining only
length-\(64\) walks that visit at least one labeled context node. This targeted
sampling further increases accuracy on \textsc{Grids} to \(100\%\), illustrating
that \textsc{Wander} can adapt not only \textit{how much} but also \textit{which} information is
collected.

\paragraph{Long-range supervision in real-world graphs.}
\begin{wraptable}{l}{0.35\columnwidth}
\vspace{-10pt}
    \centering
    \scriptsize
    \caption{Accuracy on \textsc{Grids} and four real-world graphs without node features.}
    \label{tab:long_range_summary}
    \setlength{\tabcolsep}{4pt}
    \begin{tabular}{lcc}
        \toprule
        & \textsc{Grids}
        & Real-world \\
        \midrule
        \textsc{GraphAny}
        & 20.0
        & 34.0 \\
        \textsc{NodePFN}
        & 26.6
        & 41.2 \\
        \textsc{GraphPFN}
        & 23.3
        & 60.1 \\
        \midrule
        \textbf{\textsc{Wander}}
        & 96.6
        & 76.5 \\
        \textbf{\textsc{Wander} \tiny(filt.)}
        & \textbf{100.0}
        & \textbf{78.3} \\
        \bottomrule
    \end{tabular}
\end{wraptable}

We next test whether the ability to exploit distant supervision extends beyond
the synthetic setting. We consider PubMed, Full DBLP, Coauthor CS, and
Coauthor Physics, whose training splits provide comparatively sparse label
supervision. We remove all node features,
leaving graph structure and observed training labels as the only predictive
signals. As shown in Table~\ref{tab:long_range_summary}, \textsc{Wander} achieves the
best performance across the four datasets. This suggests that the long-range
reasoning capability highlighted by \textsc{Grids} is also useful on
real-world graphs with sparse supervision.

\section{Conclusion}
We introduced \textsc{Wander}, a graph foundation model that uses random
walks as a shared interface across graph modalities and prediction tasks.
By framing graph learning as completion of a partially observed graph,
a single pretrained model supports node- and edge-level prediction on
homogeneous and multi-relational graphs. Under suitable assumptions,
\textsc{Wander} universally approximates the corresponding Bayes-optimal
predictor. A single checkpoint achieves state-of-the-art or highly
competitive performance across node classification, homogeneous link
prediction, and knowledge-graph link prediction. More broadly, our results
suggest that graph foundation models need not be organized around a fixed
modality or task: a shared interface can support
capabilities learned in one setting and reused in another, opening a path
toward composing them across settings.

The current pretraining prior remains limited: most attributed graphs
are synthetic, and multi-relational pretraining uses only three real-world
knowledge graphs. Inference can also be costly, particularly with large
random-walk budgets and repeated sampling. Reducing these budgets speeds
up prediction without retraining, but may reduce predictive performance.
Future work should broaden the prior, particularly toward graphs combining
node features and edge types, extend the framework to node regression and
graph classification, and develop more efficient, adaptive sampling strategies.

\clearpage
\section*{Acknowledgments}
AITHYRA is supported by the Austrian Academy of Sciences and the not-for-profit Boehringer Ingelheim Stiftung.
\bibliography{references}
\bibliographystyle{plainnat}
\clearpage
\appendix
\section{Methodology details}\label{app::method}
We here provide further details on the implementation of \textsc{Wander} used for our experiments. Note that the statements in Section~\ref{sec::theory} refer to the more general framework described in Section~\ref{sec::method}.
\paragraph{Initialization.}
At the beginning of each forward pass, all hidden states are initialized independently of the input graph identities. All relation types, including a \textit{start} relation used at the beginning of random walks, receive the same learnable initial embedding. Structure tokens are initialized with one of three learnable embeddings according to the node's structural relation to the link-prediction query head, when present: identical, neighboring, or neither. For node classification, where no query head is defined, all nodes receive the \emph{neither} embedding. Feature tokens are initialized by applying a two-layer MLP to their scalar feature value. Finally, label tokens are initialized with one of three learnable embeddings, corresponding to an observed positive label, an observed negative label, or an unknown label.

\paragraph{Intra-node update.}
The intra-node module exchanges information between the channels of each node independently. For a node $v\in V$, let
\[
\mathbf{V}^{(l)}_v
:=
\bigl(
    \mathbf{v}^{(l)}(v,1),\ldots,
    \mathbf{v}^{(l)}(v,d_X+d_Y+1)
\bigr)
\in\mathbb{R}^{(d_X+d_Y+1)\times d}
\]
denote the sequence of its feature, label, and structure tokens. We update this sequence using a pre-normalized residual multi-head attention layer,
\[
\mathbf{V}^{(l)\prime}_v
=
\mathbf{V}^{(l)}_v
+
\operatorname{MHA}_{\mathrm{RoPE}}
\!\left(
    \operatorname{RMSNorm}(\mathbf{V}^{(l)}_v)
\right),
\]
where the same module is applied independently to every $v\in V$. We use rotary positional embeddings (RoPE) \citep{su2024roformer} to associate the individual channels with distinct positions and enable the model to distinguish two channels with distributionally similar values. Since fixed positional encodings would break equivariance with respect to permutations of feature and label channels, we uniformly randomize the ordering of feature and label channels in each forward pass and assign RoPE positions only after this permutation. The resulting model therefore remains equivariant in distribution with respect to feature and label permutations. We use RMSNorm \citep{zhang2019root}.

\paragraph{ICL-update.}
The attention-to-context module instead exchanges information between nodes while keeping channels separate. For each channel $i\in[d_X+d_Y+1]$, let
\[
\mathbf{V}^{(l)\prime}_i
:=
\bigl(
    \mathbf{v}^{(l)\prime}(v,i)
\bigr)_{v\in V}
\in\mathbb{R}^{|V|\times d}.
\]
We independently update each channel using a pre-normalized Transformer block,
\[
\widetilde{\mathbf{V}}^{(l)}_i
=
\mathbf{V}^{(l)\prime}_i
+
\operatorname{MHA}_{\mid V_{\mathrm{context}}}
\!\left(
    \operatorname{RMSNorm}(\mathbf{V}^{(l)\prime}_i)
\right),
\]
followed by
\[
\mathbf{V}^{(l)\prime\prime}_i
=
\widetilde{\mathbf{V}}^{(l)}_i
+
\operatorname{FFN}
\!\left(
    \operatorname{RMSNorm}(\widetilde{\mathbf{V}}^{(l)}_i)
\right).
\]
Here, the attention is masked such that every node may attend only to keys and values corresponding to nodes in $V_{\mathrm{context}}$, while queries are computed for all nodes. We use separate attention modules for the three channel types (structural, feature, and label) while sharing the respective module across all channels of the same type.  The feed-forward network (FFN) is implemented as a SwiGLU~\citep{shazeer2020glu} block.

We next describe the random walk algorithm and then how we sample walks during training. Sampling during inference is explained in Appendix \ref{app::exp}. 

\paragraph{Random walk algorithm.}
Let $v_0\in V$ be a starting node given as input to the random walk algorithm. The algorithm constructs a walk $\eta=(v_0,r_1^{\alpha_1},v_1,\ldots,r_S^{\alpha_S},v_S)$
iteratively. At step $j\geq 1$, given the current node $v_{j-1}$, we define its set of  neighbors as
\[
N(v_{j-1})
:=
\{u\in V \mid \exists r\in R:\,(v_{j-1},r,u)\in E \ \lor\ (u,r,v_{j-1})\in E\}.
\]
Note that we include incoming neighbors in this definition. To avoid immediate backtracking whenever possible, we define the set of admissible next nodes as
\begin{align}\label{non-backtracking}
    N_j :=
\begin{cases}
N(v_0), & \text{if }j=1,\\
N(v_{j-1})\setminus\{v_{j-2}\},
&\text{if } j>1 \text{ and } N(v_{j-1})\setminus\{v_{j-2}\}\neq\emptyset,\\
N(v_{j-1}),
&\text{if } j>1 \text{ and } N(v_{j-1})=\{v_{j-2}\}.
\end{cases}
\end{align}
We first sample the next node uniformly, $v_j \sim \operatorname{Unif}(N_j)$.
Given $v_j$, let
\[
R(v_{j-1},v_j)
:=
\{(r,1)\mid (v_{j-1},r,v_j)\in E\}\cup \{(r,-1)\mid (v_j,r,v_{j-1})\in E\}
\]
denote the set of relation--direction pairs connecting $v_{j-1}$ to $v_j$. We then sample $(r_j,\alpha_j) \sim \operatorname{Unif}\bigl(R(v_{j-1},v_j)\bigr)$.

Thus, neighboring nodes are sampled independently of the number of relation types connecting them, and immediate backtracking occurs only if the previously visited node is the only available neighbor.

\paragraph{Choice of random walks.}
Let $K_{\text{base}}$ denote the base walk budget, which may depend on the input graph
(see Appendix~\ref{app::implementation}). For each structural update, we
sample $3K_{\text{base}}$ walks, with their starting locations chosen according to the
prediction task.

For a link-prediction query $q=(h_q,r_q,?)$, we follow
\citet{kim2025flock} and divide the walks into three equally sized groups.
The first $K_{\text{base}}$ walks start at the query head $h_q$, providing structural
context local to the query. For the next $K_{\text{base}}$ walks, we first sample a
relation $s\sim\mathrm{Unif}(R)$ and then an edge
$(v,s,u)$ uniformly among the observed edges of type $s$; this edge forms
the first step of the walk, after which the usual random-walk transition
rule is applied. Sampling the relation before the edge prevents frequent
relation types from dominating the walk budget and ensures coverage of
broad relations. The remaining $K_{\text{base}}$ walks
start from nodes sampled uniformly from $V$, providing broader,
query-independent coverage of the graph.

For node prediction query $V_q$, we likewise sample $3K_{\text{base}}$ walks. We start $\lceil 3K_{\text{base}}/2\rceil$ walks from nodes
sampled uniformly from $V_q$ and the remaining $\lfloor 3K_{\text{base}}/2\rfloor$ from
nodes sampled uniformly from $V_{\mathrm{context}}$. 

\paragraph{Recording protocol.}
Let $\eta=(v_0,r_1^{\alpha_1},v_1,\ldots,r_S^{\alpha_S},v_S)$ be a sampled walk, and let
$V(\eta)\subseteq V$ and $R(\eta)\subseteq R$ denote the sets of nodes and
relation types occurring in $\eta$, respectively.
We define bijections
\[
    \operatorname{anon}_{\eta,V}:V(\eta)\to[|V(\eta)|],
    \qquad
    \operatorname{anon}_{\eta,R}:R(\eta)\to[|R(\eta)|],
\]
which assign identifiers according to the order of first appearance in
$\eta$.
For each walk position, we first construct a shared walk embedding that is
independent of the channel. At layer $l$, the shared embedding associated
with node $v$, incoming relation $r\in R\cup\{\texttt{start}\}$, direction $\alpha$ and query
$q$ is
\begin{align}
    \operatorname{step}^{(l)}_{\eta}(v,r^{\alpha},q)
    ={}&
    e^{(l)}_{\mathrm{node}}\!\left(\operatorname{anon}_{\eta,V}(v)\right)
    + e^{(l)}_{\mathrm{rel}}\!\left(\operatorname{anon}_{\eta,R}(r)\right)
    + e^{(l)}_{\text{dir}}(\alpha)
    \nonumber\\
    &+
    e^{(l)}_{\mathrm{queryhead}}\!\left(\mathbf{1}_{h_q=v}\right)
    + e^{(l)}_{\mathrm{queryrel}}\!\left(\mathbf{1}_{r_q=r}\right)
    + W^{(l)}_{\mathrm{rel}}\,\mathbf{r}^{(l)}(r),
\end{align}
where the query
indicators are set to zero whenever the corresponding query component is
not defined (i.e., for a label prediction query $V_q\subset V$). For the relation \texttt{start} we set direction $\alpha=1$.
The embedding tables $e^{(l)}_{\ast}$ and projection
$W^{(l)}_{\mathrm{rel}}$ are layer-specific.

For each channel $i\in[d_X+d_Y+1]$, we combine this shared walk embedding
with the current hidden state of the corresponding node--channel pair:
\[
    \mathbf{z}^{(l)}_{\eta,v,r,\alpha,i}
    =
    W^{(l)}_{\mathrm{in},\tau(i)}
    \mathbf{v}^{(l)\prime\prime}(v,i)
    +
    g^{(l)}_{\tau(i)}
    \operatorname{step}^{(l)}_{\eta}(v,r^{\alpha},q),
\]
where
$\tau(i)\in\{\mathrm{feature},\mathrm{label},\mathrm{structure}\}$
denotes the channel type.
Both $W^{(l)}_{\mathrm{in},\tau(i)}$ and the learnable scalar gate
$g^{(l)}_{\tau(i)}$ are layer- and channel-type-specific, but shared across
all channels of the same type.

For each walk $\eta$ and channel $i$, the position embeddings are stacked
in walk order to obtain
\[
    Z^{(l)}_{\eta,i}
    =
    \left(
        \mathbf{z}^{(l)}_{\eta,v_0,\mathrm{start},i},
        \mathbf{z}^{(l)}_{\eta,v_1,r_1,\alpha_1,i},
        \ldots,
        \mathbf{z}^{(l)}_{\eta,v_S,r_S,\alpha_S,i}
    \right).
\]
Nodes and relation types may occur multiple times in this sequence whenever
they are revisited by the walk.

\paragraph{Sequence processor.} Each sequence is then processed by a bidirectional GRU \citep{cho2014learning} followed by a
SwiGLU feed-forward network, both with pre-normalization and residual
connections:
\begin{align*}
    \widetilde Z^{(l)}_{\eta,i}
    &=
    Z^{(l)}_{\eta,i}
    +
    W^{(l)}_{\mathrm{GRU}}
    \operatorname{BiGRU}^{(l)}
    \left(
        \operatorname{RMSNorm}(Z^{(l)}_{\eta,i})
    \right),
    \\
    Z^{(l)\prime}_{\eta,i}
    &=
    \widetilde Z^{(l)}_{\eta,i}
    +
    \operatorname{FFN}^{(l)}
    \left(
        \operatorname{RMSNorm}(\widetilde Z^{(l)}_{\eta,i})
    \right).
\end{align*}
We use separate modules for the three channel types (structural, feature, and label) while sharing the respective module across all channels of the same type. Finally, linear output projections are applied independently at every
sequence position. Let $\mathbf{z}'\in\mathbb{R}^{d}$ denote the processed
representation at a position associated with node $v$ and channel $i$.
For every channel type, we produce a multi-head node--channel update proposal
and confidence vector,
\begin{align*}
    \Delta_{v,i}
    &=
    \operatorname{reshape}_{d_H\times H}
    \left(
        W^{(l)}_{\Delta,\tau(i)}\mathbf{z}'
    \right),
    &
    a_{v,i}
    &=
    W^{(l)}_{a,\tau(i)}\mathbf{z}',
\end{align*}
where $d_H\cdot H=d$, such that
$\Delta_{v,i}\in\mathbb{R}^{d_H\times H}$ and
$a_{v,i}\in\mathbb{R}^{H}$.
The output projections are layer- and channel-type-specific and are shared
across channels of the same type.

For the structure channel
$i=d_X+d_Y+1$, the same processed position additionally
proposes an update to its incoming relation $r\in
R\cup\{\mathrm{start}\}$:
\begin{align}
    \Delta_r
    &=
    \operatorname{reshape}_{d_H\times H}
    \left(
        W^{(l)}_{\Delta,\mathrm{rel}}\mathbf{z}'
    \right),
    &
    a_r
    &=
    W^{(l)}_{a,\mathrm{rel}}\mathbf{z}'.
\end{align}
Thus, every position proposes an update to its corresponding node--channel
state, while structure-channel positions additionally propose an update to
the hidden state of their incoming relation type. Since nodes and relation
types may occur repeatedly within and across sampled walks, each hidden
state can receive multiple proposals.

\paragraph{Consensus protocol.}The sequence processor yields for all pairs $(v,i)\in V\times [d_X+d_Y+1]$ as well as relation types $r\in R\cup\{\texttt{start}\}$ a (possibly empty) set $\mathcal{P}=\{(\Delta^{(m)},a^{(m)})\}_{m=1}^M$, with
$\Delta^{(m)}=(\Delta^{(m,1)},\ldots,\Delta^{(m,H)})$ and $a^{(m)}=(a^{(m,1)},\ldots,a^{(m,H)})$, of update proposals and confidence scores.
The consensus protocol then forms a multi-head softmax-normalized weighted average over each set of update proposals to decide on the final update:
\[
c(\mathcal{P})
:=
\bigg\Vert_{h=1}^{H}
\sum_{m'=1}^{M}
\frac{\exp({a^{(m',h)})}}{\sum \exp(a^{(m,h)})}\Delta^{(m',h)},
\]
where $\Vert$ denotes concatenation. For $\mathcal P=\emptyset$, we set
$c(\mathcal P)=\mathbf 0$. Finally, hidden states are updated in a residual way, i.e., $\textbf{v}^{(l+1)}(v,i) =\textbf{v}^{(l)\prime\prime}(v,i) + c(\mathcal{P}_{v,i}) $ and $\textbf{r}^{(l+1)}(r) =\textbf{r}^{(l)}(r) + c(\mathcal{P}_{r}) $ for all $v\in V$, $i\in [d_X+d_Y+1]$ and $r\in R\cup\{\texttt{start}\}$.
\paragraph{Readout.}
For node classification, we apply a shared two-layer MLP
$\text{out}_{\mathrm{node}}:\mathbb{R}^d\rightarrow\mathbb{R}$ independently to
each final label token. Specifically, for a query node $v$ and class
$c\in[d_Y]$, we obtain the class logit
\[
\ell_{v,c}
=
\text{out}_{\mathrm{node}}\!\left(
    \mathbf v^{(L)}(v,d_X+c)
\right).
\]
A softmax over the $d_Y$ class logits yields the corresponding class
probabilities.

For link prediction with query $q=(h_q,r_q,?)$, we score every candidate
tail $t\in V$ using its final structure token together with the final state
of the query relation. A shared two-layer MLP
$\text{out}_{\mathrm{link}}:\mathbb{R}^{d}\rightarrow\mathbb{R}$ produces
\[
\ell_{h_q,r_q,t}
=
\text{out}_{\mathrm{link}}\!\left(
    \mathbf v^{(L)}(t,d_X+d_Y+1)
    +
    \mathbf r^{(L)}(r_q)
\right).
\]
 We convert this logit into a link
probability
$p_\theta(h_q,r_q,t)=\sigma(\ell_{h_q,r_q,t})$.

\paragraph{Loss.}
For node-classification training steps, we minimize the cross-entropy loss over
the predicted class logits of all query nodes $V_q$. The loss is
\[
\mathcal L_{\mathrm{NC}}
=
-\frac{1}{|V_q|}
\sum_{v\in V_q}
\log
\frac{\exp(\ell_{v,Y_v})}
     {\sum_{c=1}^{d_Y}\exp(\ell_{v,c})}.
\]

For link-prediction steps, each positive triple $(h,r,t)$ is paired
with $k$ negative tails $t^-_1,\ldots,t^-_k$. Following
\citet{kim2025flock}, we use self-adversarial negative weighting, which
assigns larger weight to negative candidates receiving larger model
scores. Writing $\tau_{\mathrm{adv}}>0$ for the adversarial temperature,
we define
\[
w_i^{\mathrm{adv}}
=
\frac{
    \exp\!\left(\ell_{h,r,t_i^-}/\tau_{\mathrm{adv}}\right)
}{
    \sum_{j=1}^{k}
    \exp\!\left(\ell_{h,r,t_j^-}/\tau_{\mathrm{adv}}\right)
}.
\]
The resulting binary cross-entropy loss is
\[
\mathcal L_{\mathrm{LP}}
=
-\log p_\theta(h,r,t)
-
\sum_{i=1}^{k}
w_i^{\mathrm{adv}}
\log\!\left(1-p_\theta(h,r,t_i^-)\right).
\]
The losses are averaged over all link-prediction queries in the batch. Note that, due to the self-adversarial weighting used for link prediction, the practical objective no longer has \(M_p^\star\) exactly as its population minimizer.

\section{Comparison to related (graph) foundation models}
\label{app:comparison}
While the framework stated in Section~\ref{sec::method} is stated more generally, in our implementation we instantiate the intra-node processor with self-attention and the ICL update with masked attention (see Appendix~\ref{app::method}). The resulting first two updates therefore follow the two-dimensional attention pattern common to recent tabular foundation models, alternating interactions within samples and across samples \citep{tabpfnv2}. 

In the node-classification setting, this also yields an architectural connection to \textsc{GraphPFN}~\citep{eremeev2025graphpfn}. Both architectures combine within-node feature processing and in-context information exchange with a graph-structural update. However, \textsc{GraphPFN} incorporates structure through adjacency-masked attention over neighbors, whereas \textsc{Wander} uses sampled random walks and a shared sequence processor as its structural interface. This distinction provides two notable advantages. First, random walks allow \textsc{Wander} to capture structural information beyond the expressive scope of standard message-passing architectures, as formalized in Theorem~\ref{thm:two-layer-universality} and empirically investigated in Section~\ref{app::cycles}. Second, the amount of structural information available to the model can be adjusted at inference time by varying the number and length of sampled walks, without modifying the learned parameters.

In the setting of knowledge graph link prediction, \textsc{Wander} is instead closely related to the knowledge graph foundation model \textsc{Flock}~\citep{kim2025flock}. While \textsc{Flock} likewise uses random walks as its primary structural inductive bias, it does not include an explicit in-context update that allows query representations to attend to context nodes. 

\section{Theoretical analysis}\label{app::proofs}
In this section, we provide proofs of the statements from Sections \ref{sec::bayes-predictor} and \ref{sec::theory}.

\subsection{Bayes-Optimal Predictor}
\label{app:bayes-optimal}

We start by proving Theorem~\ref{thm:prior-expansion}.

\begin{proof}[Proof of Theorem~\ref{thm:prior-expansion}]
For an observed graph \(G_{\mathrm{obs}}\), let
\(Z_p(G_{\mathrm{obs}})\) and \(Z_r(G_{\mathrm{obs}})\)
denote its marginal likelihood (probability mass or density) under
\(p\) and \(r\), respectively.
For \(p_\lambda=(1-\lambda)p+\lambda r\), its marginal likelihood is
\[
Z_{p_\lambda}(G_{\mathrm{obs}})
=
(1-\lambda)Z_p(G_{\mathrm{obs}})
+
\lambda Z_r(G_{\mathrm{obs}}).
\]
Hence, whenever \(Z_p(G_{\mathrm{obs}})>0\), the posterior under the mixture
prior can be written as
\[
p_\lambda(H\mid G_{\mathrm{obs}})
=
\bigl(1-\beta_{\lambda,r}(G_{\mathrm{obs}})\bigr)
p(H\mid G_{\mathrm{obs}})
+
\beta_{\lambda,r}(G_{\mathrm{obs}})
r(H\mid G_{\mathrm{obs}}),
\]
where
\[
\beta_{\lambda,r}(G_{\mathrm{obs}})
=
\frac{\lambda Z_r(G_{\mathrm{obs}})}
{(1-\lambda)Z_p(G_{\mathrm{obs}})
 + \lambda Z_r(G_{\mathrm{obs}})}.
\]
If \(Z_r(G_{\mathrm{obs}})=0\), then
\(\beta_{\lambda,r}(G_{\mathrm{obs}})=0\), and by convention
\(M_r^\star=M_p^\star\) on such observations. Taking the posterior
expectation of the query target therefore gives
\[
M_{p_\lambda}^\star(G_{\mathrm{obs}},q)
=
\bigl(1-\beta_{\lambda,r}(G_{\mathrm{obs}})\bigr)
M_p^\star(G_{\mathrm{obs}},q)
+
\beta_{\lambda,r}(G_{\mathrm{obs}})
M_r^\star(G_{\mathrm{obs}},q).
\]

For brevity, write
\[
\beta := \beta_{\lambda,r}(G_{\mathrm{obs}}),\qquad
M_p := M_p^\star(G_{\mathrm{obs}},q),\qquad
M_r := M_r^\star(G_{\mathrm{obs}},q),
\]
and \(T:=T_q(G^\star)\).
By convexity of \(\ell_q\) in its prediction argument,
\[
\ell_q\bigl((1-\beta)M_p+\beta M_r,T\bigr)
\leq
(1-\beta)\ell_q(M_p,T)
+
\beta\ell_q(M_r,T).
\]
Subtracting \(\ell_q(M_p,T)\) and taking expectation under \(\nu\) yields
\begin{align*}
R_\nu(M_{p_\lambda}^\star)-R_\nu(M_p^\star)
&\leq
\mathbb{E}_\nu\!\left[
    \beta_{\lambda,r}(G_{\mathrm{obs}})
    \left(
        \ell_q(M_r^\star(G_{\mathrm{obs}},q),T_q(G^\star))
        -
        \ell_q(M_p^\star(G_{\mathrm{obs}},q),T_q(G^\star))
    \right)
\right] \\
&=
\mathbb{E}_\nu\!\left[
    \beta_{\lambda,r}(G_{\mathrm{obs}})
    \,
    \mathbb{E}_\nu\!\left[
        \begin{aligned}
        &\ell_q(M_r^\star(G_{\mathrm{obs}},q),T_q(G^\star)) \\
        &\quad -
        \ell_q(M_p^\star(G_{\mathrm{obs}},q),T_q(G^\star))
        \end{aligned}
        \,\middle|\, G_{\mathrm{obs}},q
    \right]
\right] \\
&=
\mathbb{E}_\nu\!\left[
    \beta_{\lambda,r}(G_{\mathrm{obs}})
    \Delta_\nu(G_{\mathrm{obs}},q)
\right].
\end{align*}
where the second equality uses that
\(\beta_{\lambda,r}(G_{\mathrm{obs}})\) depends only on the observation.
The support assumption on \(p\) ensures that the above quantities are
defined \(\nu\)-almost surely, which proves the claim.
\end{proof}

The bound makes explicit that an added graph family matters only to the
extent that it receives posterior weight on the target observation. In
particular, adding \(r\) cannot increase the target risk if
\(\Delta_\nu(G_{\mathrm{obs}},q)>0\) only on observations for which
\(\beta_{\lambda,r}(G_{\mathrm{obs}})=0\). More generally, beneficial and
harmful contributions are weighted by the posterior relevance of the added
family.
\begin{proof}[Proof of Proposition \ref{prop:bayes-is-equivariant}]
Let $\pi=(\pi_V,\pi_R,\pi_X,\pi_Y)$. Since $p$ is a distribution over
isomorphism classes, $p(\pi(H))=p(H)$ for every $H$. Moreover,
\[
G \preceq H
\quad\Longleftrightarrow\quad
\pi(G) \preceq \pi(H),
\]
so $H\mapsto\pi(H)$ gives a bijection between the completions of $G$ and
those of $\pi(G)$. Hence
\[
p(\pi(H)\mid \pi(G)) = p(H\mid G).
\]
By definition of the query targets,
\[
T_{\pi(q)}(\pi(H))
=
\pi_{\mathrm{out}}\!\left(T_q(H)\right).
\]
Therefore,
\begin{align*}
M_p^\star(\pi(G),\pi(q))
&=
\mathbb{E}_{H\sim p(\cdot\mid \pi(G))}
    [T_{\pi(q)}(H)] \\
&=
\mathbb{E}_{H\sim p(\cdot\mid G)}
    [T_{\pi(q)}(\pi(H))] \\
&=
\pi_{\mathrm{out}}\!\left(
    \mathbb{E}_{H\sim p(\cdot\mid G)}[T_q(H)]
\right) \\
&=
\pi_{\mathrm{out}}\!\left(M_p^\star(G,q)\right),
\end{align*}
which proves the claim.
\end{proof}

\subsection{Universality}
Before we can prove Theorem \ref{thm:two-layer-universality}, we need to formalize the notion of \textit{admissible priors.}

\paragraph{Admissible graph prior.}
Let $p$ be a probability distribution over the isomorphism
classes of $\mathbb{G}_{n,m,b}^\star$.
For every isomorphism class $G$, we separate its discrete
attributes from its continuous attributes. Denote by
$\kappa_G$ its \emph{discrete isomorphism class}, which fixes
the graph structure, relation types, feature dimension, and
all discrete-valued attributes, and by
\[
    z_G \in \mathcal{Z}_{\kappa_G}
    \subseteq \mathbb{R}^{d_{\kappa_G}}
\]
its continuous attributes. We can then write the prior as
\[
    p(G)=p(\kappa_G)\,p_{\kappa_G}(z_G),
    \qquad
    p_{\kappa_G}(z_G):=p(z_G\mid\kappa_G).
\]
We call $p$ an \emph{admissible graph prior} if, for every
discrete isomorphism class $\kappa$ with $p(\kappa)>0$,
the domain $\mathcal{Z}_\kappa$ is a compact product domain
and $p_\kappa$ admits a continuous and strictly positive
density on $\mathcal{Z}_\kappa$. Moreover, for each fixed
discrete observed graph-query class $\bar{\kappa}$, all
compatible positive-mass completions share the same
observed-attribute domain: after partitioning $z=(x,u)$
into observed and unobserved attributes,
\[
    \mathcal{Z}_\kappa
    =
    \mathcal{X}_{\bar{\kappa}}
    \times \mathcal{U}_{\bar{\kappa},\kappa}.
\]
Throughout, we fix an observation mechanism that reveals
a subset of attributes, with selection probabilities
depending only on the discrete completion, not on the
continuous attributes.

\paragraph{Continuous Bayes-optimal predictors.}
We now show that the Bayes-optimal predictor introduced
in Section~\ref{sec::preliminary} is well-behaved under
admissible priors. This is required later to show that
it can be approximated.

\begin{lemma}\label{lem:bayes-continuity}
Let $p$ be an admissible graph prior. Then, within every
fixed discrete isomorphism class of observed graph-query
pairs, the Bayes-optimal predictor
\[
    M_p^\star(G_{\mathrm{obs}},q)
    =
    \mathbb{E}_{G^\star \sim p(\cdot\mid G_{\mathrm{obs}})}
    \left[T_q(G^\star)\right]
\]
is continuous in the continuous attributes observed in
$G_{\mathrm{obs}}$ for link prediction and node classification.
\end{lemma}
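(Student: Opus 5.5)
We will express the Bayes-optimal predictor as a finite mixture of ratios of integrals and show that each integral is continuous in the observed continuous attributes $x$.

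Fix a discrete observed graph-query class $\bar\kappa$. Its positive-mass completions form a finite set $\{\kappa_1,\dots,\kappa_J\}$, since $\mathbb{G}^\star_{n,m,b}$ has finitely many discrete isomorphism classes. By admissibility, $\mathcal{Z}_{\kappa_j}=\mathcal{X}_{\bar\kappa}\times\mathcal{U}_{\bar\kappa,\kappa_j}$, and the observation mechanism selects attributes with a probability $s_j>0$ that depends only on $\kappa_j$ (possibly including a combinatorial multiplicity factor for the ways $\kappa_j$ maps onto $\bar\kappa$).

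Bayes' rule then gives the joint density of a completion $(\kappa_j,u)$ and the observation $x$ as proportional to $p(\kappa_j)\,s_j\,p_{\kappa_j}(x,u)$. Hence
\[
M_p^\star(G_{\mathrm{obs}},q)=\frac{\sum_{j} p(\kappa_j)s_j\int_{\mathcal U_{\bar\kappa,\kappa_j}} T_q(\kappa_j,x,u)\,p_{\kappa_j}(x,u)\,du}{\sum_j p(\kappa_j)s_j\int_{\mathcal U_{\bar\kappa,\kappa_j}} p_{\kappa_j}(x,u)\,du}.
\]
For link prediction, $T_q$ depends only on the discrete edge set of $\kappa_j$, so it is constant in $(x,u)$. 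For node classification, $T_q$ is determined by the labels; these are discrete, so $T_q$ is again constant on each $\kappa_j$. If some component were continuous instead, $T_q$ would still be bounded.

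Continuity of the ratio follows from three facts.
\begin{itemize}
\item Each $p_{\kappa_j}$ is continuous on the compact set $\mathcal{Z}_{\kappa_j}$, hence uniformly continuous and bounded.
\item Consequently $x\mapsto\int_{\mathcal U}p_{\kappa_j}(x,u)\,du$ is continuous, by dominated convergence or directly from uniform continuity, since $\mathcal U$ is compact and has finite measure. The same holds with the bounded factor $T_q$ included.
\item The denominator is strictly positive for every $x\in\mathcal X_{\bar\kappa}$, because $p_{\kappa_j}>0$ everywhere on the product domain and $\mathcal U$ has positive measure. This is where the shared product-domain assumption is essential.
\end{itemize}
A quotient of continuous functions with a nonvanishing denominator is continuous. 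A degenerate $\mathcal U$ (no unobserved continuous attributes) is handled by taking counting measure on a point.

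The main obstacle is making the bookkeeping of the posterior rigorous, rather than the analysis itself. Three points need care: the discrete completions must be enumerated consistently with the isomorphism-class identification, the relabeling multiplicities must be absorbed into the constants $s_j$, and these constants must be checked to be independent of $x$. That independence is exactly what the assumption that selection probabilities do not depend on continuous attributes guarantees.
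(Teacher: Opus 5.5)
Your proposal is correct and follows essentially the same route as the paper: fix $\bar\kappa$, use Bayes' rule to write $M_p^\star$ as a ratio of finite sums of $x$-independent positive weights times the marginal densities $\int_{\mathcal U} p_\kappa(x,u)\,du$, observe that for link prediction and node classification the target is constant on each discrete completion, and conclude from continuity of these marginals on the shared domain $\mathcal{X}_{\bar\kappa}$ and strict positivity of the denominator. Your additional care (uniform continuity on the compact domain, positive measure of $\mathcal U$, and absorbing relabeling multiplicities into the constants) only makes explicit what the paper leaves implicit.
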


\begin{proof}
Fix a discrete observed graph-query class $\bar{\kappa}$
and write $x\in\mathcal{X}_{\bar{\kappa}}$ for its observed
continuous attributes. Let $\mathcal{K}$ be the finite,
nonempty set of positive-mass discrete completions that
can produce this observation. For each $\kappa\in\mathcal{K}$,
define
\[
    p_{\kappa,X}(x)
    :=
    \int_{\mathcal{U}_{\bar{\kappa},\kappa}}
    p_\kappa(x,u)\,du.
\]
Continuity on the compact product domain implies that
$p_{\kappa,X}$ is continuous on the common domain
$\mathcal{X}_{\bar{\kappa}}$. Strict positivity of
$p_\kappa$ implies $p_{\kappa,X}(x)>0$ there.

Let $a_\kappa>0$ denote $p(\kappa)$ multiplied by the
probability of the fixed discrete observation given
$\kappa$. For link prediction and node classification,
each target coordinate is determined by the discrete
completion; denote its value by $c_{\kappa,j}\in\{0,1\}$.
Bayes' rule gives
\[
    M_{p,j}^\star(x,q)
    =
    \frac{
        \sum_{\kappa\in\mathcal{K}}
        a_\kappa c_{\kappa,j}p_{\kappa,X}(x)
    }{
        \sum_{\kappa\in\mathcal{K}}
        a_\kappa p_{\kappa,X}(x)
    }.
\]
The numerator and denominator are finite sums of continuous
functions on $\mathcal{X}_{\bar{\kappa}}$, and the denominator
is strictly positive there. Hence every output coordinate
is continuous.
\end{proof}
\paragraph{Proof of Theorem \ref{thm:two-layer-universality}.} 
We are now able to prove Theorem \ref{thm:two-layer-universality} from Section~\ref{sec::theory}, here stated in a more detailed form. Note that we prove universality of the \textsc{Wander} \textit{framework} under the assumptions below. This does not necessarily correspond to the implementation details given in Appendix~\ref{app::method}.
\begin{proposition3*}
Fix $n,m,b\in\mathbb{N}$. Assume a \textsc{Wander} configuration with at least two layers,
a walk length $S$ of at least $n$, and which fulfills the following assumptions on the random walk sampler and model capacity:
\begin{enumerate}
    \item[\textnormal{(A1)}] each $v\in V$ and each $r\in R$ has positive probability of being chosen as the starting node or the first traversed relation type of a walk, respectively, every non-backtracking walk of length at most $S$ has positive probability of occurring as a prefix of a sampled walk and all walks are sampled independently and have probabilities bounded away from zero by constants depending only on $n$ and $m$;
    \item[\textnormal{(A2)}] after the intra-node update, every node token
    can retain an injective encoding of the complete collection of tokens of
    that node;
    \item[\textnormal{(A3)}] at every sequence position, the sequence
    processor can retain an injective encoding of the complete recorded walk
    together with the distinguished position;
    \item[\textnormal{(A4)}] the consensus protocol can retain an injective
    encoding of the multiset of update proposals associated with a hidden
    state, while preserving its previous state; and
    \item[\textnormal{(A5)}] the output head is a universal approximator of
    continuous functions on compact domains.
\end{enumerate}
Then \textsc{Wander} is a universal graph prior approximator for graph priors over $\mathbb{G}_{n,m,b}^\star$.
\end{proposition3*}

Note that the context-attention module may be chosen as the identity and
plays no role in this argument.

We start with a simple observation, which will help us in proving that with high probability, random walks break all symmetries.

\begin{lemma}[Random counts break finite symmetries]
\label{lem:random-counts}
Let $U$ be a finite set and let
$X_1,\ldots,X_K$ be i.i.d.\ samples from a distribution $p$ satisfying
$p(u)>0$ for every $u\in U$. For $u\in U$, define
\[
    C_u^{(K)}
    :=
    \sum_{j=1}^K \mathbf{1}_{\{X_j=u\}}.
\]
Then
\[
    \Pr\!\left[
        C_u^{(K)}\neq C_{u'}^{(K)}
        \text{ for all distinct }u,u'\in U
    \right]
    \longrightarrow 1
    \qquad\text{as }K\to\infty.
\]
\end{lemma}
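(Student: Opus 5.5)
The plan is to reduce the claim to a finite union bound over pairs, and to handle each pair with the law of large numbers. Since $U$ is finite, it suffices to show that for each fixed pair of distinct $u,u'\in U$,
\[
\Pr\!\left[C_u^{(K)}=C_{u'}^{(K)}\right]\longrightarrow 0 \qquad (K\to\infty).
\]
Then
\[
\Pr\!\left[\exists\, u\neq u':\ C_u^{(K)}=C_{u'}^{(K)}\right]\le\sum_{u\neq u'}\Pr\!\left[C_u^{(K)}=C_{u'}^{(K)}\right],
\]
and this is a sum of at most $|U|^2$ terms, each tending to zero.

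Fix a distinct pair $u,u'$. I would write the difference of counts as a sum of i.i.d.\ increments,
\[
D_K:=C_u^{(K)}-C_{u'}^{(K)}=\sum_{j=1}^K \xi_j,\qquad \xi_j:=\mathbf 1_{\{X_j=u\}}-\mathbf 1_{\{X_j=u'\}}\in\{-1,0,1\}.
\]
The increments satisfy $\Pr[\xi_j=1]=p(u)>0$ and $\Pr[\xi_j=-1]=p(u')>0$. I would split into two cases according to whether $p(u)=p(u')$.

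If $p(u)\neq p(u')$, then $\mathbb E\xi_j=p(u)-p(u')\neq 0$. By the weak law of large numbers, $D_K/K$ converges in probability to this nonzero mean, so $\Pr[D_K=0]\to 0$.

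If $p(u)=p(u')$, the increments are centered but non-degenerate, with variance $\sigma^2=2p(u)>0$, and I must show that $\Pr[D_K=0]\to0$. This is the main step, since it is a local rather than a global statement. The cleanest route is a concentration-function bound. I would condition on the number $N_K$ of indices $j$ with $X_j\in\{u,u'\}$. Given $N_K=N$, the walk $D_K$ is a simple symmetric $\pm1$ walk of $N$ steps, so
\[
\Pr[D_K=0\mid N_K=N]\le\binom{N}{\lfloor N/2\rfloor}2^{-N}=O\!\left(N^{-1/2}\right).
\]
Since $N_K\sim\mathrm{Bin}(K,2p(u))$, we have $N_K\to\infty$ in probability. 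Bounding the conditional probability by $1$ on the event $\{N_K<\sqrt K\}$ then shows that $\Pr[D_K=0]\to0$.

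As an alternative to the conditioning argument, one can invoke the local CLT, which gives $\Pr[D_K=0]=O(K^{-1/2})$.

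Combining the two cases with the union bound proves the lemma. Note that positivity of $p$ is needed only in the second case, where it ensures that both values $\pm1$ are possible increments.
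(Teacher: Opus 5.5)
Your proposal is correct and follows essentially the same route as the paper's proof: a union bound over pairs, the law of large numbers when $p(u)\neq p(u')$, and, when $p(u)=p(u')$, conditioning on $C_u^{(K)}+C_{u'}^{(K)}$ together with the $O(N^{-1/2})$ central binomial bound. The differences are cosmetic: you use the weak rather than the strong law, and you make explicit the split on $\{N_K<\sqrt K\}$, which the paper compresses into ``$M_K\to\infty$ in probability.''
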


\begin{proof}
Fix distinct $u,u'\in U$.

First suppose $p(u)\neq p(u')$. By the strong law of large numbers,
\[
    \frac{C_u^{(K)}-C_{u'}^{(K)}}{K}
    \longrightarrow
    p(u)-p(u')
    \neq 0
    \qquad\text{almost surely}.
\]
Hence
\(
    \Pr[C_u^{(K)}=C_{u'}^{(K)}]
    \rightarrow 0.
\)
Now suppose $p(u)= p(u')=\beta>0$. Let
\(
    M_K:=C_u^{(K)}+C_{u'}^{(K)}.
\)
Then $M_K\sim\operatorname{Bin}(K,2\beta)$ and, conditional on $M_K=\ell$,
\(
    C_u^{(K)}
    \sim
    \operatorname{Bin}(\ell,1/2).
\)
Consequently,
\[
    \Pr\!\left[
        C_u^{(K)}=C_{u'}^{(K)}
        \mid M_K=\ell
    \right]
    \leq
    \binom{\ell}{\lfloor \ell/2\rfloor}2^{-\ell}
    =
    O(\ell^{-1/2}).
\]
Since $M_K\to\infty$ in probability, it follows that
\(
    \Pr\![C_u^{(K)}=C_{u'}^{(K)}]
    \rightarrow 0.
\) Since $U$ is finite, a union bound over all unordered pairs
$\{u,u'\}\subseteq U$ gives the claim.
\end{proof}
Now we first show that with sufficiently many walks, all nodes and relation types are uniquely identifiable after a single layer.
\begin{lemma}[First-layer node and relation identifiers]
\label{lem:first-layer-identifiers}
Under assumptions \textnormal{(A1)}--\textnormal{(A4)}, for every
$n,m\in\mathbb{N}$ and every $\delta>0$, there exists a finite number
$K_1=K_1(n,m,\delta)$ of first-layer walks such that, with probability at
least $1-\delta$, after the first structural update:
\begin{enumerate}
    \item every node $v\in V$ contains a unique identifier
    $\iota_V(v)$,
    \item every relation type $r\in R$ contains a unique identifier
    $\iota_R(r)$, and
    \item the original observed node attributes and labels are preserved.
\end{enumerate}
\end{lemma}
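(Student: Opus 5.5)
We will use the counting idea of Lemma~\ref{lem:random-counts}. The plan is to make the multiset of first-layer update proposals received by each node and relation act as a random ``fingerprint''. We then show that, with high probability, these fingerprints differ pairwise.

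\emph{Step 1: what each proposal encodes.} Consider a walk $\eta$ sampled in the first layer, and a position in it that is associated with node $v$ (or with incoming relation $r$). By (A3), the processed state at that position can encode injectively the whole recorded walk together with the distinguished position. The recording protocol anonymizes identities, so this encoding captures only the isomorphism type of the walk. Concretely, it records the anonymized node and relation sequence, the directions, the query indicators, and the current channel states. The latter still contain the initial feature and label encodings, which by (A2) can also carry the node's full token collection. Write $\tau(\eta,j)$ for this ``rooted walk type'' at position $j$. By (A4), after consensus the state of $v$ injectively encodes its previous state together with the multiset $\{\!\{\tau(\eta_k,j): v_j^{(k)}=v\}\!\}$. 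Because the previous state is preserved, item 3 of the lemma follows immediately.

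\emph{Step 2: reduce to distinct counts.} Let $U$ be the finite set of walk types that can occur with positive probability. It is finite because the walk length $S$, $n$, and $m$ are bounded; the continuous attribute values do not enlarge it for this counting purpose, since we only track discrete walk identities. Each walk is i.i.d.\ by (A1). It would be convenient to have a single ``witness'' coordinate that separates any two nodes, but counts of shared types alone do not obviously do this, since two automorphic nodes receive identically distributed multisets.

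\emph{Step 3: the main idea.} Instead, treat the unit of sampling as the full labeled walk $\eta\in\mathcal W(G)$, which is a finite set of walks of length $S$. Apply Lemma~\ref{lem:random-counts} to the starting-node distribution. By (A1), every $v$ has positive probability of being the start node, and likewise every relation $r$ of being the first traversed relation. The lemma gives that, with probability tending to $1$, the counts $C_v$ of walks starting at $v$ are pairwise distinct, and similarly the counts $C_r$ for relations.

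\emph{Step 4: read off the identifier.} From the multiset received by $v$ we extract the number of proposals whose distinguished position is position $0$ with the \texttt{start} relation. This number equals $C_v$ exactly, and the extraction is a function of the injective encoding from (A4). So $\iota_V(v):=C_v$ serves as the identifier. For relations, $C_r$ is the number of proposals to $r$ arising at position $1$ from walks whose first relation is $r$. These proposals are recognizable because the anonymized relation index at position $1$ is $1$.

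The number of walks $K_1$ depends only on $n$, $m$, and $\delta$. The reason is that (A1) bounds all start probabilities below by a constant depending only on $n,m$, so the convergence in Lemma~\ref{lem:random-counts} is uniform over the finitely many graphs. We will make this quantitative by replacing the asymptotic argument with the explicit bounds from that proof: an $O(\ell^{-1/2})$ tie probability plus a Chernoff bound on $M_K$, followed by a union bound over at most $\binom{n}{2}+\binom{m}{2}$ pairs.

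\textbf{Main obstacle.} We expect the hardest step to be showing that an encoded multiset can be ``decoded'' to extract these counts, which means checking that (A3) and (A4) together let position-$0$ and position-$1$ proposals be distinguished. If the uniform-in-graph bound turns out to be awkward, the fallback is to observe that there are finitely many discrete classes and take the maximum of the $K$ values over them.
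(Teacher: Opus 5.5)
Your proposal is correct and follows essentially the same route as the paper: take $\iota_V(v)$ to be the number of walks starting at $v$ and $\iota_R(r)$ the number of walks whose first traversed relation is $r$, make these counts pairwise distinct via (A1) and Lemma~\ref{lem:random-counts}, extract them through (A3) and keep their multiplicity through (A4), and preserve the original attributes via (A2) and (A4). Two small corrections. First, position-$1$ proposals are recognizable because (A3) encodes the distinguished position, not because the anonymized index is $1$, since that index recurs whenever $r_1$ is revisited later in the walk. Second, the $O(\ell^{-1/2})$ tie bound in Lemma~\ref{lem:random-counts} is only proved for equal probabilities, so a uniform $K_1$ needs the general point-mass bound for $\operatorname{Bin}(\ell,\theta)$ with $\theta$ bounded away from $0$ and $1$ (which (A1) guarantees), or else your finite-class fallback.
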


\begin{proof}
For each node $v\in V$, let
\[
    C_v
    :=
    \#\{j\in[K_1] : \eta_j\text{ starts at }v\},
\]
and for every relation type $r\in R$, let
\[
    D_r
    :=
    \#\{j\in[K_1] :
        r\text{ is the first relation traversed by }\eta_j\}.
\]

By assumption (A1), the starting node of a sampled walk follows
a categorical distribution assigning strictly positive probability to every
$v\in V$. Similarly, the first traversed relation follows a categorical
distribution assigning strictly positive probability to every $r\in R$.
Lemma~\ref{lem:random-counts} therefore implies
\[
    \Pr[C_v\neq C_{v'}\text{ for all }v\neq v']
    \longrightarrow 1
\]
and
\[
    \Pr[D_r\neq D_{r'}\text{ for all }r\neq r']
    \longrightarrow 1
\]
as $K_1\to\infty$.

The beginning of a walk is explicitly distinguished by the start-relation
record, while the first traversed relation is identifiable from its position
in the recorded sequence. By assumption \textnormal{(A3)}, the corresponding
update proposals may therefore encode whether a node occurs as the starting
node and whether a relation occurs as the first traversed relation.
By assumption \textnormal{(A4)}, the consensus protocol may retain the
multiplicity of these proposals. Consequently, the updated node state may
retain $C_v$, and the updated relation state may retain $D_r$.

We may therefore take
\[
    \iota_V(v):=C_v,
    \qquad
    \iota_R(r):=D_r.
\]
With probability tending to one, these values are injective on $V$ and $R$,
respectively.

Finally, assumption \textnormal{(A2)} ensures that the complete local node
state, including all observed features and labels, can be retained before
the structural update, while assumption \textnormal{(A4)} allows the
structural update to preserve this information. Hence the random identifiers
do not overwrite the original node information.

Since $|V|\leq n$ and $|R|\leq m$, the required number of walks can be
chosen uniformly over all considered graphs in $\mathcal{G}_{n,m,b}$.
\end{proof}

Next we will show that when choosing the walk length to be at least the diameter of the graph, for any node-edge pair there always exists a walk that traverses both.
\begin{lemma}[Short non-backtracking edge witnesses]
\label{lem:edge-witness}
Let $G=(V,E,R,X,Y)$ have connected underlying undirected graph with
diameter $D$. For every target node $t\in V$ and every edge
$e=(u,r,v)\in E$, there exists a non-backtracking walk starting at $t$,
of length at most
\[
    D+1 \leq |V| \leq n,
\]
that traverses $e$ in one of its two directions.
\end{lemma}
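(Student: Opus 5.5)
The plan is a direct construction: walk along a shortest path to the nearer endpoint of $e$, then traverse $e$ itself. Fix $t\in V$ and $e=(u,r,v)\in E$, and write $d(\cdot,\cdot)$ for shortest-path distance in the underlying undirected graph, which is finite by connectivity. Let $w\in\{u,v\}$ be an endpoint of $e$ minimizing $d(t,w)$, and let $w'$ be the other endpoint, so that
\[
d(t,w)\le d(t,w')\le D .
\]
If $u=v$ (a self-loop), then $w=w'$.

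Choose a shortest path $t=x_0,x_1,\ldots,x_k=w$ with $k=d(t,w)\le D$. Realize each step $x_{j-1}\to x_j$ as a walk step by choosing any edge of $E$ between $x_{j-1}$ and $x_j$, together with its relation type and the appropriate direction $\alpha_j\in\{1,-1\}$. Then append one final step $x_k=w\to w'$ using the relation $r$. Its direction is $\alpha=1$ if $w=u$, and $\alpha=-1$ if $w=v$. This final step traverses $e$ in one of its two directions, and the total length is
\[
k+1\le D+1 .
\]

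Next I would check that this walk is non-backtracking, i.e.\ that $x_j\neq x_{j-2}$ for every $j\ge 2$. For $j\le k$ this holds because a shortest path has pairwise distinct vertices. For the appended step with $k\ge 1$, I need $w'\neq x_{k-1}$. This follows from
\[
d(t,x_{k-1})=k-1<k=d(t,w)\le d(t,w').
\]
The same inequality covers the self-loop case $w'=w$. If $k=0$, i.e.\ $t\in\{u,v\}$, the walk has length $1$, and the first step is unconstrained by the non-backtracking rule.

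Finally, the bound $D+1\le|V|\le n$ holds because any shortest path visits distinct vertices, so $D\le|V|-1$. The definition of $\mathbb{G}_{n,m,b}$ gives $|V|\le n$. I do not expect a genuine obstacle. The only delicate point is the last step, where one must ensure the traversal of $e$ does not immediately return to the previous vertex. This is exactly why I choose the \emph{nearer} endpoint of $e$ as the end of the shortest path, rather than an arbitrary one.
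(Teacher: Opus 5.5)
Your proof is correct and is essentially the paper's construction: a shortest path to an endpoint of $e$, then one traversal of $e$, with $D\le |V|-1$ giving the bound. The difference is cosmetic: the paper always routes to $u$ and splits on whether the penultimate vertex is $v$ (if so, the path itself, with its last step realized by $e$, is the witness), whereas routing to the nearer endpoint removes that case split and makes explicit that the final step uses the relation $r$.
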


\begin{proof}
Fix $t\in V$ and $e=(u,r,v)\in E$. Let
\[
    P=(t=v_0,v_1,\ldots,v_\ell=u)
\]
be a shortest path in the underlying undirected graph from $t$ to $u$.
Then $\ell\leq D$. Since $P$ is a shortest path, it contains no immediate
backtracking.

If either $\ell=0$ or $v_{\ell-1}\neq v$, append the traversal from $u$ to
$v$ along $e$. The resulting walk is non-backtracking and has length
\[
    \ell+1\leq D+1.
\]

If instead $v_{\ell-1}=v$, then the final step of $P$ already traverses
the edge $e$, namely from $v$ to $u$. Since traversal direction is recorded,
this already identifies the original directed edge $(u,r,v)$. In this case
$P$ itself is the required witness and has length at most $D$.

Thus in either case there exists a non-backtracking witness walk of length
at most $D+1$. Since a connected graph on at most $n$ nodes satisfies
$D\leq |V|-1$, the witness has length at most $n$.
\end{proof}

\begin{lemma}[Simultaneous edge witnessing]
\label{lem:simultaneous-witnessing}
Assume that the walk length $S$ is at least $n$. For every $n,m\in\mathbb{N}$
and every $\delta>0$, there exists a finite number
$K_2=K_2(n,m,\delta)$ of second-layer walks such that, with probability at
least $1-\delta$, for every node $t\in V$ and every edge $e\in E$, at least
one sampled walk starts at $t$ and contains a traversal of $e$.
\end{lemma}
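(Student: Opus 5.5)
The plan is to reduce the claim to a finite union bound over all (node, edge) pairs, using (A1) to get a uniform lower bound on the probability that a single sampled walk witnesses a given pair.

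First, fix a graph $G$ with connected underlying undirected graph, at most $n$ nodes and $m$ relation types, and fix a pair $(t,e)$ with $t\in V$ and $e\in E$. By Lemma~\ref{lem:edge-witness}, there exists a non-backtracking walk $\omega_{t,e}$ that starts at $t$, traverses $e$ in one of its directions, and has length at most $D+1\le n\le S$. Assumption (A1) says every non-backtracking walk of length at most $S$ occurs with positive probability as a prefix of a sampled walk, with probabilities bounded away from zero by a constant depending only on $n$ and $m$. So a single sampled walk has $\omega_{t,e}$ as a prefix with probability at least some $\rho=\rho(n,m)>0$. On that event the walk starts at $t$ and contains a traversal of $e$. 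If the sampler's start distribution is split across groups (query head, relation-edge starts, uniform nodes), I would read (A1) as already folding the start probability into this prefix bound.

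Second, since the $K_2$ walks are sampled independently, the probability that none of them has $\omega_{t,e}$ as a prefix is at most $(1-\rho)^{K_2}\le e^{-\rho K_2}$. The number of pairs $(t,e)$ is at most $n\cdot|E|\le n\cdot n^2 m$, because edges are typed triples in $V\times R\times V$. A union bound then shows that all pairs are witnessed simultaneously with probability at least $1-n^3 m\,e^{-\rho K_2}$.

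Finally, choose $K_2(n,m,\delta)\ge \rho^{-1}\log(n^3 m/\delta)$. This depends only on $n$, $m$ and $\delta$, so the bound holds uniformly over all considered connected graphs.

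The main obstacle is the first step: making the per-walk lower bound $\rho$ genuinely uniform in the graph. It must not depend on the particular graph, on which sampler group the walk falls into, or on whether the walk is extended beyond the witness prefix. I would handle this by invoking (A1)'s uniform bound directly. A backup is to argue concretely: with at most $n$ choices of start node, at most $n$ neighbor choices per step and at most $2m$ relation-direction choices per step, the uniform sampler gives probability at least $(1/n)(1/(2nm))^{n}$ for any fixed prefix of length at most $n$.
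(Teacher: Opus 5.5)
Your proposal is correct and follows the paper's proof essentially step for step: Lemma~\ref{lem:edge-witness} supplies a non-backtracking witness of length at most $n\le S$, (A1) gives a uniform per-walk lower bound, independence gives $(1-\rho)^{K_2}$, and a union bound over at most $n^3m$ pairs finishes the argument; your explicit choice of $K_2$ simply makes the paper's limit argument quantitative. Relying on (A1) rather than your concrete backup is the right call, because the backup's $1/n$ start probability assumes uniform starts over $V$, and the implemented node-classification sampler, which starts walks only from $V_q$ and $V_{\mathrm{context}}$, does not guarantee that.
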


\begin{proof}
By Lemma~\ref{lem:edge-witness}, for every pair $(t,e)\in V\times E$ there
exists an admissible non-backtracking witness walk of length at most $n$.

By assumption (A1) on the sampler, a walk that contains such a prefix has probability of at least $\beta_{n,m}>0$.

Therefore,
\[
    \Pr[
        (t,e)\text{ is not witnessed in }K_2\text{ walks}
    ]
    \leq
    (1-\beta_{n,m})^{K_2}.
\]
Since
\(
    |E|\leq |V|^2|R|\leq n^2m,
\)
there are at most $n^3m$ target-edge pairs. A union bound yields
\[
    \Pr[
        \exists (t,e)\in V\times E
        \text{ that is not witnessed}
    ]
    \leq
    n^3m(1-\beta_{n,m})^{K_2}.
\]
The right-hand side converges to zero as $K_2\to\infty$, proving the claim.
\end{proof}
\begin{lemma}[Two-layer graph reconstruction]
\label{lem:two-layer-reconstruction}
Suppose the events in Lemmas~\ref{lem:first-layer-identifiers}
and~\ref{lem:simultaneous-witnessing} both occur. Then, after the second
structural update, the hidden state associated with every node $t\in V$
can retain an encoding from which the complete observed graph $G$ and, in the case of link prediction, the query $q$ can be reconstructed up to isomorphism.
\end{lemma}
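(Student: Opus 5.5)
The plan is to build the encoding at each target node $t$ explicitly, as the multiset of second-layer walk recordings that start at $t$. By Lemma~\ref{lem:first-layer-identifiers}, after the first structural update every node state carries an injective identifier $\iota_V(v)$ together with its preserved features and labels. Every relation state likewise carries an injective identifier $\iota_R(r)$. These states enter the second-layer recordings: each node--channel token contains $\mathbf{v}^{(1)\prime\prime}(v,i)$, and each step embedding contains the relation state $\mathbf{r}^{(1)}(r)$. Assumption (A2) lets every channel token of $v$ carry an injective encoding of the full node state, so $\iota_V(v)$ and all of $v$'s attributes are readable from any channel, in particular the structure channel. In the link-prediction case, the step embedding further records the query indicators $\mathbf{1}_{h_q=v}$ and $\mathbf{1}_{r_q=r}$.

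The first key step is to apply (A3) to every second-layer walk $\eta$ starting at $t$, at position $0$ (the \texttt{start} position of $t$). The processed representation there can injectively encode the whole recorded walk. That walk is the sequence of tuples (identifier of $v_j$, attributes of $v_j$, identifier of $r_j$, direction $\alpha_j$, query indicators). From this sequence we can read off every traversed typed edge as a triple $(\iota_V(v_{j-1}),\iota_R(r_j),\iota_V(v_j))$, oriented by $\alpha_j$. The update proposal for $(t,\text{structure})$ emitted at that position can therefore carry this edge list together with the node attributes seen along the walk. By (A4), the consensus protocol retains the multiset of all such proposals, plus the previous state of $t$, in $\mathbf{v}^{(2)}(t,\cdot)$.

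Next, invoke Lemma~\ref{lem:simultaneous-witnessing}: every edge $e\in E$ is traversed by some walk starting at $t$. The union of the decoded edge lists is therefore exactly the full edge set $E$, written in identifier coordinates. Decoded edges are never spurious, since every step of a walk is a genuine edge. Because the graph is connected, every node is an endpoint of some edge. The exception is a one-node graph, whose node is $t$ itself, and its attributes are already in $t$'s own state. So every node's identifier and attributes appear, and the recovered $V$, $E$, $R$ (up to relabeling of relations), $X$ and $Y$ coincide with $G$ after the relabeling $v\mapsto\iota_V(v)$, $r\mapsto\iota_R(r)$. Relation types that carry no edge are unrecoverable in principle, but they are also irrelevant up to isomorphism. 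Any such type would have to be the query relation, and that case is handled by the query bit below. For link prediction, the query head is recovered as the identifier carrying $\mathbf{1}_{h_q=v}=1$ and the query relation as the one carrying $\mathbf{1}_{r_q=r}=1$. In both cases $t$'s own identifier is retained, so the encoding determines $(G,q)$ together with the distinguished position of $t$, up to isomorphism.

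The main obstacle is making precise that a single injective encoding of the walk multiset really determines the graph. The difficulty is that the identifiers are the random counts $C_v$ and $D_r$, not canonical labels, so decoding is only well defined up to the relabeling they induce. I would handle this by arguing that the reconstruction map "multiset of labelled walk sequences $\mapsto$ labelled graph" is a well-defined function once identifiers are injective, and that any two label assignments produce isomorphic graphs. A smaller point is that (A3) must be invoked at the start position rather than at arbitrary positions, so that all proposals are routed to $t$.
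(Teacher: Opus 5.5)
Your proposal is correct and follows the paper's proof essentially step for step: fresh second-layer walks from $t$, (A3) applied at the start position, (A4) to retain the multiset of proposals, Lemma~\ref{lem:simultaneous-witnessing} for edge coverage, connectivity to collect all node attributes, and the recorded query indicators; your explicit handling of the one-node graph is a small improvement over the paper's claim that every node of a connected graph is incident to an edge. Your aside on edgeless relation types is unnecessary and not quite right as stated: (A1) already forces every $r\in R$ to occur as a first traversed relation and hence on some edge, whereas without (A1) such types would matter up to isomorphism (relation relabelings are bijections) and nothing forces them to be the query relation.
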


\begin{proof}
By Lemma~\ref{lem:first-layer-identifiers}, after the first layer every node
and relation type carries a persistent identifier
\[
    \iota_V:V\to\mathbb{N},
    \qquad
    \iota_R:R\to\mathbb{N},
\]
which is injective on $V$ and $R$, respectively. The first-layer node states also retain all observed node attributes and labels.

Consider now the second structural layer and fix a target node $t$.
By Lemma~\ref{lem:simultaneous-witnessing}, for every edge $e=(u,r,v)\in E$
there exists a sampled walk $\eta_e$ that starts at $t$ and traverses $e$.

Because the first-layer identifiers are part of the hidden states used by
the recording protocol, the recording of $\eta_e$ contains enough
information to identify the traversal of $e$ as
\[
    \bigl(
        \iota_V(u),
        \iota_R(r),
        \iota_V(v)
    \bigr),
\]
together with its orientation. In addition, whenever a node occurs in a
walk, its recorded hidden state contains its observed local attributes.

By assumption \textnormal{(A3)}, the update proposal produced at the
starting position $t$ may retain an injective encoding of the complete
recorded walk $\eta_e$. Thus, the proposal associated with $t$ contains
the information that the edge
\[
    \bigl(
        \iota_V(u),
        \iota_R(r),
        \iota_V(v)
    \bigr)
\]
exists.

By assumption \textnormal{(A4)}, the consensus update at $t$ can retain
injectively the multiset of all such proposals. Since every edge is
witnessed by at least one walk starting at $t$, the resulting state can
therefore recover
\[
    \left\{
        \bigl(
            \iota_V(u),
            \iota_R(r),
            \iota_V(v)
        \bigr)
        : (u,r,v)\in E
    \right\}.
\]
Because $\iota_V$ and $\iota_R$ are injective, this determines the complete
edge-typed graph up to a renaming of nodes and relation types.

Since the underlying graph is connected, every node is incident to an edge. Hence the collection of edge-witnessing walks also contains
the first-layer state of every node and therefore determines all observed
node features and labels.

Finally, the recording protocol contains the link query indicators. It follows that the second-layer state
at $t$ determines $G$, and in the case of link prediction $q$, up to isomorphism.
\end{proof}

We are now ready to prove the main result.
\begin{proof}[Proof of Theorem \ref{thm:two-layer-universality}]
Fix $\epsilon,\delta>0$. By Lemmas~\ref{lem:first-layer-identifiers} and~\ref{lem:simultaneous-witnessing}, the probabilities of
successful symmetry breaking in the first layer and complete edge witnessing
in the second layer both converge to one as the number of sampled walks grows.
Hence, there exists a common finite sampling budget $K\in\mathbb N$ such that,
when $K$ walks are sampled in each layer, the event in Lemma~\ref{lem:first-layer-identifiers} and the event
in Lemma~\ref{lem:simultaneous-witnessing} each occur with probability at least $1-\delta/2$.
By a union bound, both events therefore occur simultaneously with probability
at least $1-\delta$.

Condition on such a successful realization of the sampled walks.
By Lemma~\ref{lem:two-layer-reconstruction}, the second-layer hidden state
used by the output head determines the observed graph-query pair $(G,q)$
up to isomorphism.

Fix now a discrete isomorphism class $\bar\kappa$ of connected observed
graph-query pairs. Let $\mathcal K(\bar\kappa)$ denote the set of discrete
isomorphism classes of fully specified graphs that are compatible with
$\bar\kappa$ and have positive prior probability. Since $n$, $m$, and $b$
are bounded, $\mathcal K(\bar\kappa)$ is finite.

For every $\kappa\in\mathcal K(\bar\kappa)$, let
$\mathcal X_{\bar\kappa,\kappa}$ denote the projection of
$\mathcal Z_\kappa$ onto the continuous coordinates observed in
$\bar\kappa$, and define
\[
    \mathcal X_{\bar\kappa}
    :=
    \bigcup_{\kappa\in\mathcal K(\bar\kappa)}
    \mathcal X_{\bar\kappa,\kappa}.
\]
Each $\mathcal X_{\bar\kappa,\kappa}$ is compact as the projection of the
compact set $\mathcal Z_\kappa$. Since the union is finite,
$\mathcal X_{\bar\kappa}$ is compact. Let
$x\in\mathcal X_{\bar\kappa}$ denote the continuous observed attributes.

For a fixed successful realization $\omega$ of the discrete random choices
made by the model, let
\[
    \Phi_{\bar\kappa,\omega}:
    \mathcal X_{\bar\kappa}
    \longrightarrow \mathbb R^d
\]
denote the resulting representation supplied to the output head. The model
modules are continuous, so $\Phi_{\bar\kappa,\omega}$ is continuous.
Moreover, by Lemma~\ref{lem:two-layer-reconstruction}, the representation
determines the corresponding graph-query pair up to isomorphism. After
fixing a canonical representative of $\bar\kappa$, we may therefore regard
$\Phi_{\bar\kappa,\omega}$ as injective.

A continuous injection from a compact space into $\mathbb R^d$ is a
homeomorphism onto its image. Hence
\[
    \Phi_{\bar\kappa,\omega}^{-1}
\]
is continuous on
$\Phi_{\bar\kappa,\omega}(\mathcal X_{\bar\kappa})$.

Let $M_{p,\bar\kappa}^\star$ denote the restriction of the Bayes-optimal
predictor to the observed discrete class $\bar\kappa$, viewed as a function
of its continuous observed attributes $x$. By
Lemma~\ref{lem:bayes-continuity}, $M_{p,\bar\kappa}^\star$ is continuous.
Consequently,
\[
    h
    \longmapsto
    M_{p,\bar\kappa}^\star
    \!\left(\Phi_{\bar\kappa,\omega}^{-1}(h)\right)
\]
is continuous on the compact representation domain
$\Phi_{\bar\kappa,\omega}(\mathcal X_{\bar\kappa})$.

Because $n$, $m$, and $b$ are bounded, there are only finitely many
discrete isomorphism classes of connected observed graph-query pairs.
For a fixed number and length of sampled walks, there are also only
finitely many discrete walk realizations. Hence the successful
representation domains form a finite union of compact sets. Whenever two
such domains intersect, Lemma~\ref{lem:two-layer-reconstruction} implies
that the shared representation determines the same graph-query pair up to
isomorphism, and therefore the same Bayes-optimal output. The continuous
functions defined above thus agree on intersections and together define a
continuous function on this finite union of compact representation domains.

By assumption \textnormal{(A5)}, the output head can approximate this
function uniformly to accuracy $\epsilon$. Thus, whenever the two
successful sampling events occur,
\[
    \left\|
        M_{\theta,K}(G,q)-M_p^\star(G,q)
    \right\|
    <\epsilon.
\]
Since the numbers of walks were chosen uniformly over
$\mathcal G_{n,m,b}$ and the two successful events occur jointly with
probability at least $1-\delta$, we obtain
\[
    \Pr\!\left[
        \left\|
            M_{\theta,K}(G,q)-M_p^\star(G,q)
        \right\|<\epsilon
    \right]
    \geq 1-\delta
\]
uniformly over all connected $G\in\mathcal G_{n,m,b}$ and all valid
queries $q$. Thus, for every $\epsilon,\delta>0$, there exist network parameters
$\theta$ and a finite sampling budget $K$ satisfying the definition of a
universal graph prior approximator.
\end{proof}

\subsection{Permutation equivariance}

\begin{proof}[Proof of Proposition \ref{prop:permutation-equivariance}]
Let $\pi=(\pi_V,\pi_R,\pi_X,\pi_Y)$ and let
$(G',q')=(\pi(G),\pi(q))$. We show inductively over the layers that the joint
distribution of the hidden states is equivariant under $\pi$: node states
are permuted according to $\pi_V$, their feature and label channels
according to $\pi_X$ and $\pi_Y$, respectively, while relation states are
permuted according to $\pi_R$.

The claim holds at initialization by construction: feature and label values
are carried along with their corresponding channels, structural states
depend only on the node's role relative to the query, and relation states
are independent of relation identities.

Assume now that the claim holds at the beginning of layer $l$.
By assumption, the intra-node processor $\mu_\theta$ is equivariant in
distribution under permutations of feature and label channels. Since the
same processor is applied independently to every node, the intra-node
update also commutes with $\pi_V$.

The in-context update acts independently on each channel and aggregates
over the corresponding context nodes. Since
\[
V_{\mathrm{context}}(G',q')
=
\pi_V\!\left(V_{\mathrm{context}}(G,q)\right),
\]
permuting nodes and channels therefore only permutes the corresponding
queries, keys, and values, and the update remains equivariant.

For the structural update, permutation invariance of $A$ implies that the
walks sampled on $(G',q')$ have the same distribution as the images under
$\pi$ of the walks sampled on $(G,q)$. The recording protocol consequently
produces correspondingly permuted walk recordings. Since the same sequence
processor is applied to every recording, corresponding walk positions
produce corresponding update proposals. Finally, the consensus protocol is
permutation invariant in the collection of proposals associated with each
state, so the resulting node and relation states are again equivariant in
distribution. This completes the induction.

After the final layer, the shared node head maps permuted label channels to
the correspondingly permuted classification outputs. Likewise, the shared
link head maps permuted candidate-node and relation states to the
correspondingly permuted link-prediction outputs. Hence
\[
M_\theta\!\left(\pi(G),\pi(q)\right)
\overset{d}{=}
\pi_{\mathrm{out}}\!\left(M_\theta(G,q)\right),
\]
which proves the claim.
\end{proof}

\subsection{Computational complexity}\label{app:complexity}

Let \(N=|V|\), \(C=d_X+d_Y+1\) be the number of channels,
\(N_{\mathrm{ctx}}=|V_{\mathrm{context}}|\), \(K\) the number of sampled
walks per layer, \(S\) their length, and \(d\) the hidden dimension.
For a single \textsc{Wander} layer, intra-node self-attention costs
\[
    \mathcal{O}\!\left(N(C^2d+Cd^2)\right),
\]
since attention is applied independently across the \(C\) tokens of each
node. The ICL update uses the \(N\) node tokens as queries and the
\(N_{\mathrm{ctx}}\) context tokens as keys and values, independently for
each channel, and therefore costs
\[
    \mathcal{O}\!\left(
        C N N_{\mathrm{ctx}}d + C(N+N_{\mathrm{ctx}})d^2
    \right).
\]
Finally, the stochastic structural update processes \(KC\) sequences of
length \(S+1\). With the bidirectional GRU and feed-forward network used in
our implementation, this requires
\[
    \mathcal{O}\!\left(KCSd^2\right)
\]
operations. Sampling the walks and aggregating their proposals add only
\(\mathcal{O}(KS)\) and \(\mathcal{O}(KCSd)\), respectively, and are
therefore dominated by sequence processing.

Consequently, the overall time complexity of one layer is
\[
\mathcal{O}\!\left(
    NC^2d
    + C N N_{\mathrm{ctx}}d
    + \bigl(NC+C N_{\mathrm{ctx}}+KCS\bigr)d^2
\right)
\]
which simplifies to  
\begin{align*}
    \mathcal{O}\!\left(NC^2+CNN_{\text{ctx}}+KCS
\right)
\end{align*}
for fixed hidden dimension $d$.
In particular, for a fixed architecture and hidden dimension, the
structural update scales linearly in both the number and length of the
sampled walks. This makes its computation independent of the total number
of edges except for adjacency-list construction and permits the
accuracy--cost trade-offs obtained by varying the walk budget at inference
time.

\section{Experimental details}\label{app::exp}

\subsection{Graph prior}\label{app:pretraining-prior}
The training objective stated in Section~\ref{sec::bayes-predictor} assumes access to samples from a graph prior $p$.
Ideally, this prior would capture the distribution of real-world graphs directly.
In practice, however, available graph datasets form a comparatively small and
heterogeneous collection. We therefore follow the prior-data fitted network
paradigm introduced by \citet{muller2021transformers} and construct most of our pretraining
distribution synthetically.

A central ingredient of \textsc{TabPFN}~\citep{hollmann2022tabpfn} is a prior over \emph{structural causal models}
(SCMs), which generate variables by sampling a causal dependency graph together
with functions mapping parent variables and noise to their descendants.
Selecting some generated variables as features and another as the target yields
a broad synthetic distribution over supervised learning problems.

\paragraph{Synthetic attributed graphs.}
Recent GFMs extend this idea to attributed graphs, but make different
assumptions about how graph structure and node attributes arise.
\textsc{NodePFN} \citep{choi2026learning} follows an
\emph{attribute-first} construction: features and labels are first sampled from
an MLP-based SCM, after which the graph is generated using either a
label-dependent stochastic block model \citep{holland1983stochastic} or an Erd\H{o}s--R\'enyi \citep{erdos1959random} model.
\textsc{GraphPFN} \citep{eremeev2025graphpfn} instead follows a
\emph{structure-first} construction: the topology is sampled from multi-level
degree-corrected stochastic block models and preferential-attachment graphs,
and features and targets are subsequently generated by a graph-aware SCM.

These constructions capture complementary dependencies: attributes may
influence which relations form, while graph structure may itself influence
node attributes. We therefore include both generation directions in our prior.
We additionally broaden the structural distribution with Watts--Strogatz \citep{watts1998collective},
random geometric \citep{gilbert1961random}, and kernel-based graph models \citep{lovasz2006limits} to cover structural regimes
beyond block-community graphs, including highly clustered and geometry-driven
networks. Concretely, each synthetic graph is attribute-first with probability $0.4$ and structure-first with probability $0.6$.
Structure-first topologies are a multi-level degree-corrected stochastic block model with preferential attachment ($0.45$), a random geometric graph in latent dimension $2$--$10$ ($0.30$), or a Watts--Strogatz graph with rewiring probability in $[0.01,0.3]$ ($0.25$).
Attribute-first topologies are a label-ordered Watts--Strogatz graph ($0.45$), a label stochastic block model ($0.35$), or an RBF kernel graph ($0.20$).
The label stochastic block model and the RBF kernel use a homophily ratio that is homophilous (centered at $0.8$) with probability $0.7$ and heterophilous (centered at $0.1$) otherwise.
Graph size is mixed log-uniform, half in $[200,3000]$ nodes and half in $[3000,16000]$, and the average degree is log-uniform on $[2.5,20]$ with probability $0.9$.
Homogeneous link-prediction episodes reuse these undirected graphs with labels removed; node features are dropped with probability $0.5$, and a fraction of edges drawn from $[0.7,0.9]$ is kept as context.

The resulting attributed graphs are used for both node-classification
and homogeneous link-prediction pretraining.

\paragraph{Comparison with other synthetic priors.}
To characterize the structural regimes covered by our prior, we sample
1,000 graphs from each of the \textsc{Wander}, \textsc{NodePFN}, and
\textsc{GraphPFN} priors and compare their statistics with those of the
real-world evaluation graphs. Figure~\ref{fig:prior_scatter} shows that
\textsc{Wander} covers a wider range of clustering coefficients than the
other synthetic priors, including graphs with substantial clustering at
low average degree. It also spans both homophilous and heterophilous regimes,
although its samples are concentrated toward homophily. The distributions
in Figure~\ref{fig:prior_distributions} provide complementary comparisons of
graph size, average degree, clustering, homophily, and number of classes.
To test the effect of these differences on the transfer to real world graphs we replace the synthetic prior during
pretraining in Appendix~\ref{app::prior_experiments}.

\paragraph{Multi-relational graphs.}
Extending such synthetic mechanisms to multi-relational graphs is less
straightforward, as relation types in knowledge graphs follow different and
often domain-specific generative processes. We therefore augment the synthetic
attributed-graph prior with three real-world knowledge graphs: FB15k-237
\citep{FB15k-237}, WN18RR~\citep{WN18RR}, and
CoDEx Medium~\citep{codex}. These graphs provide multi-relational
link-prediction episodes during pretraining, exposing \textsc{Wander} jointly
to synthetic attributed graphs and real-world multi-relational structure. Statistics about these three real-world knowledge graphs are provided in Table~\ref{tab:transductive_datasets}.

\begin{figure}[htbp]
    \centering
    \includegraphics[width=\linewidth]{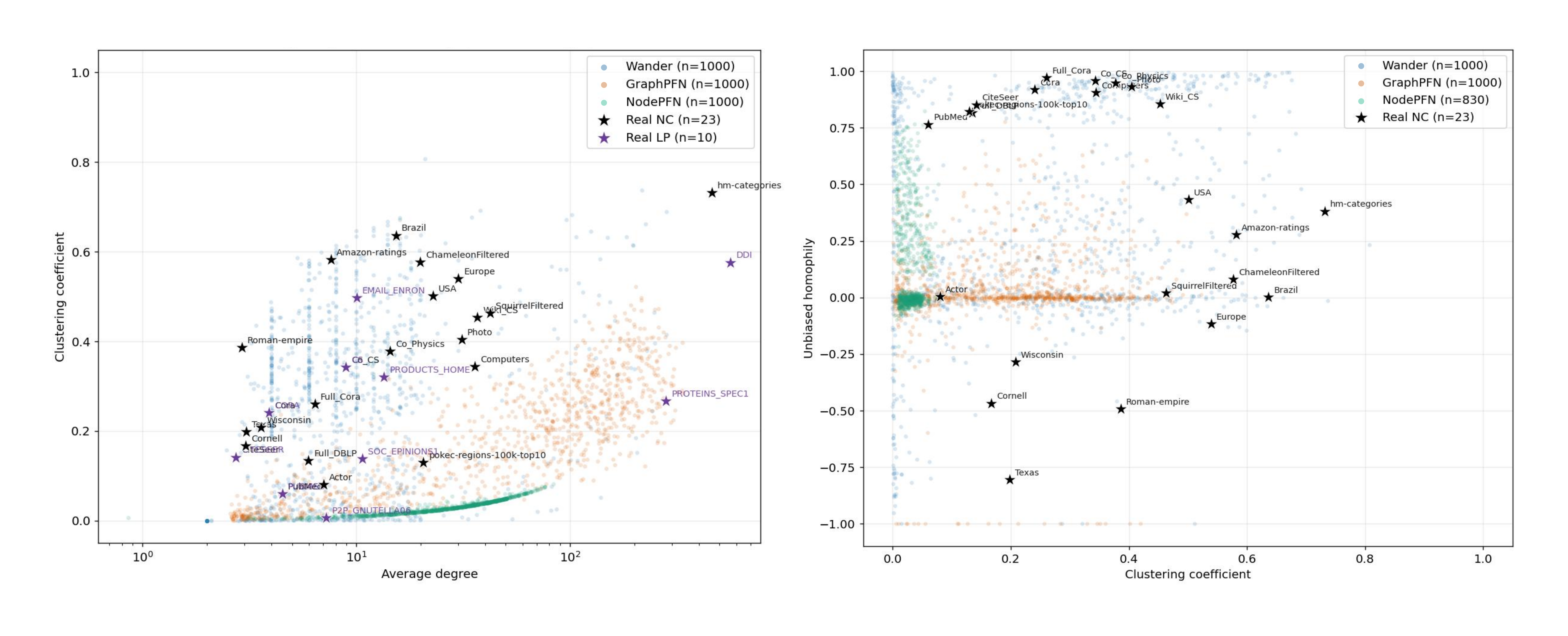}
    \caption{
    Structural coverage of the synthetic graph priors compared with the
    real-world evaluation graphs. 
    }
    \label{fig:prior_scatter}
\end{figure}

\begin{figure}[htbp]
    \centering
    \includegraphics[width=\linewidth]{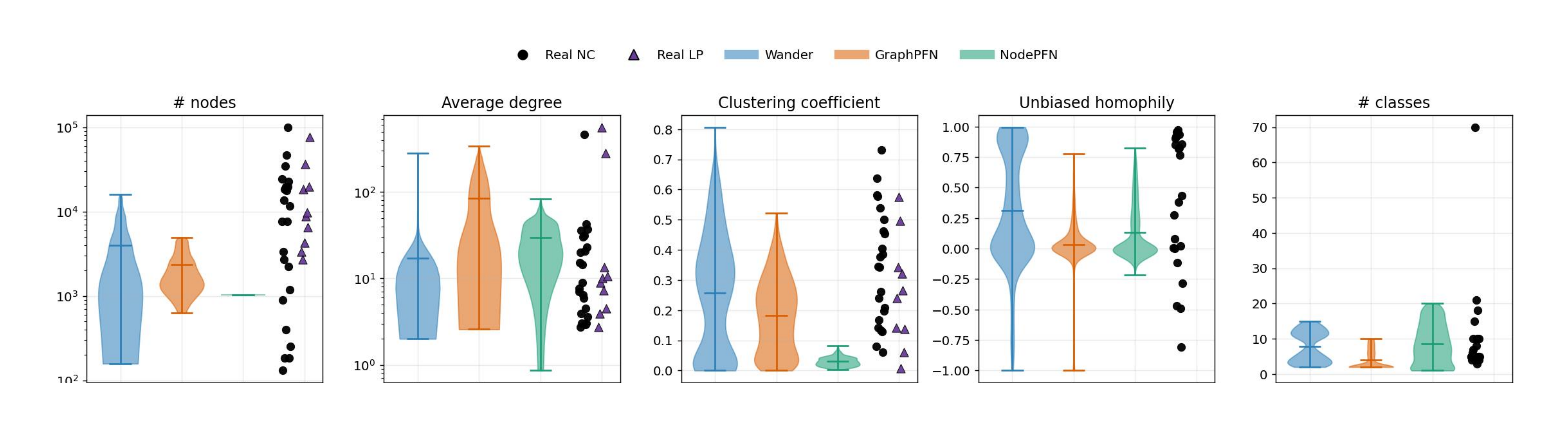}
    \caption{
    Distributions of graph statistics for 1,000 graphs sampled from each
    synthetic prior, alongside the real-world evaluation graphs. 
    }
    \label{fig:prior_distributions}
\end{figure}

\subsection{Implementation and pretraining details}\label{app::implementation}
The model and optimization hyperparameters are summarized in
Table~\ref{tab:pretrain-hparams}. We train \textsc{Wander} for 350 epochs
using a four-stage curriculum that progressively increases the complexity
of the pretraining tasks. 

During the first 200 epochs, we disable the stochastic structural update
in every layer and retain only the intra-node and ICL updates. Pretraining
during this stage uses exclusively synthetic graphs, with $80\%$ of
training steps devoted to node classification and $20\%$ to homogeneous
link prediction. This initial stage is intended to encourage the model to
learn to exploit node attributes and in-context supervision before
introducing the random-walk-based structural signal, thereby reducing the
incentive to rely primarily on graph structure or label homophily.
We additionally increase the feature dimensionality over the course of
this stage: during the first 100 epochs, the maximum feature dimension is
16, while during epochs 101--200 it is increased to 32. Whenever a sampled
synthetic task contains more features than the respective limit, we reduce
the feature dimension using PCA.

For the remaining 150 epochs, we activate the stochastic structural update
and train the complete \textsc{Wander} architecture. At this stage, $50\%$
of training steps use knowledge-graph link prediction, $40\%$ use synthetic
node classification, and $10\%$ use synthetic homogeneous link prediction.
The maximum feature dimension is increased to 64. During the final
50 epochs, we additionally increase the maximum size of synthetic graphs
from $8{,}000$ to $16{,}000$ nodes.

Each optimization step contains two graphs drawn from the same task family:
either two synthetic node-classification tasks, two synthetic homogeneous
link-prediction tasks, or two knowledge graphs. In the latter case, the two
graphs may correspond to the same underlying knowledge graph. 

The pretrained model has approximately 13M parameters.

\textsc{Wander} is stochastic because of random-walk sampling and the random
permutations of feature and label channels applied during inference. We refer
to each independently randomized forward pass as an \emph{inference sample}.
Unless stated otherwise, we average the logits from 16 inference samples
before applying the task-specific output transformation. For knowledge-graph
link prediction, we follow \citet{kim2025flock} for comparability and use the
dataset-specific numbers of inference samples reported in Table~\ref{tab:kg-walk-budgets}.

 Following prior work \citep{kim2025flock}, we add in each knowledge graph for each triple $(h,r,t)$ the corresponding inverse $(t,r^{-1},h)$, where $r^{-1}$ is a new relation symbol.

\begin{table}[t]
    \centering
    \scriptsize
    \setlength{\tabcolsep}{4pt}
    \caption{Pretraining hyperparameters across the four pretraining stages of \textsc{Wander}.  }
    \label{tab:pretrain-hparams}
    \begin{tabular}{l|c|c|c|c}
        \toprule
         & Phase 1 & Phase 2 & Phase 3 & Phase 4 \\
        \midrule
        Epoch & 1--100 & 101--200 & 201--300 & 301--350 \\
        \midrule
        \multicolumn{5}{@{}l}{\textit{Architecture}} \\
        Hidden dimension $d$ & \multicolumn{4}{c}{128 \ \textit{(all phases)}} \\
        Layers $L$ & \multicolumn{4}{c}{6 \ \textit{(all phases)}} \\
        Heads (intra-node) & \multicolumn{4}{c}{4 \ \textit{(all phases)}} \\
        Heads (ICL) & \multicolumn{4}{c}{4 \ \textit{(all phases)}} \\
        Heads (structural update) & \multicolumn{4}{c}{4 \ \textit{(all phases)}} \\
        Structural update & off & off & on & on \\
        Base walk count $K_{\text{base}}$ ($n<1$k) & --- & --- & $32$ & $32$ \\
        Base walk count $K_{\text{base}}$ ($1$k--$8$k) &  --- & --- & $64$ & $64$ \\
        Base walk count $K_{\text{base}}$ ($n\geq8$k)&  --- & --- & $128$ & $128$ \\
        Walk length $S$ ($n<1$k) & --- & --- & $32$ & $32$ \\
        Walk length $S$ ($1$k--$8$k) &  --- & --- & $64$ & $64$ \\
        Walk length $S$ ($n\geq8$k) &  --- & --- & $128$ & $128$ \\
        \midrule
        \multicolumn{5}{@{}l}{\textit{Data}} \\
        Synthetic NC & 80\% & 80\% & 40\% & 40\% \\
        Synthetic LP & 20\% & 20\% & 10\% & 10\% \\
        Real KG & 0\% & 0\% & 50\% & 50\% \\
        Max. size synthetic & 8{,}000 & 8{,}000 & 8{,}000 & 16{,}000 \\
        Max. features synthetic & 16 & 32 & 64 & 64 \\
        \midrule
        \multicolumn{5}{@{}l}{\textit{Optimizer \& training}} \\
        Optimizer & \multicolumn{4}{c}{AdamW \ \textit{(all phases)}} \\
        Learning rate & \multicolumn{4}{c}{$5{\times}10^{-5}$ \ \textit{(all phases)}} \\
        Weight decay & \multicolumn{4}{c}{$0.01$ \ \textit{(all phases)}} \\
        Batches per epoch & \multicolumn{4}{c}{2{,}000 \ \textit{(all phases)}} \\
        Graphs per batch & \multicolumn{4}{c}{2 \ \textit{(all phases)}} \\
        LP batch size (per graph) & \multicolumn{4}{c}{4 \ \textit{(all phases)}} \\
        NC batch size (per graph) & all test & all test & all visited & all visited \\
        Negatives (link prediction) & \multicolumn{4}{c}{512 \ \textit{(all phases)}} \\
        \bottomrule
    \end{tabular}
\end{table}

\subsection{Knowledge graph link prediction}\label{app::kg-evaluation}
To evaluate zero-shot transfer for knowledge graph link prediction, we consider
the entity-prediction setting, where the task is to rank candidate tail entities
given a query head entity and relation type.
\paragraph{Datasets.}
 We follow prior work on knowledge
graph foundation models~\citep{ultra,trix,kim2025flock} and evaluate on the same
benchmark suite, comprising 54 knowledge graphs grouped into three transfer
settings:

\begin{itemize}[leftmargin=*]
    \item \textbf{Inductive \(e,r\).}
    This setting evaluates transfer to knowledge graphs containing both unseen
    entities and unseen relation types. It comprises the 13 datasets introduced
    by InGram~\citep{lee2023ingram}: FB-25, FB-50, FB-75, FB-100, WK-25,
    WK-50, WK-75, WK-100, NL-0, NL-25, NL-50, NL-75, and NL-100; as well as
    10 datasets from MTDEA~\citep{zhou2023multi}: MT1 tax, MT1 health,
    MT2 org, MT2 sci, MT3 art, MT3 infra, MT4 sci, MT4 health, Metafam,
    and FBNELL. Dataset statistics are provided in
    Table~\ref{tab:inductive_er_datasets}.

    \item \textbf{Inductive \(e\).}
    This setting evaluates transfer to previously unseen entities while the
    relation vocabulary is shared with the training graph. It includes the
    12 GraIL benchmarks~\citep{teru2020inductive}, consisting of WN-v1 through
    WN-v4, FB-v1 through FB-v4, and NL-v1 through NL-v4; four INDIGO
    benchmarks~\citep{liu2021indigo}: HM 1k, HM 3k, HM 5k, and HM Indigo;
    and the two NodePiece benchmarks~\citep{galkin2021nodepiece}, ILPC Small
    and ILPC Large. Dataset statistics are provided in
    Table~\ref{tab:inductive_e_datasets}.

    \item \textbf{Transductive.}
    In the transductive setting, both entities and relation types are observed
    during training, and evaluation concerns held-out links over the same entity
    and relation vocabularies. This group comprises CoDEx Small and CoDEx
    Large~\citep{codex}, NELL-995~\citep{xiong2017deeppath},
    YAGO3-10~\citep{mahdisoltani2014yago3}, WDsinger, NELL23k,
    FB15k-237(10), FB15k-237(20), and FB15k-237(50)~\citep{lv2020dynamic},
    AristoV4~\citep{chen2021relation}, DBpedia100k~\citep{ding2018improving},
    ConceptNet100k~\citep{malaviya2020commonsense}, and
    Hetionet~\citep{himmelstein2017systematic}. Dataset statistics are
    provided in Table~\ref{tab:transductive_datasets}.
\end{itemize}

\paragraph{Baselines and evaluation protocol.}
We compare \textsc{Wander} against the knowledge graph foundation models
\textsc{Ultra}~\citep{ultra}, \textsc{Trix}~\citep{trix}, and \textsc{Flock}~\citep{kim2025flock}.
All three baselines are designed for zero-shot transfer across knowledge graphs
and are evaluated on the benchmark suite described above. 

We follow the filtered ranking protocol used in the corresponding prior
work. For each query, candidate entities corresponding to other known true
facts are removed from the ranking before computing the evaluation metrics.
For evaluating head prediction we convert the query into a tail prediction, by using the inverse relation type. We report mean reciprocal rank (MRR) and Hits@10. As the benchmark datasets
and evaluation protocol coincide with those used in the respective baseline
papers, we directly use their reported results rather than reevaluating the
models. Full per-dataset results are reported in
Table~\ref{tab::full_kg}.

For \textsc{Wander}, we use the same link-prediction random-walk sampling
protocol as during pretraining (Appendix~\ref{app::method}). Following
\textsc{Flock}~\citep{kim2025flock}, we use walks of length $S=128$ and
adopt its dataset-specific entity-prediction walk and inference sample budgets, reported in
Table~\ref{tab:kg-walk-budgets}. Here, $K_{\text{base}}$ denotes the base walk count, so
each structural update samples $3K_{\text{base}}$ walks according to the three starting
distributions defined in Appendix~\ref{app::method}.

\begin{table}[t]
\scriptsize
    \centering
    \caption{Dataset statistics for inductive-$e,r$ link prediction datasets.
    Triples are the number of edges given at training, validation, or test,
    respectively, whereas \textit{Valid} and \textit{Test} denote triples to be predicted in the
    validation and test graphs.}
    \label{tab:inductive_er_datasets}
    \scriptsize
    \setlength{\tabcolsep}{3pt}
    \resizebox{\linewidth}{!}{
    \begin{tabular}{lccccccccccc}
        \toprule
        \textbf{Dataset}
        & \multicolumn{3}{c}{\textbf{Training Graph}}
        & \multicolumn{4}{c}{\textbf{Validation Graph}}
        & \multicolumn{4}{c}{\textbf{Test Graph}} \\
        \cmidrule(lr){2-4}
        \cmidrule(lr){5-8}
        \cmidrule(lr){9-12}
        & Entities & Rels & Triples
        & Entities & Rels & Triples & Valid
        & Entities & Rels & Triples & Test \\
        \midrule
        FB-25          & 5190  & 163 & 91571 & 4097  & 216 & 17147 & 5716 & 4097  & 216 & 17147 & 5716 \\
        FB-50          & 5190  & 153 & 85375 & 4445  & 205 & 11636 & 3879 & 4445  & 205 & 11636 & 3879 \\
        FB-75          & 4659  & 134 & 62809 & 2792  & 186 & 9316  & 3106 & 2792  & 186 & 9316  & 3106 \\
        FB-100         & 4659  & 134 & 62809 & 2624  & 77  & 6987  & 2329 & 2624  & 77  & 6987  & 2329 \\
        WK-25          & 12659 & 47  & 41873 & 3228  & 74  & 3391  & 1130 & 3228  & 74  & 3391  & 1131 \\
        WK-50          & 12022 & 72  & 82481 & 9328  & 93  & 9672  & 3224 & 9328  & 93  & 9672  & 3225 \\
        WK-75          & 6853  & 52  & 28741 & 2722  & 65  & 3430  & 1143 & 2722  & 65  & 3430  & 1144 \\
        WK-100         & 9784  & 67  & 49875 & 12136 & 37  & 13487 & 4496 & 12136 & 37  & 13487 & 4496 \\
        NL-0           & 1814  & 134 & 7796  & 2026  & 112 & 2287  & 763  & 2026  & 112 & 2287  & 763 \\
        NL-25          & 4396  & 106 & 17578 & 2146  & 120 & 2230  & 743  & 2146  & 120 & 2230  & 744 \\
        NL-50          & 4396  & 106 & 17578 & 2335  & 119 & 2576  & 859  & 2335  & 119 & 2576  & 859 \\
        NL-75          & 2607  & 96  & 11058 & 1578  & 116 & 1818  & 606  & 1578  & 116 & 1818  & 607 \\
        NL-100         & 1258  & 55  & 7832  & 1709  & 53  & 2378  & 793  & 1709  & 53  & 2378  & 793 \\
        Metafam        & 1316  & 28  & 13821 & 1316  & 28  & 13821 & 590  & 656   & 28  & 7257  & 184 \\
        FBNELL         & 4636  & 100 & 10275 & 4636  & 100 & 10275 & 1055 & 4752  & 183 & 10685 & 597 \\
        Wiki MT1 tax   & 10000 & 10  & 17178 & 10000 & 10  & 17178 & 1908 & 10000 & 9   & 16526 & 1834 \\
        Wiki MT1 health& 10000 & 7   & 14371 & 10000 & 7   & 14371 & 1596 & 10000 & 7   & 14110 & 1566 \\
        Wiki MT2 org   & 10000 & 10  & 23233 & 10000 & 10  & 23233 & 2581 & 10000 & 11  & 21976 & 2441 \\
        Wiki MT2 sci   & 10000 & 16  & 16471 & 10000 & 16  & 16471 & 1830 & 10000 & 16  & 14852 & 1650 \\
        Wiki MT3 art   & 10000 & 45  & 27262 & 10000 & 45  & 27262 & 3026 & 10000 & 45  & 28023 & 3113 \\
        Wiki MT3 infra & 10000 & 24  & 21990 & 10000 & 24  & 21990 & 2443 & 10000 & 27  & 21646 & 2405 \\
        Wiki MT4 sci   & 10000 & 42  & 12576 & 10000 & 42  & 12576 & 1397 & 10000 & 42  & 12516 & 1388 \\
        Wiki MT4 health& 10000 & 21  & 15539 & 10000 & 21  & 15539 & 1725 & 10000 & 20  & 15337 & 1703 \\
        \bottomrule
    \end{tabular}
    }
\end{table}
\begin{table}[t]
\scriptsize
    \centering
    \caption{Dataset statistics for inductive-$e$ link prediction datasets.
    Triples are the number of edges given at training, validation, or test,
    respectively, whereas \textit{Valid} and \textit{Test} denote triples to be predicted in the
    validation and test graphs.}
    \label{tab:inductive_e_datasets}
    \scriptsize
    \setlength{\tabcolsep}{4pt}
    \resizebox{\linewidth}{!}{
    \begin{tabular}{lccccccccc}
        \toprule
        \textbf{Dataset}
        & \textbf{Rels}
        & \multicolumn{2}{c}{\textbf{Training Graph}}
        & \multicolumn{3}{c}{\textbf{Validation Graph}}
        & \multicolumn{3}{c}{\textbf{Test Graph}} \\
        \cmidrule(lr){3-4}
        \cmidrule(lr){5-7}
        \cmidrule(lr){8-10}
        & & Entities & Triples
        & Entities & Triples & Valid
        & Entities & Triples & Test \\
        \midrule
        FB-v1       & 180 & 1594  & 4245   & 1594  & 4245   & 489   & 1093  & 1993   & 411 \\
        FB-v2       & 200 & 2608  & 9739   & 2608  & 9739   & 1166  & 1660  & 4145   & 947 \\
        FB-v3       & 215 & 3668  & 17986  & 3668  & 17986  & 2194  & 2501  & 7406   & 1731 \\
        FB-v4       & 219 & 4707  & 27203  & 4707  & 27203  & 3352  & 3051  & 11714  & 2840 \\
        WN-v1       & 9   & 2746  & 5410   & 2746  & 5410   & 630   & 922   & 1618   & 373 \\
        WN-v2       & 10  & 6954  & 15262  & 6954  & 15262  & 1838  & 2757  & 4011   & 852 \\
        WN-v3       & 11  & 12078 & 25901  & 12078 & 25901  & 3097  & 5084  & 6327   & 1143 \\
        WN-v4       & 9   & 3861  & 7940   & 3861  & 7940   & 934   & 7084  & 12334  & 2823 \\
        NL-v1       & 14  & 3103  & 4687   & 3103  & 4687   & 414   & 225   & 833    & 201 \\
        NL-v2       & 88  & 2564  & 8219   & 2564  & 8219   & 922   & 2086  & 4586   & 935 \\
        NL-v3       & 142 & 4647  & 16393  & 4647  & 16393  & 1851  & 3566  & 8048   & 1620 \\
        NL-v4       & 76  & 2092  & 7546   & 2092  & 7546   & 876   & 2795  & 7073   & 1447 \\
        ILPC Small  & 48  & 10230 & 78616  & 6653  & 20960  & 2908  & 6653  & 20960  & 2902 \\
        ILPC Large  & 65  & 46626 & 202446 & 29246 & 77044  & 10179 & 29246 & 77044  & 10184 \\
        HM 1k       & 11  & 36237 & 93364  & 36311 & 93364  & 1771  & 9899  & 18638  & 476 \\
        HM 3k       & 11  & 32118 & 71097  & 32250 & 71097  & 1201  & 19218 & 38285  & 1349 \\
        HM 5k       & 11  & 28601 & 57601  & 28744 & 57601  & 900   & 23792 & 48425  & 2124 \\
        HM Indigo   & 229 & 12721 & 121601 & 12797 & 121601 & 14121 & 14775 & 250195 & 14904 \\
        \bottomrule
    \end{tabular}
    }
\end{table}

\begin{table}[t]
    \centering
    \caption{Dataset statistics for transductive link prediction datasets.
    Entity task denotes the entity-prediction task: h/t is predicting both heads
    and tails, and t is predicting only tails.}
    \label{tab:transductive_datasets}
    \scriptsize
    \begin{tabular}{lrrrrrl}
        \toprule
        \textbf{Dataset}
        & \textbf{Entities}
        & \textbf{Rels}
        & \textbf{Train}
        & \textbf{Valid}
        & \textbf{Test}
        & \textbf{Entity Task} \\
        \midrule
        CoDEx Small     & 2034   & 42   & 32888   & 1827   & 1828   & h/t \\
        CoDEx Large     & 77951  & 69   & 551193  & 30622  & 30622  & h/t \\
        NELL995         & 74536  & 200  & 149678  & 543    & 2818   & h/t \\
        YAGO3-10         & 123182 & 37   & 1079040 & 5000   & 5000   & h/t \\
        WDsinger        & 10282  & 135  & 16142   & 2163   & 2203   & h/t \\
        NELL23k         & 22925  & 200  & 25445   & 4961   & 4952   & h/t \\
        AristoV4        & 44949  & 1605 & 242567  & 20000  & 20000  & h/t \\
        DBpedia100k     & 99604  & 470  & 597572  & 50000  & 50000  & h/t \\
        ConceptNet100k  & 78334  & 34   & 100000  & 1200   & 1200   & h/t \\
        FB15k-237(10)   & 11512  & 237  & 27211   & 15624  & 18150  & t \\
        FB15k-237(20)   & 13166  & 237  & 54423   & 16963  & 19776  & t \\
        FB15k-237(50)   & 14149  & 237  & 136057  & 17449  & 20324  & t \\
        Hetionet        & 45158  & 24   & 2025177 & 112510 & 112510 & h/t \\
        \midrule
        \multicolumn{7}{@{}l}{\textit{Pretraining}} \\
        FB15k-237       & 14541  & 237  & 272115  & 17535  & 20466  & h/t \\
        WN18RR          & 40943  & 11   & 86835   & 3034   & 3134   & h/t \\
        CoDEx Medium    & 17050  & 51   & 185584  & 10310  & 10311  & h/t \\
        \bottomrule
    \end{tabular}
\end{table}
\begin{table}[htbp]
    \scriptsize
        \centering
        \caption{Zero-shot knowledge graph link prediction results. Best in each row is \textbf{bold}, second-best is \underline{underlined}.}
        \label{tab::full_kg}
        \setlength{\tabcolsep}{3pt}
        \begin{tabular}{clcccccc|cc}
        \toprule
        & Dataset & \multicolumn{2}{c}{\textsc{Ultra}} & \multicolumn{2}{c}{\textsc{Trix}}
        & \multicolumn{2}{c}{\textsc{Flock}} & \multicolumn{2}{c}{\textbf{\textsc{Wander}}} \\
        & & MRR & H@10 & MRR & H@10 & MRR & H@10 & MRR & H@10 \\
        \midrule

        \multirow{23}{*}{\rotatebox{90}{Inductive (entity, relation)}}
        & FB-25 & 0.388 & 0.640 & \underline{0.393} & 0.650 & \textbf{0.404} & \textbf{0.664} & \textbf{0.404} & \underline{0.663} \\
        & FB-50 & \underline{0.338} & 0.543 & 0.334 & 0.547 & \textbf{0.352} & \underline{0.566} & \textbf{0.352} & \textbf{0.568} \\
        & FB-75 & 0.403 & 0.604 & 0.401 & 0.611 & \textbf{0.418} & \textbf{0.622} & \underline{0.408} & \underline{0.617} \\
        & FB-100 & 0.449 & \underline{0.642} & 0.436 & 0.635 & \underline{0.452} & \textbf{0.663} & \textbf{0.458} & \textbf{0.663} \\
        & WK-25 & \textbf{0.316} & \textbf{0.532} & \underline{0.305} & \underline{0.496} & 0.280 & 0.491 & 0.288 & 0.477 \\
        & WK-50 & \textbf{0.166} & \textbf{0.324} & \textbf{0.166} & \underline{0.313} & 0.136 & 0.278 & \underline{0.147} & 0.289 \\
        & WK-75 & 0.365 & \underline{0.537} & 0.368 & 0.513 & \textbf{0.382} & \textbf{0.538} & \underline{0.372} & 0.517 \\
        & WK-100 & 0.164 & 0.286 & \underline{0.188} & 0.299 & 0.187 & \underline{0.304} & \textbf{0.190} & \textbf{0.309} \\
        & NL-0 & 0.342 & 0.523 & \textbf{0.385} & 0.549 & \underline{0.381} & \textbf{0.606} & 0.351 & \underline{0.553} \\
        & NL-25 & \textbf{0.395} & 0.569 & \underline{0.377} & \underline{0.589} & 0.345 & \textbf{0.590} & 0.338 & 0.546 \\
        & NL-50 & \textbf{0.407} & \underline{0.570} & \underline{0.404} & 0.548 & 0.366 & 0.565 & 0.391 & \textbf{0.579} \\
        & NL-75 & \textbf{0.368} & \underline{0.547} & 0.351 & 0.525 & 0.311 & 0.524 & \underline{0.357} & \textbf{0.555} \\
        & NL-100 & 0.471 & 0.651 & \underline{0.486} & 0.676 & 0.452 & \underline{0.692} & \textbf{0.487} & \textbf{0.707} \\
        & MT1 tax & 0.224 & 0.305 & \textbf{0.358} & \textbf{0.452} & \underline{0.282} & 0.383 & 0.281 & \underline{0.410} \\
        & MT1 health & 0.298 & 0.374 & 0.376 & 0.457 & \underline{0.385} & \underline{0.481} & \textbf{0.391} & \textbf{0.483} \\
        & MT2 org & 0.095 & 0.159 & 0.091 & 0.156 & \underline{0.100} & \underline{0.163} & \textbf{0.102} & \textbf{0.167} \\
        & MT2 sci & 0.258 & 0.354 & \underline{0.323} & \underline{0.465} & 0.318 & 0.458 & \textbf{0.359} & \textbf{0.526} \\
        & MT3 art & 0.259 & 0.402 & 0.284 & \underline{0.441} & \underline{0.301} & \textbf{0.466} & \textbf{0.304} & \textbf{0.466} \\
        & MT3 infra & 0.619 & 0.755 & 0.655 & 0.797 & \textbf{0.684} & \underline{0.821} & \underline{0.677} & \textbf{0.822} \\
        & MT4 sci & 0.274 & 0.449 & 0.290 & 0.460 & \underline{0.301} & \underline{0.463} & \textbf{0.310} & \textbf{0.480} \\
        & MT4 health & 0.624 & 0.737 & 0.677 & 0.775 & \underline{0.680} & \underline{0.780} & \textbf{0.701} & \textbf{0.787} \\
        & Metafam & 0.238 & 0.644 & 0.341 & 0.815 & \underline{0.476} & \underline{0.935} & \textbf{0.504} & \textbf{0.941} \\
        & FBNELL & 0.485 & 0.652 & 0.473 & 0.660 & \underline{0.502} & \underline{0.700} & \textbf{0.526} & \textbf{0.712} \\

        \midrule
        \multirow{18}{*}{\rotatebox{90}{ Inductive (entity)}}
        & FB-v1 & 0.498 & 0.656 & \underline{0.515} & 0.682 & 0.500 & \underline{0.697} & \textbf{0.524} & \textbf{0.701} \\
        & FB-v2 & 0.512 & 0.700 & 0.525 & 0.730 & \underline{0.535} & \underline{0.737} & \textbf{0.540} & \textbf{0.742} \\
        & FB-v3 & 0.491 & 0.654 & 0.501 & 0.669 & \underline{0.511} & \underline{0.685} & \textbf{0.514} & \textbf{0.693} \\
        & FB-v4 & 0.486 & 0.677 & 0.493 & 0.687 & \textbf{0.505} & \underline{0.702} & \underline{0.504} & \textbf{0.703} \\
        & WN-v1 & 0.648 & 0.768 & \textbf{0.699} & 0.791 & \underline{0.698} & \underline{0.803} & 0.690 & \textbf{0.807} \\
        & WN-v2 & 0.663 & 0.765 & 0.678 & 0.781 & \textbf{0.696} & \underline{0.790} & \underline{0.687} & \textbf{0.793} \\
        & WN-v3 & 0.376 & 0.476 & 0.418 & 0.541 & \underline{0.467} & \underline{0.608} & \textbf{0.478} & \textbf{0.627} \\
        & WN-v4 & 0.611 & 0.705 & 0.648 & 0.723 & \textbf{0.653} & \underline{0.729} & \underline{0.650} & \textbf{0.731} \\
        & NL-v1 & \underline{0.785} & \textbf{0.913} & \textbf{0.806} & \underline{0.898} & 0.658 & 0.863 & 0.677 & 0.878 \\
        & NL-v2 & 0.526 & 0.707 & 0.569 & 0.768 & \textbf{0.588} & \textbf{0.797} & \underline{0.587} & \underline{0.794} \\
        & NL-v3 & 0.515 & 0.702 & 0.558 & 0.743 & \underline{0.590} & \underline{0.783} & \textbf{0.610} & \textbf{0.796} \\
        & NL-v4 & 0.479 & 0.712 & 0.538 & 0.765 & \underline{0.555} & \underline{0.786} & \textbf{0.572} & \textbf{0.795} \\
        & HM 1k & 0.059 & 0.092 & \textbf{0.072} & \textbf{0.128} & \underline{0.069} & \underline{0.119} & 0.062 & \underline{0.119} \\
        & HM 3k & 0.037 & 0.077 & \textbf{0.069} & \underline{0.119} & \underline{0.067} & 0.118 & \textbf{0.069} & \textbf{0.126} \\
        & HM 5k & 0.034 & 0.071 & 0.062 & 0.110 & \underline{0.064} & \underline{0.116} & \textbf{0.067} & \textbf{0.126} \\
        & HM Indigo & \textbf{0.440} & \textbf{0.648} & \underline{0.436} & \underline{0.645} & 0.423 & 0.638 & 0.416 & 0.636 \\
        & ILPC Small & 0.302 & 0.443 & 0.303 & \underline{0.455} & \underline{0.309} & \textbf{0.459} & \textbf{0.310} & \textbf{0.459} \\
        & ILPC Large & 0.290 & 0.424 & 0.307 & 0.428 & \underline{0.318} & \underline{0.438} & \textbf{0.321} & \textbf{0.440} \\

        \midrule
        \multirow{13}{*}{\rotatebox{90}{ Transductive}}
        & NELL995 & 0.406 & 0.543 & 0.472 & 0.629 & \underline{0.494} & \underline{0.655} & \textbf{0.519} & \textbf{0.662} \\
        & NELL23k & 0.239 & 0.408 & \textbf{0.290} & \textbf{0.497} & 0.233 & 0.398 & \underline{0.249} & \underline{0.413} \\
        & WDsinger & 0.382 & 0.498 & \textbf{0.511} & \textbf{0.609} & \underline{0.410} & \underline{0.528} & 0.402 & 0.521 \\
        & ConceptNet100k & 0.082 & 0.162 & 0.193 & 0.345 & \underline{0.248} & \underline{0.453} & \textbf{0.258} & \textbf{0.477} \\
        & CoDEx Small & \textbf{0.472} & \underline{0.667} & \textbf{0.472} & \textbf{0.670} & 0.441 & 0.644 & \underline{0.457} & 0.658 \\
        & CoDEx Large & 0.338 & \underline{0.469} & 0.335 & \underline{0.469} & \underline{0.342} & 0.464 & \textbf{0.345} & \textbf{0.475} \\
        & YAGO310 & \underline{0.451} & 0.615 & 0.409 & 0.627 & 0.414 & \textbf{0.674} & \textbf{0.499} & \underline{0.666} \\
        & AristoV4 & 0.182 & 0.282 & 0.181 & 0.286 & \textbf{0.308} & \textbf{0.443} & \underline{0.295} & \underline{0.424} \\
        & DBpedia100k & 0.398 & 0.576 & 0.426 & 0.603 & \textbf{0.450} & \textbf{0.627} & \underline{0.436} & \underline{0.625} \\
        & Hetionet & \underline{0.257} & 0.379 & \textbf{0.279} & \textbf{0.420} & 0.246 & 0.371 & \textbf{0.279} & \underline{0.415} \\
        & FB15k-237(10) & \textbf{0.248} & 0.398 & \underline{0.246} & 0.393 & \underline{0.246} & \underline{0.402} & \textbf{0.248} & \textbf{0.404} \\
        & FB15k-237(20) & 0.272 & 0.436 & 0.269 & 0.430 & \underline{0.273} & \underline{0.444} & \textbf{0.277} & \textbf{0.451} \\
        & FB15k-237(50) & \textbf{0.324} & \textbf{0.526} & 0.321 & 0.521 & 0.319 & 0.518 & \underline{0.323} & \underline{0.523} \\

        \midrule
        & Mean & 0.366 & 0.518 & 0.390 & 0.548 & \underline{0.391} & \underline{0.560} & \textbf{0.399} & \textbf{0.565} \\
        & Mean rank & 3.250 & 3.417 & 2.657 & 2.750 & \underline{2.343} & \underline{2.222} & \textbf{1.750} & \textbf{1.611} \\
        \bottomrule
        \end{tabular}
        \end{table}

\begin{table}[t]
    \centering
    \scriptsize
    \setlength{\tabcolsep}{4pt}
    \caption{Random-walk configuration used for knowledge-graph link prediction.
    We follow the entity-prediction evaluation configuration of
    \textsc{Flock}~\citep{kim2025flock}. Each \textsc{Wander} structural update samples $3K_{\text{base}}$
    walks. The walk length is $S=128$ for all datasets.}
    \label{tab:kg-walk-budgets}
    \begin{tabular}{lcc}
        \toprule
        \textbf{Dataset(s)}
        & \textbf{Base walks $K_{\text{base}}$}
        & \textbf{Inference samples} \\
        \midrule

        \multicolumn{3}{l}{\textit{Inductive $(e,r)$}} \\
        FB-25, FB-50              & 16 & 16 \\
        FB-75, FB-100             & 8  & 16 \\
        WK-25, WK-75              & 4  & 16 \\
        WK-50, WK-100             & 16 & 16 \\
        NL-0, NL-25, NL-75        & 2  & 16 \\
        NL-50, NL-100             & 4  & 16 \\
        MT1-tax                    & 4  & 16 \\
        MT1-health                 & 16 & 16 \\
        MT2-org                    & 32 & 16 \\
        MT2-sci                    & 16 & 16 \\
        MT3-art                    & 32 & 16 \\
        MT3-infra                  & 32 & 16 \\
        MT4-sci                    & 16 & 16 \\
        MT4-health                 & 16 & 16 \\
        Metafam, FBNELL            & 16 & 16 \\
        \midrule

        \multicolumn{3}{l}{\textit{Inductive $(e)$}} \\
        FB-v1--v4                  & 16  & 16 \\
        WN-v1--v4                  & 16  & 16 \\
        NL-v1--v4                  & 16  & 16 \\
        ILPC Small                 & 16  & 16 \\
        ILPC Large                 & 64  & 16 \\
        HM 1k                      & 16  & 16 \\
        HM 3k                      & 32  & 16 \\
        HM 5k                      & 64  & 16 \\
        HM Indigo                  & 128 & 16 \\
        \midrule

        \multicolumn{3}{l}{\textit{Transductive}} \\
        CoDEx Small                & 16  & 16 \\
        CoDEx Large                & 512 & 2  \\
        NELL-995                   & 128 & 16 \\
        YAGO3-10                   & 512 & 8  \\
        WDsinger                   & 16  & 16 \\
        NELL23k                    & 32  & 16 \\
        FB15k-237(10)              & 32  & 16 \\
        FB15k-237(20)              & 64  & 16 \\
        FB15k-237(50)              & 64  & 16 \\
        Hetionet                   & 512 & 2  \\
        DBpedia100k                & 512 & 2  \\
        AristoV4                   & 256 & 4  \\
        ConceptNet100k             & 128 & 16 \\
        \bottomrule
    \end{tabular}
\end{table}

\subsection{Homogeneous link prediction}\label{app::homo_lp_eval}

\paragraph{Datasets.}
We follow the homogeneous link-prediction benchmark introduced by
\textsc{AnyGraph}~\citep{xia2024anygraph}, using the dataset versions and
train/validation/test splits released with its official implementation.
Our evaluation comprises ten graphs: CiteSeer, Cora, PubMed, Coauthor CS,
DDI, Products Home, P2P Gnutella06, Email-Enron, Proteins Spec1, and
Soc Epinions.
These datasets cover citation, coauthorship, e-commerce, peer-to-peer,
communication, biological, and social networks, and include both attributed
and unattributed graphs.
Dataset statistics are reported in
Table~\ref{tab:homo-lp-statistics}.

We adopt a slightly stricter evaluation protocol than the original
\textsc{AnyGraph} benchmark.
Although the graphs are treated as undirected, we found that for some test
edges $(u,v)$, the reverse edge $(v,u)$ occurs in the training split.
Such cases effectively reveal the test edge during training.
We therefore remove all test edges whose reverse is present in the training
graph.
For attributed datasets, we apply the same feature preprocessing as in the
node-classification experiments described below.

\paragraph{Baselines and hyperparameter selection.}
We compare \textsc{Wander} against the end-to-end link-prediction models
\textsc{NBFNet}~\citep{zhu2022neural} and \textsc{Buddy}~\citep{chamberlain2022graph}, as well
as the graph foundation model \textsc{AnyGraph}~\citep{xia2024anygraph}.
For \textsc{NBFNet} and \textsc{Buddy}, we perform dataset-specific hyperparameter selection
based on validation performance.
Both models are trained for at most 40 epochs with early stopping using a
patience of five epochs.

For \textsc{NBFNet}, we fix the hidden dimension to 32 following the official implementation and search over learning rates
$\{10^{-3},5\cdot10^{-3}\}$, network depths $\{3,5,6\}$, and numbers of
sampled negative examples $\{1,32,256\}$.
For \textsc{Buddy}, we search over hidden dimensions $\{128,256\}$, learning rates
$\{10^{-3},5\cdot10^{-3}\}$, and the same set of negative-sampling budgets
$\{1,32,256\}$.
The validation-selected hyperparameters for each dataset are reported in
Table~\ref{tab:lp-baseline-selected-hparams}.

Unlike \textsc{NBFNet} and \textsc{Buddy}, \textsc{AnyGraph} is used as a pretrained graph
foundation model.
Its authors provide two checkpoints pretrained on disjoint collections of
real-world graphs.
For each evaluation dataset, we use the checkpoint whose pretraining set does
not contain that dataset, thereby avoiding overlap between pretraining and
evaluation data.

\paragraph{\textsc{Wander} inference.}
We use the same link-prediction random-walk sampling protocol as during
pretraining (Appendix~\ref{app::method}). For attributed graphs, we use a base walk count
of $K_{\text{base}}=64$ and walk length $S=64$; for graphs without node features, we
use $K_{\text{base}}=128$ and $S=128$. Thus, each structural update samples $3K_{\text{base}}$ walks
according to the three starting distributions described in Appendix~\ref{app::method}.
Unless stated otherwise, we average predictions over 16 independently
sampled forward passes.

\begin{table}[htbp]
    \centering
    \caption{Val-selected hyperparameters for the \textsc{NBFNet} and \textsc{Buddy} link-prediction baselines.}
    \label{tab:lp-baseline-selected-hparams}
    \footnotesize
    \setlength{\tabcolsep}{4pt}
    \begin{tabular}{l rrr rrr}
    \toprule
     & \multicolumn{3}{c}{\textsc{NBFNet}} & \multicolumn{3}{c}{\textsc{Buddy}} \\
    \cmidrule(lr){2-4} \cmidrule(lr){5-7}
    Dataset & Depth & lr & \#Neg. & Hidden & lr & \#Neg. \\
    \midrule
    CiteSeer           & 3 & 0.005 & 1   & 256 & 0.005 & 1   \\
    Cora               & 5 & 0.001 & 32  & 128 & 0.005 & 32  \\
    PubMed             & 5 & 0.005 & 1   & 256 & 0.001 & 256 \\
    CS                 & 3 & 0.005 & 32  & 256 & 0.001 & 256 \\
    DDI                & 5 & 0.005 & 256 & 256 & 0.001 & 256 \\
    P2P Gnutella06     & 6 & 0.005 & 1   & 256 & 0.005 & 1   \\
    Email Enron        & 5 & 0.005 & 256 & 256 & 0.001 & 256 \\
    Proteins Spec1     & 6 & 0.005 & 32  & 128 & 0.001 & 256 \\
    SOC Epinions       & 6 & 0.005 & 256 & 256 & 0.001 & 32  \\
    Products Home      & 6 & 0.005 & 32  & 256 & 0.005 & 32  \\
    \bottomrule
    \end{tabular}
    \end{table}

\begin{table}[htbp]
    \centering
    \caption{Statistics of the homogeneous link-prediction datasets. Test sources are unique heads of test edges. Original is the AnyGraph split. Remaining drops a test edge when either direction already appears in the training graph, which is stricter than AnyGraph's directed protocol; a source is kept only if at least one test edge survives.}
    \label{tab:homo-lp-statistics}
    \scriptsize
    \setlength{\tabcolsep}{3pt}
    \begin{tabular}{lrrrrrrrrr}
    \toprule
    Dataset & \#Nodes & \#Edges & Avg.\ degree & Clustering Coeff. & Diameter & \#Features & \shortstack{\#Test sources\\(original)} & \shortstack{\#Test sources\\(filtered)} \\
    \midrule
    CiteSeer         & 3,327  & 4,552     & 2.7   & 0.141 & 28 & 3,703 & 423    & 175 \\
    Cora            & 2,708  & 5,278     & 3.9   & 0.241 & 18 & 1,433 & 399    & 192 \\
    PubMed          & 19,717 & 44,324    & 4.5   & 0.060 & 16 & 500   & 2,041  & 895 \\
    CS              & 18,333 & 81,894    & 8.9   & 0.343 & 24 & 6,805 & 1,973  & 1,377 \\
    DDI             & 4,267  & 1,201,400 & 563.1 & 0.576 & 5  & --    & 1,587  & 1,587 \\
    P2P Gnutella06  & 8,717  & 31,525    & 7.2   & 0.007 & 9  & --    & 2,107  & 2,107 \\
    Email Enron     & 36,692 & 183,831   & 10.0  & 0.497 & 11 & --    & 13,137 & 4,400 \\
    Proteins Spec1 & 6,568  & 922,980   & 281.1 & 0.267 & 5  & --    & 6,124  & 5,442 \\
    SOC Epinions   & 75,888 & 405,740   & 10.7  & 0.138 & 12 & --    & 17,695 & 14,511 \\
    Products Home  & 9,790  & 65,919    & 13.5  & 0.321 & 22 & 100   & 4,106  & 1,681 \\
    \bottomrule
    \end{tabular}
    \end{table}
    
\paragraph{Comparison to UniLP.}
\textsc{UniLP} \citep{dong2024universal} and \textsc{TFMLinker} \citep{liao2026tfmlinker} are two recently proposed graph foundation models for homogeneous link prediction.
We do not include them in our main comparison for three reasons.
First, both models score links conditioned on a specific pair of endpoints, whereas \textsc{Wander} conditions on a source node and jointly scores candidate targets.
Consequently, evaluating source-conditioned Recall@20 over all candidate nodes would require pairwise models to separately evaluate a large number of candidate edges, making our main evaluation protocol computationally impractical.
Second, both methods are designed for unattributed graphs and therefore cannot exploit the node features available in several of our benchmarks.
Third, for TFMLinker, we were unable to identify an official implementation at the time of evaluation.

To nevertheless provide a direct comparison, we evaluate \textsc{Wander} and \textsc{UniLP} on the datasets considered by \textsc{UniLP} using a sampled, source-conditioned ranking protocol.
\textsc{UniLP} originally reports Hits@50 by comparing positive edges against sampled negative edges whose endpoints may differ.
Such comparisons are less natural for \textsc{Wander}, whose scores are produced jointly for candidate tails conditioned on a fixed source and are therefore primarily meaningful relative to other candidates for the same query.
As a middle ground, we use sampled Recall@20: for each source node, we rank all of its held-out positive neighbors together with 100 sampled non-edges and report the fraction of positive neighbors appearing among the top 20 candidates.
We evaluate on two random $70/10/20$ train/validation/test splits.

Table~\ref{tab:unilp-vs-wander-sampled-recall} shows that, without node features, the two models achieve essentially identical average performance (85.4\% Recall@20), with \textsc{Wander} outperforming \textsc{UniLP} on Celegans, USAir, and Cora, and \textsc{UniLP} performing better on the remaining four datasets.
At the same time, \textsc{Wander} is substantially faster (under same hardware, A100 GPU) under this evaluation protocol. 
For the three datasets with node attributes, we additionally report \textsc{Wander} with features enabled.
This improves performance on Cora, CS, and Facebook, respectively, exceeding the featureless \textsc{UniLP} results on all three datasets.
\begin{table}[htbp]
\scriptsize
    \centering
    \caption{Sampled Recall@20 and test-only wall-clock for \textsc{UniLP} vs.\ \textsc{Wander} on two random $70/10/20$ splits. Per source: rank that source's eval neighbors plus $100$ sampled non-edges.}
    \label{tab:unilp-vs-wander-sampled-recall}
    \setlength{\tabcolsep}{3pt}
    \begin{tabular}{l rr rr rr rr rr rr rr}
    \toprule
     & \multicolumn{2}{c}{Celegans} & \multicolumn{2}{c}{USAir} & \multicolumn{2}{c}{NS} & \multicolumn{2}{c}{PB} & \multicolumn{2}{c}{Cora} & \multicolumn{2}{c}{CS} & \multicolumn{2}{c}{Facebook} \\
    \cmidrule(lr){2-3} \cmidrule(lr){4-5} \cmidrule(lr){6-7} \cmidrule(lr){8-9}
    \cmidrule(lr){10-11} \cmidrule(lr){12-13} \cmidrule(lr){14-15}
     & R@20 & min & R@20 & min & R@20 & min & R@20 & min & R@20 & min & R@20 & min & R@20 & min \\
    \midrule
    \textsc{UniLP} & 70.9 & 57 & 85.2 & 45 & \textbf{94.2} & 112 & \textbf{83.4} & 450 & 75.4 & 269 & \textbf{93.1} & 187 & \textbf{95.5} & 438 \\
    \textsc{Wander} \tiny (w/o feat.) & \textbf{77.2} & \textbf{3} & \textbf{90.6} & \textbf{3} & 89.4 & \textbf{6} & 80.7 & \textbf{13} & \textbf{76.4} & \textbf{15} & 91.3 & \textbf{45} & 92.2 & \textbf{52} \\
    \midrule
    \textsc{Wander} \tiny (w/ feat.) & -- & -- & -- & -- & -- & -- & -- & -- & 90.6 & 115 & 97.0 & 142 & 95.8 & 159 \\
    \bottomrule
    \end{tabular}

\vspace{4pt}
\end{table}

\subsection{Node classification on simple graphs}\label{app::node_classification}

Detailed results for the zero-shot node-classification evaluation are reported in
Table~\ref{tab:node-cls-multiclass}. We report test accuracy averaged over all
available data splits. The reported standard deviation reflects variation across
these splits and, for stochastic models (\textsc{NodePFN}, \textsc{GraphPFN}, and \textsc{Wander}),
additionally across three inference passes with different random seeds.

\paragraph{Datasets.}
We evaluate on a benchmark suite of 26 multiclass node-classification datasets,
using official data splits whenever available. The suite comprises Cornell,
Texas, Wisconsin, and Actor from \citet{pei2020geom}; the Brazil, Europe, and USA
airport networks from \citet{ribeiro2017struc2vec}; Chameleon and Squirrel from
\citet{rozemberczki2021multi}; Cora, CiteSeer, and PubMed from
\citet{yang2016revisiting}; Photo, Computers, Coauthor CS, and Coauthor Physics
from \citet{shchur2018pitfalls}; Wiki CS from \citet{mernyei2020wiki}; Full DBLP
and Full Cora from \citet{bojchevski2017deep}; Roman Empire and Amazon Ratings
from \citet{platonov2023critical}; HM Categories and Pokec Regions from
\citet{bazhenov2026graphland}; and Paris, Shanghai and LA from \citet{liang2026towards}.

As Pokec Regions contains approximately \(1.6\) million nodes, we construct a
smaller variant, \textit{Pokec 100k}, to facilitate evaluation. We randomly
select a training node as the root and retain the first \(100{,}000\) nodes
visited by a breadth-first search. Within the resulting connected subgraph, we
retain labels for the ten most frequent classes and treat nodes belonging to all
remaining classes as unlabeled. Labeled nodes retain their original GraphLand
random-low split assignments, while unlabeled nodes are excluded from all
splits.

Following the concerns about duplicate nodes in Chameleon and Squirrel raised
by \citet{platonov2023critical}, we use the filtered versions of
both datasets proposed by the authors.

Whenever official splits are provided, we evaluate on all available splits and
average the resulting test performance. For HM Categories, we use the
random-low split provided by GraphLand. The Brazil, Europe, USA, Photo,
Computers, Full DBLP, Full Cora, Coauthor CS, and Coauthor Physics datasets do
not provide official splits. For these datasets, following
\citet{zhao2024graphany,finkelshtein2026equivariance}, we sample \(20\) training
nodes per class and divide the remaining nodes equally between validation and
test sets. We repeat this procedure for five random seeds and report results aggregated across the resulting splits.

Table~\ref{tab:node-cls-statistics} summarizes the main statistics of the
evaluation datasets. In addition to standard graph and task statistics, we
report \emph{unbiased homophily} as proposed by \citet{mironov2024revisiting}.

\begin{table}[htbp]
    \scriptsize
    \centering
    \caption{Statistics of considered node classification datasets. ``\# official splits'' counts train/val/test masks shipped with the dataset. ``--'' denotes datasets without an official split; we use a custom protocol there (5 seeds, 20 labeled nodes per class).}
    \label{tab:node-cls-statistics}
    \setlength{\tabcolsep}{3pt}
    \begin{tabular}{clrrrrrrc}
    \toprule
    Group & Dataset & \# nodes & \# features & \# classes & \% train & avg.\ deg. & \shortstack{unbiased\\homophily} & \# official splits \\
    \midrule
    \multirow{3}{*}{\rotatebox{0}{\scriptsize WebKB}}
    & Cornell         & 183   & 1.7k & 5  & 47.5\% & 3.0   & -0.47 & 10 \\
    & Texas           & 183   & 1.7k & 5  & 47.5\% & 3.0   & -0.81 & 10 \\
    & Wisconsin       & 251   & 1.7k & 5  & 47.8\% & 3.6   & -0.29 & 10 \\
    \midrule
    \multirow{3}{*}{\rotatebox{0}{ Airports}}
    & Air Brazil      & 131   & 131  & 4  & 61.1\% & 15.3  & 0.00 & -- \\
    & Air Europe      & 399   & 399  & 4  & 20.1\% & 30.0  & -0.12 & -- \\
    & Air USA         & 1.2k  & 1.2k & 4  & 6.7\%  & 22.9  & 0.43 & -- \\
    \midrule
    \multirow{2}{*}{\shortstack{Wiki \\filtered}}
    & Chameleon filt. & 890   & 2.3k & 5  & 46.0\% & 19.9  & 0.08 & 10 \\
    & Squirrel filt.  & 2.2k  & 2.1k & 5  & 47.4\% & 42.3  & 0.02 & 10 \\
    \midrule
    & Actor           & 7.6k  & 932  & 5  & 48.0\% & 7.0   & 0.01 & 10 \\
    \midrule
    \multirow{3}{*}{\rotatebox{0}{ Planetoid}}
    & Cora            & 2.7k  & 1.4k & 7  & 5.2\%  & 3.9   & 0.92 & 1  \\
    & CiteSeer        & 3.3k  & 3.7k & 6  & 3.6\%  & 2.7   & 0.85 & 1  \\
    & PubMed          & 19.7k & 500  & 3  & 0.3\%  & 4.5   & 0.77 & 1  \\
    \midrule
    \multirow{2}{*}{\rotatebox{0}{ Amazon}}
    & Amz. Photos     & 7.7k  & 745  & 8  & 2.1\%  & 31.1  & 0.93 & -- \\
    & Amz. Computers  & 13.8k & 767  & 10 & 1.5\%  & 35.8  & 0.91 & -- \\
    \midrule
    & Wiki CS         & 11.7k & 300  & 10 & 5.0\%  & 36.9  & 0.86 & 20 \\
    \midrule
    \multirow{2}{*}{\rotatebox{0}{\shortstack{Citation\\full}}}
    & Full DBLP       & 17.7k & 1.6k & 4  & 0.5\%  & 6.0   & 0.82 & -- \\
    & Full Cora       & 19.8k & 8.7k & 70 & 7.1\%  & 6.4   & 0.97 & -- \\
    \midrule
    & Roman Empire    & 22.7k & 300  & 18 & 50.0\% & 2.9   & -0.49 & 10 \\
    \midrule
    & Amz. Ratings    & 24.5k & 300  & 5  & 50.0\% & 7.6   & 0.28 & 10 \\
    \midrule
    \multirow{2}{*}{\rotatebox{0}{ Coauthor}}
    & Co. CS          & 18.3k & 6.8k & 15 & 1.6\%  & 8.9   & 0.96 & -- \\
    & Co. Physics     & 34.5k & 8.4k & 5  & 0.3\%  & 14.4  & 0.95 & -- \\
    \midrule
    & HM Categories   & 46.6k & 32   & 21 & 10.0\% & 460.9 & 0.38 & 1  \\
    \midrule
    & Pokec 100k      & 100k  & 56   & 10 & 4.0\%  & 20.6  & 0.82 & 1  \\
    \midrule
    \multirow{3}{*}{\rotatebox{0}{\shortstack{City \\networks}}}
    & Paris           & 114.1k & 37   & 10 & 10.0\% & 3.2   & 0.87 & 1  \\
    & Shanghai        & 183.9k & 37   & 10 & 10.0\% & 2.9   & 0.89 & 1  \\
    & LA              & 240.6k & 37   & 10 & 10.0\% & 2.8   & 0.89 & 1  \\
    \midrule
    \multirow{2}{*}{\rotatebox{0}{\scriptsize Synth.}}
    & Cycles          & 5.0k  & 0    & 4  & 10.0\% & 3.0   & 0.75 & 1  \\
    & Grids           & 3.1k  & 0    & 5  & 0.3\%  & 3.8   & 1.00 & 1  \\
    \bottomrule
    \end{tabular}
\end{table}
\begin{table}[htbp]
    \scriptsize
    \centering
    \caption{\textsc{GCN} grid setting chosen for each dataset.
    Shared architecture: 3 layers, hidden dimension 256, weight decay $5 \times 10^{-4}$.}
    \label{tab:gcn-grid-hparams}
    \setlength{\tabcolsep}{3pt}
    \begin{tabular}{clcccc}
    \toprule
    Group & Dataset & Learning rate & Dropout rate & Layer norm & Skip connection \\
    \midrule
    \multirow{3}{*}{\rotatebox{0}{ \scriptsize WebKB}}
    & Cornell         & $10^{-3}$ & 0.1 & -- & -- \\
    & Texas           & $10^{-3}$ & 0.1 & -- & -- \\
    & Wisconsin       & $10^{-3}$ & 0.0 & \checkmark & \checkmark \\
    \midrule
    \multirow{3}{*}{\rotatebox{0}{ Airports}}
    & Brazil      & $10^{-4}$ & 0.1 & \checkmark & -- \\
    & Europe      & $3 \times 10^{-4}$ & 0.1 & -- & -- \\
    & USA         & $10^{-3}$ & 0.2 & -- & -- \\
    \midrule
    \multirow{2}{*}{\shortstack{Wiki \\filtered}}
    & Chameleon filt. & $10^{-3}$ & 0.0 & \checkmark & \checkmark \\
    & Squirrel filt.  & $3 \times 10^{-4}$ & 0.1 & \checkmark & -- \\
    \midrule
    & Actor           & $3 \times 10^{-3}$ & 0.2 & -- & \checkmark \\
    \midrule
    \multirow{3}{*}{\rotatebox{0}{ Planetoid}}
    & Cora            & $3 \times 10^{-3}$ & 0.1 & -- & -- \\
    & CiteSeer        & $3 \times 10^{-3}$ & 0.1 & \checkmark & -- \\
    & PubMed          & $3 \times 10^{-3}$ & 0.0 & -- & -- \\
    \midrule
    \multirow{2}{*}{\rotatebox{0}{ Amazon}}
    & Amz. Photos     & $10^{-3}$ & 0.2 & \checkmark & -- \\
    & Amz. Computers  & $3 \times 10^{-4}$ & 0.0 & \checkmark & -- \\
    \midrule
    & Wiki CS         & $10^{-3}$ & 0.1 & -- & \checkmark \\
    \midrule
    \multirow{2}{*}{\rotatebox{0}{\shortstack{Citation\\full}}}
    & Full DBLP       & $3 \times 10^{-3}$ & 0.1 & \checkmark & -- \\
    & Full Cora       & $3 \times 10^{-3}$ & 0.2 & \checkmark & -- \\
    \midrule
    & Roman Empire    & $3 \times 10^{-3}$ & 0.2 & -- & \checkmark \\
    \midrule
    & Amz. Ratings    & $3 \times 10^{-3}$ & 0.2 & \checkmark & \checkmark \\
    \midrule
    \multirow{2}{*}{\rotatebox{0}{ Coauthor}}
    & Co. CS          & $3 \times 10^{-3}$ & 0.1 & -- & \checkmark \\
    & Co. Physics     & $10^{-3}$ & 0.1 & \checkmark & -- \\
    \midrule
    & HM Categories   & $3 \times 10^{-3}$ & 0.2 & \checkmark & -- \\
    \midrule
    & Pokec 100k      & $3 \times 10^{-3}$ & 0.2 & -- & -- \\
    \midrule
    \multirow{3}{*}{\rotatebox{0}{\shortstack{City \\networks}}}
    & Paris           & $3 \times 10^{-3}$ & 0.2 & \checkmark & -- \\
    & Shanghai        & $10^{-3}$ & 0.2 & \checkmark & -- \\
    & LA              & $3 \times 10^{-3}$ & 0.2 & \checkmark & -- \\
    \bottomrule
    \end{tabular}
\end{table}
\paragraph{Baselines.}
We compare against a strongly tuned three-layer GCN~\citep{kipf2016semi} with
hidden dimension 256, trained for 1000 epochs. For each dataset, we perform a
grid search over the learning rate
\(\{3\!\cdot\!10^{-5}, 10^{-4}, 3\!\cdot\!10^{-4}, 10^{-3}, 3\!\cdot\!10^{-3}\}\),
the use of layer normalization and skip connections, and dropout rates in
\(\{0, 0.1, 0.2\}\). Hyperparameters are selected based on validation performance
on a single split and subsequently fixed for evaluation on all available splits
of the corresponding dataset. The selected hyperparameters are reported in
Table~\ref{tab:gcn-grid-hparams}.

We further consider the graph foundation models \textsc{GraphAny}~\citep{zhao2024graphany},
\textsc{NodePFN}~\citep{choi2026learning}, and \textsc{GraphPFN}~\citep{eremeev2025graphpfn}, as
well as the tabular foundation model \textsc{TabPFN}v3~\citep{tabpfn3}. We reevaluate all
models using their official implementations. \textsc{NodePFN} supports at most 20
classes and can therefore not be evaluated on Full Cora (70 classes) or
HM Categories (21 classes). \textsc{GraphPFN} is architecturally restricted to 10 output
classes, but circumvents this restriction through error-correcting output codes
(ECOC); following its default evaluation protocol, we use an ensemble of 10
models. \textsc{Wander} natively supports an arbitrary number of
classes.

\paragraph{Preprocessing.}
For {GraphPFN} and \textsc{Wander}, we follow the transductive
preprocessing pipeline provided with {GraphPFN}. On the GraphLand datasets
HM Categories and Pokec 100k, numerical features are transformed using a
quantile-normal transformation, fraction-valued features using a
quantile-uniform transformation, and categorical features are retained as
ordinal encodings. On all remaining datasets, we remove feature columns that
are constant over the training nodes and standardize each remaining feature
column. For datasets with more than 64 feature dimensions, we subsequently
apply PCA to reduce the representation to 64 dimensions. This dimensionality
reduction limits memory consumption and more closely matches the feature
dimensionalities encountered by {GraphPFN} and \textsc{Wander} during
pretraining. An evaluation without dimension reduction (PCA) for GraphPFN resulted in a worse mean performance and was not used for the results in Table \ref{tab:node-cls-multiclass-summary}.

For {NodePFN} and {GraphAny}, we retain the preprocessing procedures provided by
their respective official implementations. \textsc{NodePFN} additionally performs model
selection over truncated-SVD feature dimensions
\(\{10,15,20,25,30\}\) and different numbers of smoothing layers. We also preserve
this procedure for datasets that are not included in the original \textsc{NodePFN}
evaluation code.

For TabPFNv3 we also apply PCA to reduce feature representations to 64 dimensions. An evaluation without dimension reduction resulted in a worse mean performance and was not used for the results in Table \ref{tab:node-cls-multiclass-summary}.

\paragraph{Inference parameters and scalable evaluation of \textsc{Wander}.}
In a standard \textsc{Wander} forward pass, predictions can be obtained for
nodes that are visited by the sampled random walks. At the same time, the
in-context learning mechanism relies on sufficiently broad coverage of labeled
context nodes. Obtaining such coverage with a single forward pass becomes
computationally expensive on large graphs, as it would require sampling a very
large number of walks.

We therefore implement a batch-wise node-classification inference procedure that
separates the processing of context and query nodes. First, we perform a
forward pass in which random walks are initialized exclusively from training
nodes, sampled in a class-stratified manner, and cache the resulting
intermediate representations of all visited training nodes. We then partition the test nodes into batches of at most 128 by greedy BFS: starting from the highest-degree remaining test node, we collect nearby test nodes in hop order (paths may go through non-test nodes) until the batch is full. This keeps the query nodes in a batch graph-local, so their random walks share neighborhood coverage. For each batch, random walks are initialized exclusively from its query nodes, while the
cached training-node representations are supplied as context during the
in-context learning update. This allows test nodes to be processed in batches
without repeatedly recomputing representations for the labeled context.

Although this procedure is primarily motivated by scalability on large graphs,
we use it consistently for all node-classification experiments. We adjust the
random-walk budget according to graph size. For graphs with at most \(1{,}000\),
\(1{,}001\)--\(10{,}000\), \(10{,}001\)--\(40{,}000\), and more than \(40{,}000\)
nodes, we use base walk counts of \(32\), \(64\), \(128\), and \(256\),
respectively. We use a walk length of \(32\) for graphs with at most
\(1{,}000\) nodes and \(128\) otherwise. In both passes, we retain the total budget of $3K_{\text{base}}$ walks per structural update, only their starting-node distribution differs from the standard sampling scheme of Appendix~\ref{app::method}.

\subsection{Analysis of dataset-specific performance gaps}
\label{app:performance_gaps}

While \textsc{Wander} performs strongly on average, its advantage is not
uniform across datasets. We investigate two possible contributors to these
performance gaps: insufficient random-walk coverage on particularly dense
graphs and mismatch between the pretraining prior and the structural regime
of the evaluation graph.

\paragraph{Dense graphs.}
HM Categories and DDI are among the datasets on which \textsc{Wander} trails
the strongest task-specific baseline. Both graphs are unusually dense
relative to most of the evaluation suite. In such graphs, a fixed number of
sampled walks explores only a small fraction of the many possible local
transitions, potentially yielding insufficient structural coverage. Consistent
with this explanation, increasing the base walk count on HM Categories from
\(256\) to \(2048\), using walks of length \(32\), raises node-classification
accuracy from \(48.7\%\) to \(52.1\%\). Similarly, increasing the base walk
count on DDI from \(128\) to \(512\) improves Recall@20 from \(13.0\%\) to
\(14.1\%\). These results indicate that part of the performance gap on dense
graphs can be reduced by allocating more inference-time computation. They
also suggest that adapting the walk budget not only to graph size, but also
to graph density, may provide a better inference policy.

\paragraph{Prior mismatch.}
A second possible source of weaker performance is mismatch between the
synthetic pretraining prior and particular evaluation graphs. As discussed
in Appendix~\ref{app::prior_experiments}, samples from the \textsc{Wander}
prior span both homophilous and heterophilous regimes but are concentrated
toward homophily. Correspondingly, replacing the synthetic component with
the \textsc{GraphPFN} prior improves performance on several heterophilous
node-classification datasets, despite reducing aggregate accuracy. This
suggests that the comparatively weaker results on some heterophilous graphs
may partly reflect the relative frequency of these regimes during
pretraining. However, this evidence is only suggestive: the priors also
differ in their graph structures and attribute-generation mechanisms, so the
effect cannot be attributed to homophily alone.

\begin{table}[htbp]
    \scriptsize
    \centering
    \caption{Multiclass node classification results (test accuracy \%, mean $\pm$ std over all available splits). \textsc{NodePFN} is shown but excluded from mean and mean rank (missing Full Cora and HM Categories). \textsc{TabPFN}v3 uses node features only (ignores the graph).}
    \label{tab:node-cls-multiclass}
    \setlength{\tabcolsep}{3pt}
    \begin{tabular}{clcccccc}
    \toprule
    Group & Dataset & \textsc{GCN} & \textsc{TabPFN}v3 & \textsc{GraphAny} & \textsc{NodePFN} & \textsc{GraphPFN} & \textbf{\textsc{Wander}} \\
    \midrule
    \multirow{3}{*}{\rotatebox{0}{ \scriptsize WebKB}}
    & Cornell         & 38.9 $\pm$ 5.8 & \underline{71.4 $\pm$ 4.2} & 58.6 $\pm$ 6.4 & 67.3 $\pm$ 4.6 & 70.3 $\pm$ 3.0 & \textbf{76.9 $\pm$ 3.4} \\
    & Texas           & 54.9 $\pm$ 6.4 & 71.1 $\pm$ 8.4 & 67.0 $\pm$ 6.0 & \underline{76.8 $\pm$ 5.3} & 70.5 $\pm$ 12.1 & \textbf{77.8 $\pm$ 7.1} \\
    & Wisconsin       & 60.2 $\pm$ 6.5 & 77.5 $\pm$ 2.5 & 63.5 $\pm$ 4.7 & 79.4 $\pm$ 5.2 & \underline{80.0 $\pm$ 5.2} & \textbf{80.8 $\pm$ 5.7} \\
    \midrule
    \multirow{3}{*}{\rotatebox{0}{ Airports}}
    & Air Brazil      & 46.2 $\pm$ 18.0 & 33.1 $\pm$ 9.3 & 35.4 $\pm$ 15.1 & 63.9 $\pm$ 7.9 & \textbf{74.6 $\pm$ 6.7} & \underline{74.1 $\pm$ 4.6} \\
    & Air Europe      & 47.0 $\pm$ 0.7 & 39.1 $\pm$ 3.0 & 41.6 $\pm$ 6.4 & 52.6 $\pm$ 4.3 & \textbf{58.0 $\pm$ 4.3} & \underline{55.5 $\pm$ 4.1} \\
    & Air USA         & 47.4 $\pm$ 1.6 & 23.8 $\pm$ 2.3 & 43.5 $\pm$ 1.9 & 60.9 $\pm$ 1.4 & \textbf{63.0 $\pm$ 1.7} & \underline{62.4 $\pm$ 1.8} \\
    \midrule
    \multirow{2}{*}{\shortstack{Wiki \\filtered}}
    & Chameleon filt. & 41.6 $\pm$ 4.9 & 46.9 $\pm$ 3.4 & 32.6 $\pm$ 3.7 & 46.0 $\pm$ 2.6 & \textbf{50.9 $\pm$ 4.1} & \underline{47.9 $\pm$ 3.8} \\
    & Squirrel filt.  & 33.2 $\pm$ 2.1 & 41.1 $\pm$ 2.4 & 25.7 $\pm$ 1.3 & 42.5 $\pm$ 0.9 & \textbf{48.0 $\pm$ 2.8} & \underline{43.9 $\pm$ 1.7} \\
    \midrule
    & Actor           & 33.1 $\pm$ 0.5 & \textbf{39.0 $\pm$ 1.0} & 29.1 $\pm$ 0.8 & 32.7 $\pm$ 1.4 & \underline{38.7 $\pm$ 1.1} & 36.4 $\pm$ 0.9 \\
    \midrule
    \multirow{3}{*}{\rotatebox{0}{ Planetoid}}
    & Cora            & 77.6 & 66.3 & 79.4 & \textbf{82.1 $\pm$ 0.3} & 74.6 $\pm$ 0.5 & \underline{80.7 $\pm$ 1.1} \\
    & CiteSeer        & 61.0 & 64.9 & \underline{68.4} & 67.9 $\pm$ 0.2 & \textbf{69.2 $\pm$ 0.4} & 66.7 $\pm$ 1.0 \\
    & PubMed          & 73.9 & 70.3 & 76.3 & \underline{78.6 $\pm$ 0.3} & 72.0 $\pm$ 0.7 & \textbf{78.9 $\pm$ 0.1} \\
    \midrule
    \multirow{2}{*}{\rotatebox{0}{ Amazon}}
    & Amz. Photos     & 90.2 $\pm$ 0.6 & 87.2 $\pm$ 0.9 & 90.7 $\pm$ 0.8 & 89.6 $\pm$ 0.8 & \underline{92.5 $\pm$ 0.4} & \textbf{93.1 $\pm$ 0.2} \\
    & Amz. Computers  & \textbf{85.5 $\pm$ 1.0} & 78.9 $\pm$ 1.2 & 83.0 $\pm$ 1.1 & 81.8 $\pm$ 0.7 & \underline{83.9 $\pm$ 1.1} & 83.5 $\pm$ 1.2 \\
    \midrule
    & Wiki CS         & \textbf{79.4 $\pm$ 0.6} & 73.8 $\pm$ 0.6 & 75.4 $\pm$ 0.7 & 76.0 $\pm$ 0.7 & 79.1 $\pm$ 0.6 & \underline{79.3 $\pm$ 0.4} \\
    \midrule
    \multirow{2}{*}{\rotatebox{0}{\shortstack{Citation\\full}}}
    & Full DBLP       & 73.7 $\pm$ 0.3 & 69.3 $\pm$ 1.2 & 70.9 $\pm$ 0.8 & \underline{75.3 $\pm$ 1.3} & 74.8 $\pm$ 0.6 & \textbf{79.9 $\pm$ 1.2} \\
    & Full Cora       & \underline{58.7 $\pm$ 0.7} & 49.0 $\pm$ 0.7 & 57.4 $\pm$ 0.4 & --- & 57.2 $\pm$ 0.6 & \textbf{59.9 $\pm$ 0.4} \\
    \midrule
    & Roman Empire    & \textbf{79.6 $\pm$ 0.4} & 65.7 $\pm$ 0.5 & 64.2 $\pm$ 0.6 & 48.8 $\pm$ 0.5 & \underline{67.8 $\pm$ 0.4} & 65.4 $\pm$ 0.5 \\
    \midrule
    & Amz. Ratings    & \textbf{51.2 $\pm$ 0.7} & \underline{49.1 $\pm$ 0.5} & 42.5 $\pm$ 0.5 & 44.2 $\pm$ 0.5 & 48.6 $\pm$ 0.5 & 49.0 $\pm$ 0.5 \\
    \midrule
    \multirow{2}{*}{\rotatebox{0}{ Coauthor}}
    & Co. CS          & \underline{91.4 $\pm$ 0.5} & 90.6 $\pm$ 0.3 & 90.6 $\pm$ 0.5 & 90.8 $\pm$ 0.2 & 91.3 $\pm$ 0.4 & \textbf{92.0 $\pm$ 0.2} \\
    & Co. Physics     & 92.6 $\pm$ 0.6 & 90.9 $\pm$ 0.9 & \underline{92.7 $\pm$ 0.5} & 92.5 $\pm$ 0.6 & 90.9 $\pm$ 0.9 & \textbf{92.9 $\pm$ 0.9} \\
    \midrule
    & HM Categories   & \textbf{66.4} & 42.1 & 37.1 & --- & \underline{55.2 $\pm$ 0.7} & 48.7 $\pm$ 0.1 \\
    \midrule
    & Pokec 100k      & 78.4 & 43.2 & 44.0 & 78.1 $\pm$ 0.5 & \underline{83.8 $\pm$ 0.2} & \textbf{86.1 $\pm$ 0.5} \\
    \midrule
    \multirow{3}{*}{\rotatebox{0}{ City}}
    & City Paris      & 40.6 & \underline{48.5} & 20.2 & 44.8 & 46.9 $\pm$ 0.6 & \textbf{52.8 $\pm$ 0.2} \\
    & City LA         & 35.9 & \underline{44.8} & 17.1 & 44.3 & 44.3 $\pm$ 0.8 & \textbf{55.2 $\pm$ 0.2} \\
    & City Shanghai   & 38.1 & \underline{47.6} & 19.3 & 44.7 & 44.2 $\pm$ 0.5 & \textbf{53.5 $\pm$ 0.1} \\
    \midrule
    & Mean accuracy (\%) & 60.64 & 58.66 & 54.85 & --- & \underline{66.55} & \textbf{68.20} \\
    & Mean rank          & 3.12 & 3.69 & 3.96 & --- & \underline{2.42} & \textbf{1.73} \\
    \bottomrule
    \end{tabular}
\end{table}

\subsection{Pretraining on single task}
\label{app:single-task-pretraining}

For the single-task variants in Table~\ref{tab:joint-training}, we follow the same
pretraining setup and curriculum as described in Appendix~\ref{app::implementation},
but restrict training to the respective task. In each curriculum stage, we train for
exactly the number of optimization steps that this task receives in the joint setting.
For example, a task assigned \(40\%\) of the steps in a given stage is trained for
\(40\%\) of that stage's total number of steps. All other hyperparameters and
task-specific data distributions are kept unchanged.

\subsection{Compositional generalization to attributed knowledge graphs}

During pretraining, \textsc{Wander} encounters node features on homogeneous
graphs and relation types on knowledge graphs, but never both jointly. We test
whether these capabilities compose at inference time by evaluating the same
pretrained checkpoint zero-shot on ConceptNet100k, WN18RR, and the four
inductive WN-v1--v4 graphs in a \(2\times2\) design with MiniLM node features
and relation types enabled or disabled. Without relation types, all relations,
including inverses, are mapped to a single dummy type; without features,
entities are attribute-free. We use filtered head and tail ranking and the
default knowledge-graph inference settings from Appendix~\ref{app::kg-evaluation} and Table~\ref{tab:kg-walk-budgets}.
Table~\ref{tab:kg-with-features} reports the unweighted mean MRR across WN-v1--v4.

We encode ConceptNet entity strings and WordNet synsets using
\texttt{all-MiniLM-L6-v2} \cite{wang2020minilm}, representing each synset as
``\textit{lemma}: \textit{gloss}.'' The resulting 384-dimensional embeddings
are reduced to 32 dimensions using PCA. 

As shown in Table~\ref{tab:kg-with-features}, node features and relation types each improve MRR
individually, while their combination performs best on every benchmark.
Relation types contribute most strongly, particularly on WordNet, whereas
MiniLM features provide the largest gains on ConceptNet100k and the inductive
WN-v1--v4 graphs.
\subsection{Inference-Time Adaptation}
\label{app:inference-time-adaptation}

\paragraph{Synthetic \textsc{Grids} experiment.}
To study how the structural range of \textsc{Wander} can be adapted at inference
time, we construct the featureless synthetic benchmark \textsc{Grids} as the
disjoint union of five \(25\times25\) four-neighbor grids. The resulting graph
contains \(3{,}125\) nodes and \(6{,}000\) undirected edges, with each connected
component corresponding to one class and having diameter \(48\). We sample two
training nodes per grid, for \(10\) labeled nodes in total, and randomly split the
remaining nodes into \(400\) validation and \(2{,}715\) test nodes. The nearest
labeled node is on average \(11\) hops from a test node and at most \(28\) hops
away. Since no node features are provided and all five components are
structurally identical, prediction requires propagating the sparse supervision
available elsewhere in the same connected component.

We evaluate the pretrained \textsc{Wander} checkpoint zero-shot, without any
dataset-specific finetuning, using the node-classification inference
procedure from Appendix~\ref{app::node_classification}. To vary the available structural context
without changing the model parameters, we evaluate the same checkpoint with
walk lengths \(4\), \(16\), and \(64\), while keeping the remaining inference
settings fixed. For the context-aware sampling variant, we use length-\(64\)
walks, but retain only walks that visit at least one labeled context node. The
visualizations in Figure~\ref{fig:grids_experiment} show the predicted probability of the true
class for one of the five grid components.

\begin{wraptable}{r}{0.6\columnwidth}
    \vspace{-10pt}
    \centering
    \scriptsize
    \caption{Test accuracy on the synthetic \textsc{Grids} dataset and selected real-world graphs without node features.}
    \label{tab:long_range}
    \setlength{\tabcolsep}{2pt}
    \begin{tabular}{lccccc}
        \toprule
        
        & \textsc{Grids}
        & PubMed
        & \makecell[c]{Full \\ DBLP}
        & Co. CS
        & \makecell[c]{Co.\\Physics} \\
        \midrule
        \textsc{GraphAny}
        & 20.0
        & 18.0
        & 44.7 $\pm$ 0.0
        & 22.6 $\pm$ 0.0
        & 50.5 $\pm$ 0.0 \\
        \textsc{NodePFN}
        & 26.6 $\pm$ 0.0
        & 39.5 $\pm$ 0.0
        & 45.9 $\pm$ 1.0
        & 28.3 $\pm$ 0.5
        & 51.2 $\pm$ 0.3 \\
        \textsc{GraphPFN}
        & 23.3 $\pm$ 0.7
        & 49.9 $\pm$ 2.6
        & 61.8 $\pm$ 1.6
        & 59.2 $\pm$ 3.5
        & 69.4 $\pm$ 1.2 \\
        \midrule
        \textbf{\textsc{Wander}}
        & 96.6 $\pm$ 0.5
        & \textbf{73.7 $\pm$ 0.7}
        & 65.9 $\pm$ 3.9
        & 84.1 $\pm$ 0.6
        & 82.2 $\pm$ 5.0 \\
        \textbf{\textsc{Wander} \tiny(filtered)}
        & \textbf{100.0 $\pm$ 0.0}
        & 72.8 $\pm$ 0.5
        & \textbf{70.0 $\pm$ 4.6}
        & \textbf{84.6 $\pm$ 0.6}
        & \textbf{85.8 $\pm$ 2.7} \\
        \bottomrule
    \end{tabular}
    \vspace{-8pt}
\end{wraptable}

\paragraph{Featureless real-world graphs.}
We additionally evaluate on PubMed, Full DBLP, Coauthor CS, and Coauthor
Physics, whose training splits provide comparatively sparse supervision
(see Table~\ref{tab:node-cls-statistics}). We remove all node features
and retain only graph structure and the observed training labels. We compare
\textsc{GraphAny}, \textsc{NodePFN},
\textsc{GraphPFN}, and \textsc{Wander} in the zero-shot setting,
without dataset-specific finetuning. The baseline GFMs are evaluated using their
official implementations and preprocessing procedures, while \textsc{Wander}
uses the graph-size-dependent random-walk
budget described in Appendix~\ref{app::node_classification}. We report test accuracy averaged over
all available splits.

\subsection{Inference-time efficiency and compute--performance trade-offs}
\label{app::runtime}

\paragraph{Runtime.}
Table~\ref{tab:inference-time-cora-dblp} compares the wall-clock cost of zero-shot
\textsc{Wander} inference with that of training and evaluating
dataset-specific end-to-end models. We consider Cora link prediction with
\textsc{NBFNet} and Full DBLP node classification with a GCN. All measurements
were obtained on a single NVIDIA RTX PRO 6000 GPU. For \textsc{Wander}, we use the default inference configuration: batches of
eight source nodes with \(64\) walks of length \(64\) for link prediction, and
batches of \(128\) nodes with \(128\) walks of length \(128\) for node
classification.
The reported end-to-end runtimes include hyperparameter selection.

Scoring an individual batch with \textsc{Wander} takes only a few seconds.
Nevertheless, evaluating an entire graph remains computationally substantial,
particularly for node classification. With the default \(16\) inference samples, a full
evaluation takes approximately \(11\) minutes on Cora and \(21\) minutes on
Full DBLP. The number of inference samples can, however, be reduced without
retraining the model. With a single inference sample, the corresponding
runtimes decrease to \(37\) seconds and \(78\) seconds, respectively, making
zero-shot \textsc{Wander} inference faster than training and selecting the
dataset-specific baselines on both datasets.

\begin{table}[htbp]
    \centering
    \scriptsize
    \caption{Wall-clock runtimes on a single NVIDIA RTX PRO 6000 GPU. Times are
reported for one seed and, for Full DBLP, one data split.}
    \label{tab:inference-time-cora-dblp}
    \setlength{\tabcolsep}{4pt}
    \begin{tabular}{l cc cc}
        \toprule
         & \multicolumn{2}{c}{Cora (LP)}
         & \multicolumn{2}{c}{Full DBLP (NC)} \\
        \cmidrule(lr){2-3}\cmidrule(lr){4-5}
         & \textsc{Wander} & \textsc{NBFNet}
         & \textsc{Wander} & \textsc{GCN} \\
        \midrule
        Grid search
            & -- & 13 min 36 s
            & -- & 1 min 28 s \\
        Single batch (1 inference sample)
            & 1.70 s & 1.1 ms
            & 1.13 s & 2.0 ms \\
        Single batch (16 inference sample)
            & 27.2 s & --
            & 18.1 s & -- \\
        Full eval (1 inference sample)
            & 37 s & 13 min 36 s
            & 1 min 18 s & 1 min 28 s \\
        Full eval (16 inference samples)
            & 10 min 58 s & --
            & 21 min 16 s & -- \\
        \bottomrule
    \end{tabular}
\end{table}

\paragraph{Compute--performance trade-offs.}
A useful property of \textsc{Wander} is that its inference-time computation can
be adjusted without retraining: we may vary the number of sampled walks, their
length, and the number of inference samples whose logits are averaged.
Figure~\ref{fig:compute_tradeoff} quantifies the resulting trade-off between
predictive performance and inference speed. Starting from the default
configuration, we independently reduce the number of walks and the walk length
to one half and one quarter of their dataset-specific defaults, or reduce the
number of inference samples from \(16\) to \(8\) or \(1\). For node
classification, changes to the random-walk configuration are applied to both
the context and query passes of the scalable inference procedure. All other
settings remain fixed. We report performance and wall-clock speedup relative
to the default configuration, averaged across the homogeneous link-prediction
and node-classification benchmarks, respectively.

Reducing the number of inference samples offers the most favorable trade-off:
eight samples roughly halve runtime with little performance loss, while one
sample provides a larger speedup at a higher cost in performance. Reducing the
number of walks also degrades performance gradually, whereas shorter walks
cause the largest drop, particularly for link prediction.

\begin{figure}[htbp]
    \centering
    \includegraphics[width=\linewidth]{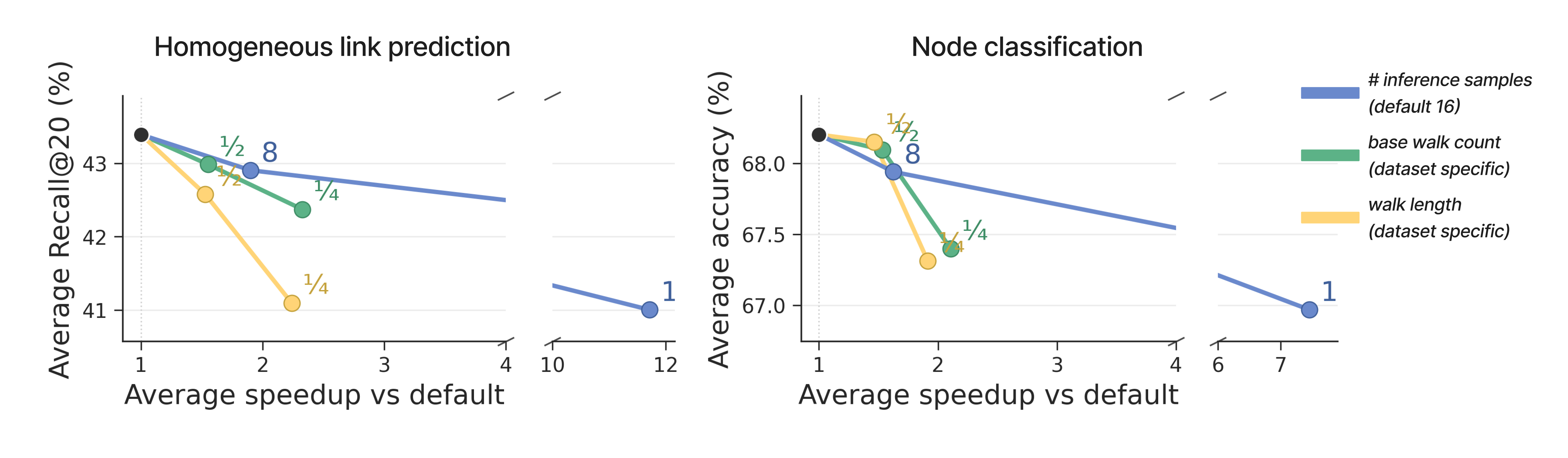}
    \caption{Compute--performance trade-offs obtained by reducing the number of
walks, walk length, or inference samples relative to the default configuration.
Results are averaged across the datasets of each benchmark.}
    \label{fig:compute_tradeoff}
\end{figure}

\subsection{Higher-order structural patterns}\label{app::cycles}
To evaluate \textsc{Wander}'s ability to identify higher-order structural
motifs that are challenging for GNNs bounded by the expressive power of
1-WL, we construct a synthetic four-class node-classification benchmark,
\textsc{Cycles}.

\paragraph{Dataset.}
The graph is featureless, exactly \(3\)-regular, and contains \(n=5000\)
nodes. We plant \(200\) triangles, \(150\) four-cycles, and \(120\)
five-cycles as pairwise vertex-disjoint motifs, yielding \(600\) nodes in
each motif class. Each motif node is connected by one edge to a distinct
node in the remaining \(3200\)-node backbone.

The backbone is completed by matching the remaining degree stubs while
rejecting self-loops, duplicate edges, and edges between nodes at graph
distance of most three. We further reject graphs containing any
\(C_3\), \(C_4\), or \(C_5\) outside the planted motifs, ensuring that all
such cycles are known by construction. Nodes are labeled by membership in
the backbone, \(C_3\), \(C_4\), or \(C_5\). We use stratified
\(10\%/20\%/70\%\) train/validation/test splits.

\begin{table}[t]
\centering
\scriptsize
\caption{Accuracy on \textsc{Cycles} (finetuned).}
\label{tab:cycles}
\begin{tabular}{lc}
    \toprule
    & \textsc{Cycles} \\
    \midrule
    Majority & 64.0 \\
    \textsc{GCN} & 63.3 \\
    \textsc{GraphPFN} & 67.4 \\
    \midrule
    \textsc{Wander} & \textbf{97.5} \\
    \bottomrule
\end{tabular}
\end{table}

\paragraph{Evaluation protocol.}
We compare against GCN~\citep{kipf2016semi} and
\textsc{GraphPFN}~\citep{eremeev2025graphpfn}. GCN uses the same
hyperparameter search as in Appendix~\ref{app::node_classification}, while
\textsc{GraphPFN} is finetuned using the authors' official procedure.
For \textsc{Wander}, we use walk length \(16\) and base walk count \(32\)
during finetuning and evaluation, and finetune for \(200\) epochs with
\(32\) batches per epoch and batch size \(16\). All other hyperparameters
match pretraining, only training nodes contribute to the supervised loss,
and checkpoint selection is based on validation performance.

As shown in Table~\ref{tab:cycles}, \textsc{GraphPFN} reaches \(67.4\%\),
only modestly above the \(64.0\%\) majority baseline, whereas
\textsc{Wander} reaches \(97.5\%\), almost perfectly separating nodes by
their cycle participation.

\subsection{Finetuning}
\label{app:finetuning}

We additionally evaluate how much \textsc{Wander} benefits from
dataset-specific finetuning on the link-prediction benchmarks.

\paragraph{Knowledge graph link prediction.}
We finetune using the pretraining learning rate of \(5\times10^{-5}\), no
weight decay, and \(256\) negative samples per query. Random-walk sampling
follows the default knowledge-graph evaluation protocol described in
Appendix~\ref{app::kg-evaluation}. We train on the full training split for up to five epochs
and select the checkpoint with the highest validation MRR. The resulting
performance is reported in Table~\ref{tab::full_kg_finetuning} and compared to the finetuned performance of \textsc{Ultra}, \textsc{Trix} and \textsc{Flock} as reported in the respective papers.

\begin{table}[htbp]
    \scriptsize
        \centering
        \caption{Finetuned knowledge graph link prediction results. Best in each row is \textbf{bold}, second-best is \underline{underlined}.}
        \label{tab::full_kg_finetuning}
        \setlength{\tabcolsep}{3pt}
        \begin{tabular}{clcccccc|cccc}
        \toprule
        & Dataset & \multicolumn{2}{c}{\textsc{Ultra} (ft.)} & \multicolumn{2}{c}{\textsc{Trix} (ft.)}
        & \multicolumn{2}{c}{\textsc{Flock} (ft.)} & \multicolumn{2}{c}{{\textsc{Wander}}} &\multicolumn{2}{c}{{\textsc{Wander}} (ft.)}\\
        & & MRR & H@10 & MRR & H@10 & MRR & H@10 & MRR & H@10 & MRR & H@10\\
        \midrule
        \multirow{23}{*}{\rotatebox{90}{Inductive (entity, relation)}}
    &FB-25       & 0.383 & 0.635 & 0.393 & 0.650 & \underline{0.405} & \textbf{0.666} & 0.404 & 0.663 & \textbf{0.406} & \underline{0.665} \\
    &FB-50       & 0.334 & 0.538 & 0.334 & 0.547 & \textbf{0.357} & \textbf{0.570} & \underline{0.352} & \underline{0.568} & 0.345 & 0.562 \\
    &FB-75       & 0.400 & 0.598 & 0.401 & 0.611 & \textbf{0.425} & \underline{0.630} & 0.408 & 0.617 & \underline{0.424} & \textbf{0.631} \\
    &FB-100      & 0.444 & 0.643 & 0.436 & 0.633 & \textbf{0.460} & \textbf{0.668} & \underline{0.458} & 0.663 & \underline{0.458} & \underline{0.666} \\
    &WK-25       & \textbf{0.321} & \textbf{0.535} & \underline{0.300} & 0.493 & 0.298 & \underline{0.506} & 0.288 & 0.477 & 0.293 & 0.493 \\
    &WK-50       & 0.140 & 0.280 & \textbf{0.166} & \textbf{0.313} & 0.127 & 0.260 & \underline{0.147} & \underline{0.289} & 0.126 & 0.272 \\
    &WK-75       & 0.380 & \underline{0.530} & 0.368 & 0.513 & \textbf{0.405} & \textbf{0.556} & 0.372 & 0.517 & \underline{0.401} & \textbf{0.556} \\
    &WK-100      & 0.168 & 0.286 & \underline{0.188} & 0.299 & 0.187 & \underline{0.306} & \textbf{0.190} & \textbf{0.309} & 0.182 & 0.297 \\
    &NL-0        & 0.329 & 0.551 & 0.385 & 0.549 & \textbf{0.418} & \underline{0.619} & 0.351 & 0.553 & \underline{0.403} & \textbf{0.628} \\
    &NL-25       & \textbf{0.407} & \underline{0.596} & 0.377 & 0.589 & \underline{0.405} & \textbf{0.626} & 0.338 & 0.546 & 0.390 & \underline{0.596} \\
    &NL-50       & \textbf{0.418} & \underline{0.595} & \underline{0.405} & 0.555 & 0.391 & 0.562 & 0.391 & 0.579 & 0.403 & \textbf{0.605} \\
    &NL-75       & \textbf{0.374} & \underline{0.570} & 0.351 & 0.525 & 0.344 & 0.544 & 0.357 & 0.555 & \underline{0.360} & \textbf{0.582} \\
    &NL-100      & 0.458 & 0.684 & 0.482 & 0.691 & 0.486 & \textbf{0.714} & \underline{0.487} & \underline{0.707} & \textbf{0.495} & 0.706 \\
    &MT1 tax     & 0.330 & 0.459 & 0.397 & \underline{0.508} & \textbf{0.413} & 0.497 & 0.281 & 0.410 & \underline{0.405} & \textbf{0.519} \\
    &MT1 health  & 0.380 & 0.467 & 0.376 & 0.457 & \textbf{0.394} & \textbf{0.493} & \underline{0.391} & 0.483 & \textbf{0.394} & \underline{0.492} \\
    &MT2 org     & 0.104 & 0.170 & 0.098 & 0.162 & \underline{0.107} & \underline{0.174} & 0.102 & 0.167 & \textbf{0.108} & \textbf{0.176} \\
    &MT2 sci     & 0.311 & 0.451 & 0.331 & \underline{0.526} & \underline{0.366} & 0.525 & 0.359 & \underline{0.526} & \textbf{0.378} & \textbf{0.542} \\
    &MT3 art     & 0.306 & 0.473 & 0.289 & 0.441 & \textbf{0.330} & \underline{0.483} & 0.304 & 0.466 & \underline{0.329} & \textbf{0.491} \\
    &MT3 infra   & 0.657 & 0.807 & 0.672 & 0.810 & \underline{0.709} & \underline{0.838} & 0.677 & 0.822 & \textbf{0.713} & \textbf{0.846} \\
    &MT4 sci     & 0.303 & 0.478 & 0.305 & 0.482 & \underline{0.324} & \underline{0.509} & 0.310 & 0.480 & \textbf{0.327} & \textbf{0.512} \\
    &MT4 health  & 0.704 & 0.785 & 0.702 & 0.785 & \underline{0.711} & \textbf{0.790} & 0.701 & \underline{0.787} & \textbf{0.714} & \textbf{0.790} \\
    &Metafam     & \underline{0.997} & \textbf{1.000} & \underline{0.997} & \textbf{1.000} & 0.992 & \textbf{1.000} & 0.504 & \underline{0.941} & \textbf{1.000} & \textbf{1.000} \\
    &FBNELL      & 0.481 & 0.661 & 0.478 & 0.655 & \textbf{0.531} & \textbf{0.714} & \underline{0.526} & \underline{0.712} & 0.525 & \underline{0.712} \\

        \midrule
        \multirow{18}{*}{\rotatebox{90}{ Inductive (entity)}}
    &FB-v1       & 0.509 & 0.670 & 0.515 & 0.682 & \textbf{0.549} & \textbf{0.721} & \underline{0.524} & \underline{0.701} & \textbf{0.549} & \textbf{0.721} \\
    &FB-v2       & 0.524 & 0.710 & 0.525 & 0.730 & \underline{0.553} & \underline{0.754} & 0.540 & 0.742 & \textbf{0.560} & \textbf{0.755} \\
    &FB-v3       & 0.504 & 0.663 & 0.501 & 0.669 & \textbf{0.528} & \textbf{0.696} & 0.514 & 0.693 & \underline{0.525} & \underline{0.695} \\
    &FB-v4       & 0.496 & 0.684 & 0.493 & 0.687 & \underline{0.510} & 0.702 & 0.504 & \underline{0.703} & \textbf{0.515} & \textbf{0.705} \\
    &WN-v1       & 0.685 & 0.793 & 0.705 & 0.798 & \textbf{0.715} & \underline{0.811} & 0.690 & 0.807 & \underline{0.713} & \textbf{0.814} \\
    &WN-v2       & 0.679 & 0.779 & 0.682 & 0.780 & \underline{0.702} & \underline{0.795} & 0.687 & 0.793 & \textbf{0.706} & \textbf{0.801} \\
    &WN-v3       & 0.411 & 0.546 & 0.425 & 0.543 & \underline{0.494} & \textbf{0.627} & 0.478 & \textbf{0.627} & \textbf{0.496} & \underline{0.625} \\
    &WN-v4       & 0.614 & 0.720 & 0.650 & 0.722 & \textbf{0.665} & \underline{0.741} & 0.650 & 0.731 & \underline{0.663} & \textbf{0.744} \\
    &NL-v1       & 0.757 & 0.878 & \textbf{0.804} & \underline{0.899} & 0.762 & \textbf{0.928} & 0.677 & 0.878 & \underline{0.789} & 0.888 \\
    &NL-v2       & 0.575 & 0.761 & 0.571 & 0.764 & \underline{0.612} & \underline{0.806} & 0.587 & 0.794 & \textbf{0.616} & \textbf{0.812} \\
    &NL-v3       & 0.563 & 0.755 & 0.571 & 0.759 & 0.606 & \underline{0.803} & \underline{0.610} & 0.796 & \textbf{0.620} & \textbf{0.809} \\
    &NL-v4       & 0.469 & 0.733 & 0.551 & 0.772 & \underline{0.572} & \textbf{0.801} & \underline{0.572} & 0.795 & \textbf{0.587} & \underline{0.798} \\
    &HM 1k       & 0.042 & 0.100 & \textbf{0.072} & 0.128 & \underline{0.071} & \textbf{0.153} & 0.062 & 0.119 & 0.066 & \underline{0.132} \\
    &HM 3k       & 0.030 & 0.090 & \underline{0.069} & 0.119 & 0.067 & \textbf{0.153} & \underline{0.069} & 0.126 & \textbf{0.078} & \underline{0.150} \\
    &HM 5k       & 0.025 & 0.068 & \textbf{0.074} & 0.118 & 0.061 & \underline{0.130} & \underline{0.067} & 0.126 & 0.060 & \textbf{0.136} \\
    &HM Indigo   & \underline{0.432} & \underline{0.639} & \textbf{0.436} & \textbf{0.645} & 0.418 & 0.633 & 0.416 & 0.636 & 0.411 & 0.636 \\
    &ILPC Small  & 0.303 & 0.453 & 0.303 & 0.455 & \underline{0.305} & 0.454 & \textbf{0.310} & \underline{0.459} & \textbf{0.310} & \textbf{0.462} \\
    &ILPC Large  & 0.308 & 0.431 & 0.310 & 0.431 & \underline{0.320} & \underline{0.441} & \textbf{0.321} & 0.440 & \textbf{0.321} & \textbf{0.442} \\

        \midrule
        \multirow{13}{*}{\rotatebox{90}{ Transductive}}
    &NELL995         & 0.509 & 0.660 & 0.506 & 0.648 & \underline{0.531} & \underline{0.665} & 0.519 & 0.662 & \textbf{0.565} & \textbf{0.681} \\
    &NELL23k         & 0.268 & 0.450 & \underline{0.306} & \underline{0.536} & 0.280 & 0.465 & 0.249 & 0.413 & \textbf{0.327} & \textbf{0.552} \\
    &WDsinger        & 0.417 & 0.526 & \textbf{0.502} & \underline{0.620} & \underline{0.435} & 0.543 & 0.402 & 0.521 & \textbf{0.502} & \textbf{0.633} \\
    &ConceptNet100k  & 0.310 & 0.529 & 0.340 & 0.564 & \underline{0.352} & \underline{0.580} & 0.258 & 0.477 & \textbf{0.381} & \textbf{0.614} \\
    &CoDEx Small     & \textbf{0.490} & \textbf{0.686} & \underline{0.484} & \underline{0.676} & 0.463 & 0.648 & 0.457 & 0.658 & 0.473 & 0.663 \\
    &CoDEx Large     & 0.343 & \underline{0.478} & \textbf{0.348} & \textbf{0.481} & 0.342 & 0.467 & 0.345 & 0.475 & \underline{0.347} & 0.477 \\
    &YAGO310         & \underline{0.557} & \underline{0.710} & 0.541 & 0.702 & 0.552 & 0.700 & 0.499 & 0.666 & \textbf{0.575} & \textbf{0.718} \\
    &AristoV4        & 0.343 & 0.496 & 0.345 & 0.499 & \underline{0.383} & \underline{0.523} & 0.295 & 0.424 & \textbf{0.387} & \textbf{0.530} \\
    &DBpedia100k     & 0.436 & 0.603 & 0.457 & 0.619 & \underline{0.470} & 0.623 & 0.436 & \underline{0.625} & \textbf{0.472} & \textbf{0.630} \\
    &Hetionet        & \textbf{0.399} & \textbf{0.538} & \underline{0.394} & \underline{0.534} & 0.314 & 0.465 & 0.279 & 0.415 & 0.338 & 0.478 \\
    &FB15k-237(10)   & \underline{0.254} & 0.411 & 0.253 & 0.408 & \textbf{0.260} & \underline{0.420} & 0.248 & 0.404 & \textbf{0.260} & \textbf{0.422} \\
    &FB15k-237(20)   & 0.274 & 0.445 & 0.273 & 0.441 & \textbf{0.284} & \textbf{0.459} & 0.277 & 0.451 & \underline{0.282} & \underline{0.457} \\
    &FB15k-237(50)   & \textbf{0.325} & \textbf{0.528} & 0.322 & 0.522 & 0.317 & 0.517 & \underline{0.323} & \underline{0.523} & 0.322 & \underline{0.523} \\

        \midrule
        & Mean & 0.407 & 0.562 & 0.418 & 0.569 & \underline{0.427} & \underline{0.583} & 0.399 & 0.565 & \textbf{0.433} & \textbf{0.590} \\
        & Mean rank & 3.796 & 3.898 & 3.435 & 3.657 & \underline{2.380} & \underline{2.324} & 3.463 & 3.370 & \textbf{1.926} & \textbf{1.750} \\
        \bottomrule
        \end{tabular}
        \end{table}

Finetuning increases mean MRR from \(0.399\) to \(0.433\) and Hits@10 from
\(0.565\) to \(0.590\). The finetuned model thereby surpasses the strongest
specialized baseline on average, showing that \textsc{Wander} retains its
advantage in knowledge-graph reasoning when all methods are adapted to each
target graph.

\paragraph{Homogeneous link prediction.}
We use the same finetuning protocol for homogeneous link prediction, with
random-walk sampling following the corresponding default evaluation setup.
Since not all datasets provide a validation split, we do not perform
dataset-specific checkpoint selection and instead evaluate the checkpoint
after five finetuning epochs for every dataset. Results are reported in
Table~\ref{tab:homo-lp-finetune}.

\begin{table}[t]
    \vspace{-10pt}
        \scriptsize
        \centering
        \caption{Finetuned homogeneous link prediction (Recall@20 \%).}
        \label{tab:homo-lp-finetune}
        \setlength{\tabcolsep}{3pt}
        \begin{tabular}{lcccccccccc|cc}
        \toprule
         & CiteSeer & Cora & PubMed & CS & DDI & \makecell{Products\\Home} & \makecell{P2P\\Gnutella} & \makecell{Email\\Enron} & \makecell{Proteins\\Spec1} & \makecell{SOC\\Epinions} & Mean & Mean rank \\
        \midrule
        \textsc{Wander} & 65.88 & 56.78 & 31.73 & 64.62 & 12.98 & 71.12 & 9.06 & 69.78 & 31.40 & 20.59 & 43.39 & 1.90 \\
        \midrule
        \textsc{Wander} (ft.) & 69.20 & 60.90 & 39.70 & 70.00 & 11.90 & 72.60 & 9.30 & 71.90 & 36.60 & 21.30 & 46.34 & 1.10 \\
        \bottomrule
        \end{tabular}
    \end{table}

Finetuning improves mean Recall@20 from \(43.39\%\) to \(46.34\%\), with
gains on nine of the ten datasets. The largest improvements occur on PubMed,
Coauthor CS, and Proteins, whereas performance decreases slightly on DDI.
Thus, the pretrained model generally adapts effectively to individual graphs
in the homogeneous link-prediction setting, although the benefit depends on
the target graph.

\paragraph{Node classification.}
We finetune the pretrained \textsc{Wander} checkpoint separately on each
dataset using only the official training nodes of split~0. Each epoch comprises
five optimizer steps with \(16\) query nodes each. To provide additional
supervision, at every step we also hide the labels of a randomly sampled
\(25\%\) of the training nodes, capped at \(1{,}024\), and compute the loss on
those hidden nodes encountered in at least one forward pass. During training,
we use \(128\) walks and the size-dependent walk length specified in
Appendix~\ref{app::node_classification}. We train for at most \(100\) epochs,
with early stopping after \(15\) epochs without improvement in validation
accuracy. For each dataset, we select the learning rate from \(10\)
logarithmically spaced values between \(5\times10^{-6}\) and
\(5\times10^{-4}\) based on validation accuracy. The selected checkpoint is
evaluated on the split-0 test nodes using the same size-tier walk budget as in
the zero-shot evaluation. We report the mean and standard deviation over three
seeds in Table~\ref{tab:nc-finetune}.

\begin{table}[htbp]
    \scriptsize
    \centering
    \caption{Node classification finetuning versus zero-shot \textsc{Wander} on split 0 only (test accuracy \%, mean $\pm$ std over 3 seeds). }
    \label{tab:nc-finetune}
    \setlength{\tabcolsep}{3pt}
    \begin{tabular}{clcc}
    \toprule
    Group & Dataset & Zero-shot & Finetuned \\
    \midrule
    \multirow{3}{*}{\rotatebox{0}{ \scriptsize WebKB}}
    & Cornell         & \textbf{74.77 $\pm$ 1.27} & 72.07 $\pm$ 1.27 \\
    & Texas           & 75.68 $\pm$ 2.21 & \textbf{80.18 $\pm$ 1.27} \\
    & Wisconsin       & \textbf{76.47 $\pm$ 0.00} & 71.24 $\pm$ 0.92 \\
    \midrule
    \multirow{3}{*}{\rotatebox{0}{ Airports}}
    & Air Brazil      & 78.21 $\pm$ 1.81 & \textbf{80.77 $\pm$ 0.00} \\
    & Air Europe      & 57.29 $\pm$ 2.06 & 57.29 $\pm$ 0.29 \\
    & Air USA         & 61.20 $\pm$ 0.08 & \textbf{61.62 $\pm$ 0.67} \\
    \midrule
    \multirow{2}{*}{\shortstack{Wiki \\filtered}}
    & Chameleon filt. & \textbf{50.52 $\pm$ 1.11} & 49.31 $\pm$ 0.64 \\
    & Squirrel filt.  & 43.58 $\pm$ 0.79 & \textbf{45.80 $\pm$ 0.65} \\
    \midrule
    & Actor           & 37.61 $\pm$ 0.20 & \textbf{39.65 $\pm$ 0.38} \\
    \midrule
    \multirow{3}{*}{\rotatebox{0}{ Planetoid}}
    & Cora            & 80.80 $\pm$ 1.02 & \textbf{81.90 $\pm$ 0.80} \\
    & CiteSeer        & 66.50 $\pm$ 1.28 & \textbf{67.00 $\pm$ 1.04} \\
    & PubMed          & 78.87 $\pm$ 0.24 & \textbf{82.10 $\pm$ 0.90} \\
    \midrule
    \multirow{2}{*}{\rotatebox{0}{ Amazon}}
    & Amz. Photos     & 92.89 $\pm$ 0.09 & \textbf{93.20 $\pm$ 0.09} \\
    & Amz. Computers  & 84.72 $\pm$ 0.07 & \textbf{86.87 $\pm$ 0.14} \\
    \midrule
    & Wiki CS         & 79.32 $\pm$ 0.08 & \textbf{80.59 $\pm$ 0.06} \\
    \midrule
    \multirow{2}{*}{\rotatebox{0}{\shortstack{Citation\\full}}}
    & Full DBLP       & 81.11 $\pm$ 0.17 & \textbf{81.58 $\pm$ 0.14} \\
    & Full Cora       & \textbf{59.94 $\pm$ 0.12} & 59.27 $\pm$ 0.22 \\
    \midrule
    & Roman Empire    & 64.41 $\pm$ 0.11 & \textbf{83.18 $\pm$ 0.19} \\
    \midrule
    & Amz. Ratings    & 49.35 $\pm$ 0.12 & \textbf{55.45 $\pm$ 0.11} \\
    \midrule
    \multirow{2}{*}{\rotatebox{0}{ Coauthor}}
    & Co. CS          & 92.21 $\pm$ 0.06 & \textbf{92.84 $\pm$ 0.08} \\
    & Co. Physics     & 93.01 $\pm$ 0.24 & \textbf{93.62 $\pm$ 0.24} \\
    \midrule
    & HM Categories   & 48.72 $\pm$ 0.11 & \textbf{54.13 $\pm$ 0.12} \\
    \midrule
    & Pokec 100k      & \textbf{86.05 $\pm$ 0.49} & 85.95 $\pm$ 0.07 \\
    \midrule
    \multirow{3}{*}{\rotatebox{0}{ City}}
    & City Paris      & 52.83 $\pm$ 0.20 & \textbf{55.44 $\pm$ 0.12} \\
    & City LA         & 55.16 $\pm$ 0.20 & \textbf{58.30 $\pm$ 0.18} \\
    & City Shanghai   & 53.47 $\pm$ 0.10 & \textbf{60.35 $\pm$ 0.05} \\
    \midrule
    & Mean accuracy (\%) & 68.26 & \textbf{70.37} \\
    \bottomrule
    \end{tabular}
\end{table}

Finetuning raises mean accuracy from \(68.26\%\) to \(70.37\%\), improving
performance on \(20\) of the \(26\) graphs. Note that zero-shot mean accuracy differs from earlier tables, since we only considered split~0 here. The largest gains occur on Roman
Empire and several Amazon, HM Categories, and city-network datasets, indicating
that dataset-specific supervision can substantially adapt the pretrained
representations when the target graph differs from the pretraining distribution.
However, gains are not uniform, with the largest decreases occurring on the two
small datasets Cornell and Wisconsin. Overall, finetuning provides useful
adaptation from an already strong zero-shot initialization. Future work could
investigate protocols that adapt the optimization and sampling budgets to graph
size and the amount of available supervision.

\subsection{Prior ablations}\label{app::prior_experiments}
To assess the effect of the synthetic graph distribution, we repeat
\textsc{Wander}'s pretraining with either the \textsc{NodePFN} or
\textsc{GraphPFN} prior in place of our synthetic attributed-graph prior.
In both ablations, we retain the original mixture of synthetic and
real-world knowledge-graph training episodes, including the same three
knowledge graphs; only the generator of synthetic graphs changes.
The architecture and remaining training setup are held fixed.

Table~\ref{tab:prior_ablations} reports the
aggregate results. The \textsc{Wander} prior yields the highest mean node
classification accuracy and homogeneous link-prediction Recall@20; knowledge
graph link-prediction MRR changes only marginally.

Figure~\ref{fig:prior_ablations} shows where the differences arise. In
homogeneous link prediction, both alternative priors underperform
particularly on graphs with high clustering relative to their average
degree, consistent with the broader coverage of this structural regime in
Figure~\ref{fig:prior_scatter}. For node classification, the pattern is less
uniform: \textsc{GraphPFN}'s prior improves accuracy on several
heterophilous graphs, even though its aggregate accuracy is lower. One
possible explanation is that homophilous graphs make up a larger share of
the \textsc{Wander} prior, leaving its pretrained model less adapted to
heterophily. These interpretations remain tentative, since the priors
differ in other aspects of graph and attribute generation as well.

\begin{figure}[t]
    \centering
    \begin{subfigure}[t]{0.49\linewidth}
        \centering
        \includegraphics[width=\linewidth]{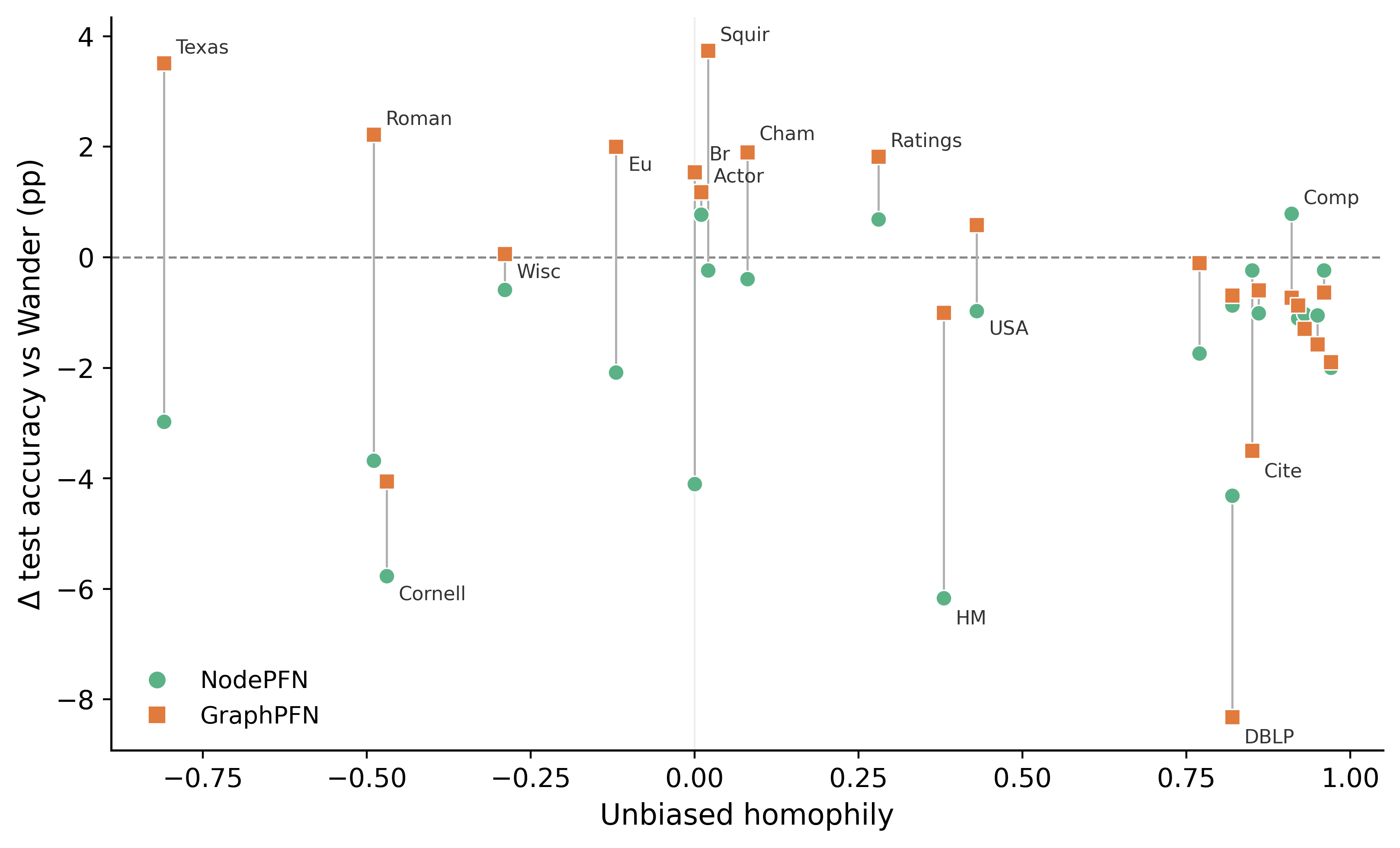}
        \caption{Change in node-classification accuracy versus unbiased homophily.}
        \label{fig:prior_ablations_nc}
    \end{subfigure}
    \hfill
    \begin{subfigure}[t]{0.49\linewidth}
        \centering
        \includegraphics[width=\linewidth]{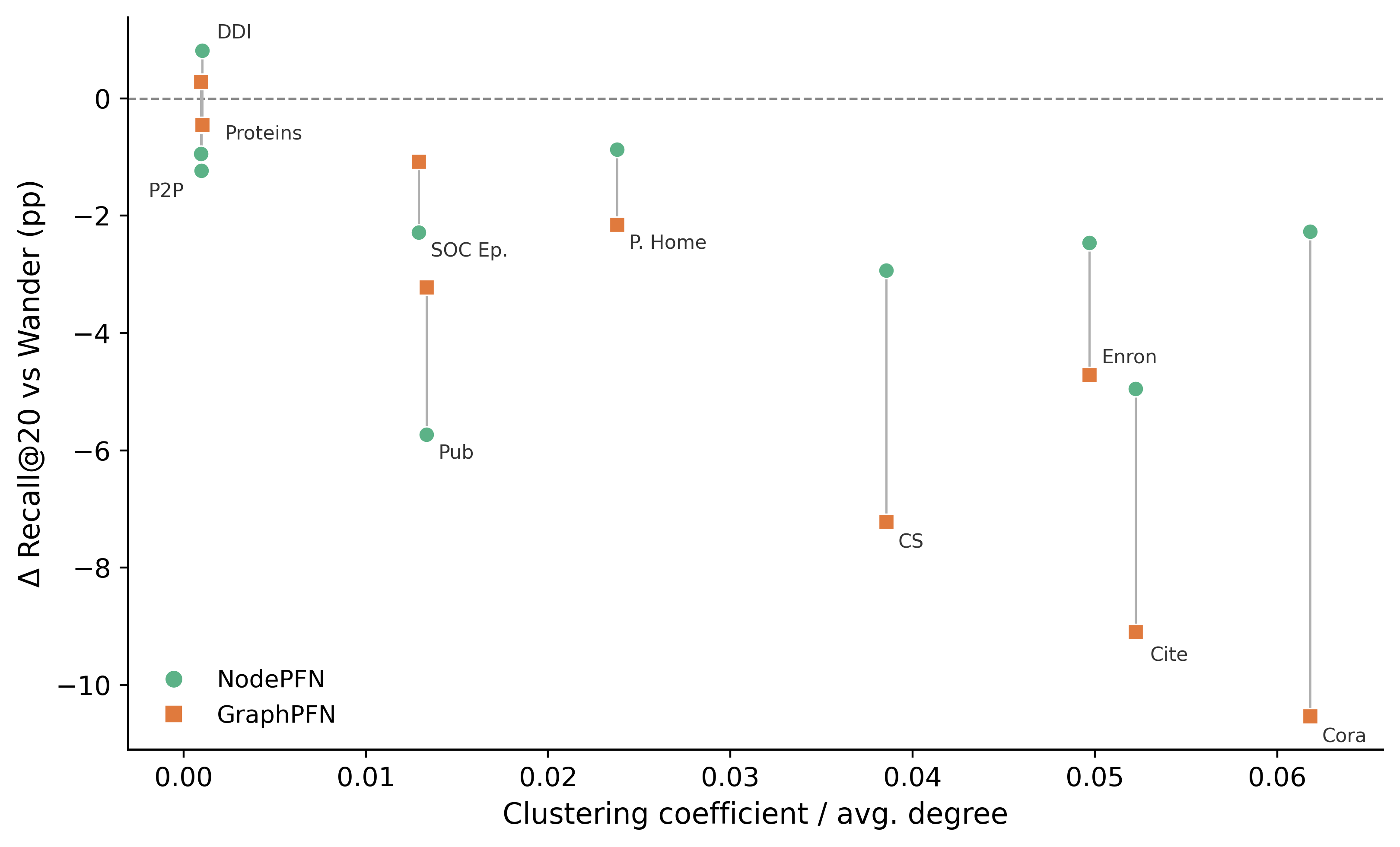}
        \caption{Change in link-prediction Recall@20 versus clustering coefficient divided by average degree.}
        \label{fig:prior_ablations_lp}
    \end{subfigure}
    \caption{Effect of replacing \textsc{Wander}'s synthetic attributed-graph prior with the \textsc{NodePFN} or \textsc{GraphPFN} prior. Points show results on individual real-world datasets; vertical differences are measured relative to the model trained with the \textsc{Wander} prior, in percentage points. }
    \label{fig:prior_ablations}
\end{figure}
\subsection{Architecture ablations}\label{app::ablations}
We assess the contribution of the three update operations that constitute each
\textsc{Wander} layer: the intra-node update, the in-context-learning (ICL)
update, and the stochastic structural update. We train three variants in which
one of these operations is removed from every layer. Without the intra-node
update, information cannot be exchanged between the feature, label,
and structure tokens associated with the same node. Without the ICL update,
tokens cannot directly attend to the labeled context nodes within their
respective channels. Finally, without the stochastic structural update, no graph information can be used, and the model falls back to a tabular model.

The results confirm that all three updates contribute, although their
importance depends on the task. Removing the intra-node update has almost no
effect on KG link prediction, which is expected
because the knowledge graphs contain no node features and prediction is driven
primarily by relation-conditioned structural information. In contrast, the
same ablation substantially degrades node classification and homogeneous link
prediction, where intra-node mixing is needed to combine attributes with label
and structural tokens. Removing the ICL update similarly hurts both attributed
tasks and causes a smaller reduction on knowledge graphs. The stochastic
structural update is essential across all settings: without it, performance
drops sharply, and KG link prediction becomes effectively impossible in the
absence of node attributes.

\begin{table*}[t]
    \centering
    \begin{minipage}[t]{0.49\textwidth}
        \centering
        \caption{Ablation of the synthetic attributed-graph pretraining prior.}
        \label{tab:prior_ablations}
        \scriptsize
        \setlength{\tabcolsep}{3pt}
        \begin{tabular}{lccc}
            \toprule
            Synthetic prior
                & \makecell[c]{KG LP\\MRR}
                & \makecell[c]{Node cls.\\Acc.\ (\%)}
                & \makecell[c]{Hom.\ LP\\R@20 (\%)} \\
            \midrule
            \textsc{Wander} prior
                & \textbf{0.399} & \textbf{68.20} & \textbf{43.39} \\
            \textsc{NodePFN} prior
                & 0.397 & 66.44 & \underline{41.16} \\
            \textsc{GraphPFN} prior
                & \underline{0.398} & \underline{67.33} & 39.53 \\
            \bottomrule
        \end{tabular}
    \end{minipage}
    \hfill
    \begin{minipage}[t]{0.49\textwidth}
        \centering
        \caption{Ablations of the three layer components of \textsc{Wander}.}
        \label{tab:architecture_ablations}
        \scriptsize
        \setlength{\tabcolsep}{3pt}
        \begin{tabular}{lccc}
            \toprule
            Model
                & \makecell[c]{KG LP\\MRR}
                & \makecell[c]{Node cls.\\Acc.\ (\%)}
                & \makecell[c]{Hom.\ LP\\R@20 (\%)} \\
            \midrule
            \textsc{Wander}
                & \textbf{0.399} & \textbf{68.20} & \textbf{43.39} \\
            \quad w/o intra-node update
                & 0.398 & 60.23 & 38.1 \\
            \quad w/o ICL update
                & 0.393 & 61.02 & 37.1 \\
            \quad w/o stochastic structural update
                & 0.001 & 51.11 & 15.6 \\
            \bottomrule
        \end{tabular}
    \end{minipage}
\end{table*}

\end{document}